\theoremstyle{plain}
\newtheorem{theorem}{Theorem}[section]
\newtheorem{lemma}[theorem]{Lemma}
\newtheorem{corollary}[theorem]{Corollary}
\theoremstyle{definition}
\theoremstyle{remark}
\DeclareMathOperator{\im}{im}
\DeclareMathOperator{\tr}{tr}
\DeclareMathOperator{\poly}{poly}
\DeclareMathOperator{\Ric}{Ric}
\newcommand{\Rtot}{R_{\text{tot}}}
\newcommand{\dmin}{d_{\min}}
\newcommand{\dmax}{d_{\max}}
\newcommand{\boundary}{\partial}
\definecolor{ibm-blue}{HTML}{648FFF}
\definecolor{ibm-purple}{HTML}{785EF0}
\definecolor{ibm-magenta}{HTML}{DC267F}
\definecolor{ibm-orange}{HTML}{FE6100}
\newcommand{\first}[1]{{\color{ibm-orange} \mathbf{#1}}}
\newcommand{\second}[1]{{\color{ibm-magenta} \mathbf{#1}}}
\newcommand{\third}[1]{{\color{ibm-purple} \mathbf{#1}}}
\newcommand{\R}{\mathbb{R}}
\definecolor{darkblue}{rgb}{0, 0, 1.0}
\newcommand{\norm}[1]{\left\|#1\right\|}
\icmltitlerunning{Oversquashing and Effective Resistance}
\begin{document}

\twocolumn[
\icmltitle{Understanding Oversquashing in GNNs through the Lens of Effective Resistance}



\icmlsetsymbol{equal}{*}

\begin{icmlauthorlist}
    \icmlauthor{Mitchell Black}{osu}
    \icmlauthor{Zhengchao Wan}{ucsd}
    \icmlauthor{Amir Nayyeri}{osu}
    \icmlauthor{Yusu Wang}{ucsd}
\end{icmlauthorlist}

\icmlaffiliation{osu}{School of Electrical Engineering and Computer Science, Oregon State University, Corvallis, Oregon, USA}
\icmlaffiliation{ucsd}{Hal\i{}c\i{}o\v{g}lu Data Science Institute, University of California San Diego, San Diego, California, USA}

\icmlcorrespondingauthor{Mitchell Black}{blackmit@oregonstate.edu}

\icmlkeywords{graph neural networks, message passing neural networks, oversquashing, effective resistance, total resistance, Laplacian, spectral graph theory, commute time, spectral gap, rewiring}

\vskip 0.3in
]



\printAffiliationsAndNotice{} 

\begin{abstract}
Message passing graph neural networks (GNNs) are a popular learning architectures for graph-structured data. However, one problem GNNs experience is oversquashing, where a GNN has difficulty sending information between distant nodes. Understanding and mitigating oversquashing has recently received significant attention from the research community. In this paper, we continue this line of work by analyzing oversquashing through the lens of the \emph{effective resistance} between nodes in the input graph. Effective resistance intuitively captures the ``strength'' of connection between two nodes by paths in the graph, and has a rich literature spanning many areas of graph theory. We propose to use \emph{total effective resistance} as a bound of the total amount of oversquashing in a graph and provide theoretical justification for its use. We further develop an algorithm to identify edges to be added to an input graph to minimize the total effective resistance, thereby alleviating oversquashing. We provide empirical evidence of the effectiveness of our total effective resistance based rewiring strategies for improving the performance of GNNs.
\end{abstract}

\section{Introduction}

Graph neural networks (GNNs) are powerful tools for graph learning and optimization tasks \cite{scarselli2008graph}. One major framework for GNNs is \textit{message passing}, where node and edge features are repeatedly aggregated locally through node neighborhoods. While it has proven successful, message passing also suffers from several problem related to the topology of the graph. The number of layers of a GNN defines the radius of the neighborhood of a node from which information will be aggregated. When the number of layers is too small, the message passing will only be done locally, and the GNN will not be able to capture information from nodes outside this neighborhood. This problem is known as \textit{underreaching}. On the other hand, choosing a large number of layers can lead to \textit{oversmoothing}, where node features might be smoothed out and become indistinguishable \cite{cai2020note, oono2020Graph}. A third issue is \textit{oversquashing} \cite{alon2021bottleneck}, where as larger neighborhoods are considered, information from long-range interactions passing through certain bottlenecks of the graph will have negligible impact on the training of GNNs. This behaviour was named oversquashing as information from potentially exponentially many (with respect to the number of layers) nodes will be squashed into fixed-sized node vectors.

Understanding when oversquashing occurs is an active area of research. Recently, oversquashing has been analyzed using different techniques such as graph curvature \cite{topping2022oversquashing} and information theory \cite{banerjee2022information}. Moreover, various \textit{rewiring techniques} have been proposed to alleviate oversquashing, where edges are added or removed or edge weights are changed to decrease bottlenecks in the graph before applying GNNs~\cite{arnaiz2022diffwire, deac2022expander, karhadkar2022firstorder, topping2022oversquashing}.

In this paper, we propose to analyze oversquashing through the lens of \emph{effective resistance}. The concept of effective resistance originates from Electrical Engineering~\cite{kirchhoff1847resistance}, where the effective resistance between two nodes $u$ and $v$ in an electrical network is the difference in voltage between $u$ and $v$ when a unit of current is inserted at $u$ and removed at $v$. Since then, effective resistance has taken on a new life in Graph Theory, where effective resistance has been shown to be tied to many properties of the graph underlying the electrical network~\cite{doyle1984random,lyons2017probability}. For example, the effective resistance between a pair of vertices is proportional to the \textit{commute time} between two vertices---the expected number of steps in a random walk from one vertex to the other and back~\cite{chandra1996resistance}. The effective resistance between the end points of an edge is proportional to the probability of the edge being included in a random spanning tree of the graph \cite{biggs1997algebraic}.
Furthermore, effective resistance is closely related to the Cheeger constant for graphs that measures bottlenecks in graphs \cite{memoli2022persistent}. Because of its various connections to many other objects (e.g., random walks and Laplacians), effective resistance has been widely used in practice; e.g., \cite{spielman2011sparsifiers,alev2018graph,ahmad2021skeleton}.

These properties suggest that the effective resistance is a measure of how ``well-connected'' two nodes are (see \Cref{sec:effective resistance}). In this paper, we will show that the effective resistance can also be used to bound the amount oversquashing between two nodes in a GNN. In particular, the lower the effective resistance between a pair of nodes, the less oversquashing is experienced by a graph neural network sending messages between these nodes.

{\bf Contributions.}
In this paper, we propose to use effective resistance as a way to quantify oversquashing in graph neural networks. We then show how this perspective can be used to modify input graphs to alleviate oversquashing.
\begin{itemize}
        \item In \Cref{sec:effective resistance}, we prove that the information passed from one node to another by any number of layers of a GNN is upper bounded by a quantity related to the effective resistance between the nodes.
        \item In \Cref{sec:rewiring total}, we utilize total effective resistance as a global measure of oversquashing and develop a rewiring algorithm for minimizing total effective resistance by adding edges to the graphs.
        \item In \Cref{sec:exp}, we empirically demonstrate that our rewiring technique is effective in alleviating oversquashing. Our method outperforms the curvature based method SDRF from \cite{topping2022oversquashing} and has similar performance compared to the spectral gap based method FoSR from \cite{karhadkar2022firstorder}.
\end{itemize}
All missing technical details and proofs are in the Appendix.

\paragraph{More on related work.}
\citet{alon2021bottleneck} were the first to study the oversquashing problem in GNNs, although they did not provide a theoretical analysis of the problem. \citet{topping2022oversquashing} were the first to introduce a method for quantitatively analyzing the oversquashing problem. Inspired by \citet{xu2018jumpingknowledge}, Topping et al.~proposed using norm of the Jacobian between node features at different levels of a GNN as a measure of oversquashing. Intuitively the norm of the Jacobian represents the ability of the features at one node to influence the features at another. They proved an upper bound on the norm of the Jacobian for certain nodes by the Balanced Forman Curvature of an edge. However, their theoretical analysis has the limitation that their final upper bound of the Jacobian via curvature only applies to nodes within 2-hop neighborhoods. In contrast, our analysis (\Cref{lem:bound_on_Jacobian} and \Cref{thm:effective_resistance_bound_on_jacobian}) applies to any two nodes at any layer of the GNN. \citet{banerjee2022information} proposed an approach for analyzing the oversquashing problem using techniques from information theory.

\citet{digiovanni2023oversquashing} also analyzed oversquashing using the commute time between a pair of nodes in a concurrent work. Both ours and their papers use similar approaches and reach the conclusion that large effective resistance between a pair of nodes results in more oversquashing. Additionally, they provide an analysis of how the width and depth of a GNN affect oversquashing.

In addition to analyzing the oversquashing problem, there has also been a line of research on ways to alleviate oversquashing. One of the most popular approaches is \textit{rewiring} the graph: adding, removing, or reweighting the edges of the graph to improve the topology of the graph. For example, \citet{alon2021bottleneck} proposed using a fully connected graph in the last layer of a GNN.

A popular, generic approach to rewiring is to optimize some quantity measuring the graph topology. For example, \citet{topping2022oversquashing} proposed a rewiring technique to alleviate the oversquashing problem by increasing the curvature of edges in the graph. However, the most common approach has been to try to increase the \textit{spectral gap} of the graph: the smallest eigenvalue of the Laplacian. Intuitively, the spectral gap is proportional to bottlenecks of graphs through the Cheeger inequality \cite{chung1996laplacians}, so increasing the spectral gap decreases the bottleneck. However, there was previously no theoretical work directly tying the spectral gap to oversquashing (see \Cref{sec:spectral gap}). Some approaches to decrease the spectral gap have been to add edges \cite{karhadkar2022firstorder}, flip edges \cite{banerjee2022information}, reweight edges \cite{arnaiz2022diffwire}, or use an expander to perform a GNN layer \cite{deac2022expander}. Our rewiring technique is most similar to the approach of \citet{karhadkar2022firstorder}: we add edges to minimize the total effective resistance. Conceptually speaking, however, our approach may lead to better results as the total effective resistance reflects the entire spectrum of the graph Laplacian, including the spectral gap. See our discussion in \Cref{sec:spectral gap}.

Particularly relevant to this paper are rewiring techniques that incorporate information about effective resistance \cite{arnaiz2022diffwire, banerjee2022information}. These papers observe that edges with high effective resistance often appear in the bottleneck of the graph, so they target these edges in different ways. \citet{banerjee2022information} flip edges with probability proportional to their effective resistance to increase the spectral gap. \citet{arnaiz2022diffwire} reweight edges proportionally to their effective resistance. While our paper and these papers both study effective resistance as it relates to oversquashing, we make different observations about the relationship between oversquashing and effective resistance. In short, these papers observes that edges of high effective resistance are important to the global topology of the graph so propose to target these edges. In contrast, our paper observes that oversquashing is in part the result of pairs of vertices with high effective resistance so propose to decrease total resistance. In particular, while the approach of \citet{arnaiz2022diffwire} is effective, its effectiveness can not be attributed to decreasing total resistance, as the reweighted graph will have approximately the same effective resistance between all pairs of nodes as the original graph (see Theorem 1 of \cite{arnaiz2022diffwire}.)

Additionally, while not a rewiring technique, \citet{velingker2022affinity} propose node and edge features based on effective resistance as a way of incorporating information about the graph topology into GNNs.

\section{Background}

This section reviews some definitions from Spectral Graph Theory; see books by \citet{chung1997spectral} and \citet{spielman2019sagt} for a more thorough introduction.

\subsection{Matrices and Spectra of Graphs.}

Let $G=(V,E)$ be a connected, undirected, unweighted graph with $n$ vertices and $m$ edges. Let $A$ be the \textit{\textbf{adjacency matrix}} and $D$ be the \textit{\textbf{degree matrix}}. The \textit{\textbf{Laplacian}} is $L=D-A$. Additionally, let $\hat{A} = D^{-1/2} A D^{-1/2}$ be the \textit{\textbf{normalized adjacency matrix}} and $\hat{L} = I  - \hat{A} = D^{-1/2} L D^{-1/2}$ be the \textit{\textbf{normalized Laplacian}}.
\par
The matrices $\hat{L}$ and $\hat{A}$ have the same orthonormal basis of eigenvectors $\{z_i : 1\leq i\leq n\}$ (up to choice of basis) but different eigenvalues. The eigenvalues $\lambda_{i}$ of $\hat{L}$ are in the range $[0,2]$, and the eigenvalues of $\hat{A}$ are $\mu_{i} = 1 - \lambda_{i}$, which are in the range $[-1,1]$. The matrix $\hat{A}$ always has eigenvalue $1$ and has eigenvalue $-1$ if and only if $G$ is bipartite. We use the notational convention that $\lambda_n\geq\cdots\geq\lambda_2>\lambda_1=0$ and $\mu_n\leq\cdots\leq\mu_2<\mu_1=1$. $z_1$, the $\mu_1$-eigenvector of $\tilde{A}$ satisfies $z_1(v) = \sqrt{d_v/2m}$, where $d_v$ is the degree of a vertex $v$.

\subsection{Graph Neural Networks}
Consider a graph $G$ with node features $X\in \R^{n\times d}$. We let $x_v\in\R^d$ denote the row in $X$ corresponding to the vertex $v\in V$. A \textit{\textbf{Graph Neural Network}} (GNN) updates the node features by iteratively aggregating features of nodes in the neighborhood.
More precisely, the feature vectors at each layer are iteratively computed by
$$
h^{(0)}_v:=x_v,\,\,h^{(l+1)}_{v} = \phi_{l}\left(h_v^{(l)},\, \sum_{u\in\mathcal{N}(v)}\hat{A}_{uv}\psi_{l}\left(h_u^{(l)}\right) \right)
$$
for learnable functions $\phi_l$ and $\psi_{l}$. Note that this is a strict subset of the more general class of Message-Passing Neural Networks~\cite{gilmer2017neural}.

\paragraph{Relational GNNs.} In the process of graph rewiring, the structure of the underlying graph will be changed.
In order to retain information of the original graph and also exploit the new graph structure induced from graph rewiring, we use \textit{\textbf{relational GNNs}} (R-GNNs) \cite{battaglia2018relational} to accommodate both information. The idea of using R-GNNs for rewired graphs was introduced in \cite{karhadkar2022firstorder}.
In the framework of R-GNNs, for a graph $G$, there exists a set $\mathcal{R}$ of relation types such that each edge $\{u,v\}\in E$ is associated with an edge type $r\in\mathcal{R}$. For each $v\in V$ and $r\in\mathcal{R}$, we let $\mathcal{N}_r(v)\subseteq\mathcal{N}(v)$ denote the collection of all neighbors of $v$ incident to an edge of type $r$.
An R-GNN is a function of the form
$$h^{(l+1)}_{v} = \phi_{l}\left(h_v^{(l)},\sum_{r\in\mathcal{R}}\sum_{j\in\mathcal{N}_r(v)}\hat{A}_{uv}\psi_{l}^r\left(h_u^{(l)}\right) \right)
$$
for learnable functions $\phi_l$ and $\psi_{l}^r$.

\section{Effective Resistance and Oversquashing}\label{sec:effective resistance}

\begin{figure}[htb!]
    \centering
    \includegraphics[width=\linewidth]{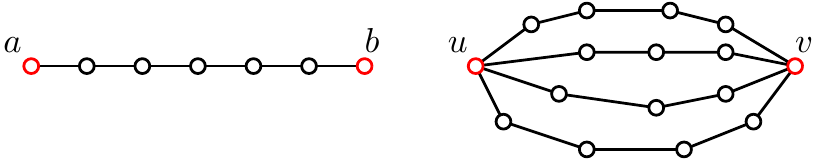}
    \caption{Two examples where effective resistance can be easily computed. For vertices $u$ and $v$ connected by several vertex-disjoint paths $p$, $R_{u,v} = (\sum_{uv\text{-paths }p}\mathrm{length}(p)^{-1})^{-1}$. Left: $R_{a,b}=6$, the length of the path. Right: $R_{u,v} = 10/9$.}
    \label{fig:resistance_examples}
\end{figure}

Let $u$ and $v$ be vertices of $G$. The \textit{\textbf{effective resistance}} between $u$ and $v$ is defined
$$
R_{u,v} = (1_u-1_v)^{T}L^{+}(1_u-1_v),
$$
where $1_v$ is the indicator vector of the vertex $v$ and $L^{+}$ is the pseudoinverse of $L$.
The effective resistance can also be computed using the normalized Laplacian $\hat{L}$. This follows from a formula for effective resistance given by \citet[Corollary 3.2]{lovasz1993random}, but is somewhat non-standard. We provide a different proof in \Cref{apx:alternative_formula_effective_resistance} for completeness.

\begin{restatable}{lemma}{normalizedr}
\label{lm:normalized R}
Let $G$ be a connected graph. Let $u$ and $v$ be two vertices. Then
$$
R_{u,v} = \left(\frac{1}{\sqrt{d_u}}1_u - \frac{1}{\sqrt{d_v}}1_v\right)^{T} \hat{L}^{+}\left(\frac{1}{\sqrt{d_u}}1_u - \frac{1}{\sqrt{d_v}}1_v\right).
$$
\end{restatable}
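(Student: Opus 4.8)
The plan is to reduce the identity to a statement about how the pseudoinverses $L^{+}$ and $\hat{L}^{+}$ relate under the conjugation $\hat{L} = D^{-1/2} L D^{-1/2}$, handling the kernels carefully via connectivity of $G$. First I would set $b := 1_u - 1_v$ and observe that the vector appearing on the right-hand side is exactly $D^{-1/2} b$, so the claim becomes $b^{T} L^{+} b = (D^{-1/2}b)^{T} \hat{L}^{+} (D^{-1/2}b)$. Since $G$ is connected, $\ker L = \operatorname{span}(\mathbf{1})$ where $\mathbf{1}$ is the all-ones vector, and likewise $\ker \hat{L} = \operatorname{span}(D^{1/2}\mathbf{1})$: indeed $\hat{L}(D^{1/2}\mathbf{1}) = D^{-1/2} L \mathbf{1} = 0$, and the kernel is one-dimensional by connectivity. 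I would then record the two orthogonality facts $b \perp \mathbf{1}$ and $D^{-1/2} b \perp D^{1/2}\mathbf{1}$; the latter is immediate since $(D^{-1/2}b)^{T}(D^{1/2}\mathbf{1}) = b^{T}\mathbf{1} = 0$. Because $L$ and $\hat{L}$ are symmetric, these say precisely that $b \in \operatorname{range}(L)$ and $D^{-1/2}b \in \operatorname{range}(\hat{L})$.

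Next I would introduce the potentials $\phi := L^{+} b$ and $\hat{\phi} := \hat{L}^{+}(D^{-1/2}b)$. Since $b$ and $D^{-1/2}b$ lie in the ranges of $L$ and $\hat{L}$, the operators $L L^{+}$ and $\hat{L}\hat{L}^{+}$ act as the identity on them, so $L\phi = b$ and $\hat{L}\hat{\phi} = D^{-1/2}b$, with $\phi \perp \mathbf{1}$ and $\hat{\phi} \perp D^{1/2}\mathbf{1}$. The key move is to substitute $\hat{L} = D^{-1/2} L D^{-1/2}$ into $\hat{L}\hat{\phi} = D^{-1/2}b$ and multiply through by $D^{1/2}$, which gives $L(D^{-1/2}\hat{\phi}) = b = L\phi$. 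Hence $D^{-1/2}\hat{\phi} - \phi \in \ker L = \operatorname{span}(\mathbf{1})$, i.e. $D^{-1/2}\hat{\phi} = \phi + c\,\mathbf{1}$ for some scalar $c$.

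Finally I would compute the right-hand side directly: $(D^{-1/2}b)^{T}\hat{L}^{+}(D^{-1/2}b) = (D^{-1/2}b)^{T}\hat{\phi} = b^{T}(D^{-1/2}\hat{\phi}) = b^{T}(\phi + c\,\mathbf{1}) = b^{T}\phi = b^{T}L^{+}b = R_{u,v}$, where the penultimate equality uses $b \perp \mathbf{1}$, and this completes the proof. The main obstacle — really the only subtlety — is the bookkeeping with kernels and ranges: one must check that the relevant vectors genuinely lie in $\operatorname{range}(L)$ and $\operatorname{range}(\hat{L})$ so that $LL^{+}$ and $\hat{L}\hat{L}^{+}$ restrict to the identity, and one must track the spurious multiple of $\mathbf{1}$ introduced when undoing the conjugation (it vanishes exactly because $b$ is orthogonal to $\mathbf{1}$). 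Connectivity of $G$ is essential here, as it is what forces both kernels to be one-dimensional.
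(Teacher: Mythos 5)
Your proof is correct, and it takes a genuinely different route from the paper's. You argue directly at the level of linear algebra: you identify the kernels of $L$ and $\hat{L}$ (one-dimensional by connectivity), verify that $1_u-1_v$ and $D^{-1/2}(1_u-1_v)$ lie in the respective ranges so that $LL^{+}$ and $\hat{L}\hat{L}^{+}$ act as the identity on them, and then show that the two potentials $\phi=L^{+}(1_u-1_v)$ and $D^{-1/2}\hat{L}^{+}D^{-1/2}(1_u-1_v)$ solve the same equation $Lx=1_u-1_v$ and hence differ by a multiple of $\mathbf{1}$, which is killed in the final pairing because $1_u-1_v\perp\mathbf{1}$. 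The paper instead invokes the variational (flow) characterization of effective resistance: it introduces the boundary matrix $\boundary$ with $L=\boundary\boundary^{T}$ and $\hat{L}=(D^{-1/2}\boundary)(D^{-1/2}\boundary)^{T}$, uses the general identity $x^{T}(AA^{T})^{+}x=\min\{\|y\|^{2}:Ay=x\}$ for $x\in\im A$, and notes that the constraint $\boundary f=1_u-1_v$ is equivalent to $D^{-1/2}\boundary f=D^{-1/2}(1_u-1_v)$ since $D^{-1/2}$ is a bijection, so the two minima (and hence the two quadratic forms) coincide. Your argument is more elementary in that it needs no auxiliary characterization of $x^{T}(AA^{T})^{+}x$ and no boundary matrix, at the cost of the explicit kernel/range bookkeeping; the paper's argument is shorter once the flow formulation is granted and makes the role of the factorization $L=\boundary\boundary^{T}$ transparent. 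Both hinge on the same two facts — connectivity and the orthogonality of $1_u-1_v$ to the all-ones vector — just deployed in different guises.
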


 Intuitively, the effective resistance is a measure of how ``well-connected'' two vertices $u$ and $v$ are. While ``well-connected''-ness is informal, there are many theorems which suggest such a connection. For example, if $u$ and $v$ are connected by $k$ edge-disjoint paths of length at most $l$, then the effective resistance $R_{u,v}$ is a most $l/k$. Therefore, the more and shorter paths connecting $u$ and $v$, the smaller the effective resistance between $u$ and $v$. See the Introduction for more intuition behind effective resistance.

\subsection{Effective Resistance and the Jacobian of GNNs.}\label{sec: effective jacobian}

As a way of measuring oversquashing in graph neural networks, \citet{topping2022oversquashing} proposed upper bounding the 2-norm of the Jacobian between node features $\|\partial h_{u}^{(r)} / \partial x_v\|$; here, both $h_{u}^{(r)}$ and $x_v$ are vectors, so ${\partial h_{u}^{(r)} / \partial x_v}$ is the Jacobian matrix. The Jacobian captures the influence of initial feature vector $x_v$ at vertex $v$ upon the feature vector $h_{u}^{(r)}$ at vertex $u$ at the $r$\textsuperscript{th} layer of the GNN. A smaller upper bound on the partial derivative indicates that that the features at the node $v$ can have less influence on the features at the node $u$. We adopt this way of analysis and establish a bound on the norm of the Jacobian matrix via the effective resistance.
\par
First, we show how the norm of the Jacobian is upper bounded by the powers of the normalized adjacency matrix.

\begin{restatable}{lemma}{boundonjacobian}
\label{lem:bound_on_Jacobian}
    Let $u,v\in V$ and let $r\in\mathbb{N}$. Assume that $\norm{\nabla\phi_{l}}\leq\alpha$ and $\max\{\norm{\nabla\psi_{l}},1\}\leq\beta$ for all $l=0,\ldots,r$, where $\nabla f$ denotes the Jacobian of a map $f$. Then
    $$
        \norm{\frac{\partial h_{u}^{(r)}}{\partial x_v}}  \leq  (2\alpha\beta)^{r}\sum_{l=0}^{r} (\hat{A}^{l})_{uv}.
    $$
\end{restatable}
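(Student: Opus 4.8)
The natural approach is induction on the layer index $r$. The base case $r=0$ is immediate since $h_u^{(0)} = x_u$, so $\partial h_u^{(0)}/\partial x_v = \delta_{uv} I$, whose norm is $1 = (2\alpha\beta)^0 (\hat A^0)_{uv}$. For the inductive step, I would differentiate the update rule
$$
h^{(r)}_{u} = \phi_{r-1}\!\left(h_u^{(r-1)},\, \sum_{w\in\mathcal{N}(u)}\hat{A}_{wu}\psi_{r-1}\!\left(h_w^{(r-1)}\right) \right)
$$
with respect to $x_v$ using the chain rule. Writing $\nabla\phi_{r-1} = (\partial_1\phi_{r-1}, \partial_2\phi_{r-1})$ for the two argument blocks, this gives
$$
\frac{\partial h_u^{(r)}}{\partial x_v} = \partial_1\phi_{r-1}\cdot \frac{\partial h_u^{(r-1)}}{\partial x_v} + \partial_2\phi_{r-1}\cdot \sum_{w\in\mathcal{N}(u)} \hat{A}_{wu}\, \nabla\psi_{r-1}\cdot \frac{\partial h_w^{(r-1)}}{\partial x_v}.
$$
Then I would take norms, apply the triangle inequality and submultiplicativity, and use the hypotheses $\norm{\partial_1\phi_{r-1}}, \norm{\partial_2\phi_{r-1}} \le \norm{\nabla\phi_{r-1}} \le \alpha$ and $\norm{\nabla\psi_{r-1}}\le\beta$ to obtain
$$
\norm{\frac{\partial h_u^{(r)}}{\partial x_v}} \le \alpha \norm{\frac{\partial h_u^{(r-1)}}{\partial x_v}} + \alpha\beta \sum_{w\in\mathcal{N}(u)} \hat{A}_{wu} \norm{\frac{\partial h_w^{(r-1)}}{\partial x_v}}.
$$

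Now I would invoke the inductive hypothesis on each term $\norm{\partial h_w^{(r-1)}/\partial x_v} \le (2\alpha\beta)^{r-1}\sum_{l=0}^{r-1}(\hat A^l)_{wv}$ and similarly for $w=u$. Substituting and factoring out $(2\alpha\beta)^{r-1}$, the first term contributes $\alpha(2\alpha\beta)^{r-1}\sum_{l=0}^{r-1}(\hat A^l)_{uv}$ and the second contributes $\alpha\beta(2\alpha\beta)^{r-1}\sum_{l=0}^{r-1}\sum_{w}\hat A_{wu}(\hat A^l)_{wv}$. The key observation is that $\sum_{w}\hat A_{wu}(\hat A^l)_{wv} = (\hat A^{l+1})_{uv}$ by symmetry of $\hat A$ and the definition of matrix powers, so the second sum becomes $\sum_{l=0}^{r-1}(\hat A^{l+1})_{vu} = \sum_{l=1}^{r}(\hat A^l)_{uv}$. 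Adding the two pieces, both bounded above by $(2\alpha\beta)^{r-1}\cdot\max(\alpha,\alpha\beta)\sum_{l=0}^{r}(\hat A^l)_{uv}$, and using $\beta\ge 1$ so that $\max(\alpha,\alpha\beta)=\alpha\beta \le 2\alpha\beta$, yields the claimed bound $(2\alpha\beta)^r\sum_{l=0}^{r}(\hat A^l)_{uv}$.

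The only genuine subtlety — the ``main obstacle'' in the sense of something to be careful about rather than deep — is bookkeeping the two summation ranges so that the shifted index $l+1$ and the unshifted index $l$ combine to cover $\{0,1,\dots,r\}$ without double counting, and making sure the constant only loses a factor of $2$ per layer (this is where the hypothesis $\beta\ge 1$, hidden in $\max\{\norm{\nabla\psi_l},1\}\le\beta$, is used: it guarantees $\alpha + \alpha\beta \le 2\alpha\beta$). One should also confirm that the nonnegativity of the entries $(\hat A^l)_{uv}$ is used implicitly when bounding a sum over $l=0,\dots,r-1$ by the sum over $l=0,\dots,r$; since $\hat A$ has nonnegative entries this is automatic. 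Everything else is routine application of operator-norm inequalities.
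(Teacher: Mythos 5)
Your proposal is correct and follows essentially the same route as the paper's proof: induction on the layer, the chain rule applied to the update rule, the triangle inequality with the hypotheses on $\nabla\phi_l$ and $\nabla\psi_l$, the identity $\sum_{w}\hat{A}_{uw}(\hat{A}^{l})_{wv}=(\hat{A}^{l+1})_{uv}$, and the use of $\beta\geq 1$ together with nonnegativity of the entries of $\hat{A}^{l}$ to absorb the two terms into the factor of $2$. The only cosmetic difference is that the paper states the base case as $\norm{\partial h_u^{(0)}/\partial x_v}\leq 1$ by splitting into the cases $u=v$ and $u\neq v$, which matches your $\delta_{uv}$ observation.
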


This result is different from Lemma 1 in \cite{topping2022oversquashing} in which the two vertices $u$ and $v$ are required to be {\bf exactly} distance $r$ apart from each other; while our result is for any two vertices.

We can now use \Cref{lem:bound_on_Jacobian} to establish a new bound via effective resistance. Recall that $\mu_n\leq\cdots\leq \mu_2<\mu_1=1$ denote the eigenvalues of $\hat{A}$.

\begin{restatable}{theorem}{effectiveresistanceboundonjacobian}
\label{thm:effective_resistance_bound_on_jacobian}
Let $G$ be a non-bipartite graph. Let $u,v\in V$. Let $\norm{\nabla\phi_{l}}\leq\alpha$ and $\max\{\norm{\nabla\psi_{l}},1\}\leq\beta$. Let $\dmin=\min\{d_u,d_v\}$ and $\dmax=\max\{d_u,d_v\}$. Let $\max\{|\mu_2|,|\mu_n|\}\leq \mu$. Then
$$
    \norm{\frac{\partial h^{(r)}_{u}}{\partial x_v}} \leq (2\alpha\beta)^{r}\frac{\dmax}{2}\left(\frac{2}{\dmin}\left(r+1 + \frac{\mu^{r+1}}{1-\mu}\right) {-} R_{u,v}\right)
$$
\end{restatable}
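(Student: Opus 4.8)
The plan is to start from \Cref{lem:bound_on_Jacobian}, which already reduces the theorem to the purely spectral estimate
$S_r := \sum_{l=0}^{r}(\hat A^{l})_{uv} \le \frac{\dmax}{2}\bigl(\frac{2}{\dmin}(r+1+\frac{\mu^{r+1}}{1-\mu}) - R_{u,v}\bigr)$:
multiplying through by $(2\alpha\beta)^r$ then yields the claim. I may assume $u\ne v$, since otherwise $R_{u,v}=0$ and the bound is immediate from $(\hat A^l)_{uu}\le 1$. Set $e_u := 1_u/\sqrt{d_u}$ and $e_v := 1_v/\sqrt{d_v}$, so $(\hat A^l)_{uv} = \sqrt{d_u d_v}\,\langle e_u,\hat A^{l} e_v\rangle$ and hence $S_r = \sqrt{d_u d_v}\,\langle e_u, M e_v\rangle$ with $M := \sum_{l=0}^{r}\hat A^{l}$ symmetric and entrywise nonnegative.

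The core computation bounds $\langle e_u, M e_v\rangle$ through the polarization identity $\langle e_u,Me_v\rangle = \frac12\bigl(\langle e_u,Me_u\rangle + \langle e_v,Me_v\rangle - \langle e_u-e_v, M(e_u-e_v)\rangle\bigr)$. For the diagonal pieces, $\langle e_u,Me_u\rangle = \frac{1}{d_u}M_{uu} = \frac{1}{d_u}\sum_{l=0}^{r}(\hat A^l)_{uu}$ and each $(\hat A^l)_{uu} = \sum_{i}\mu_i^{l} z_i(u)^2 \le \sum_i z_i(u)^2 = 1$ since $|\mu_i|\le 1$, so $\langle e_u,Me_u\rangle\le (r+1)/d_u$ and likewise for $v$. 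For the cross piece the key observation is that $e_u-e_v$ is orthogonal to $z_1$, because $\langle e_u,z_1\rangle = z_1(u)/\sqrt{d_u} = 1/\sqrt{2m} = \langle e_v,z_1\rangle$. Expanding $e_u-e_v$ in the eigenbasis $\{z_i\}$ of $\hat A$ and summing the geometric series $\sum_{l=0}^{r}\mu_i^{l} = \frac{1-\mu_i^{r+1}}{1-\mu_i}$ in each eigenspace with $i\ge2$ gives $\langle e_u-e_v, M(e_u-e_v)\rangle = \sum_{i\ge2}\frac{1-\mu_i^{r+1}}{1-\mu_i}\langle e_u-e_v,z_i\rangle^2$. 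Now \Cref{lm:normalized R} identifies $\sum_{i\ge2}\frac{1}{1-\mu_i}\langle e_u-e_v,z_i\rangle^2 = (e_u-e_v)^{T}\hat L^{+}(e_u-e_v) = R_{u,v}$, so this equals $R_{u,v} - \sum_{i\ge2}\frac{\mu_i^{r+1}}{1-\mu_i}\langle e_u-e_v,z_i\rangle^2$; since $G$ is non-bipartite, $|\mu_i|\le\mu<1$ for $i\ge2$, and $\sum_{i\ge2}\langle e_u-e_v,z_i\rangle^2 = \|e_u-e_v\|^2 = \frac1{d_u}+\frac1{d_v}$, so the subtracted sum is at most $\frac{\mu^{r+1}}{1-\mu}(\frac1{d_u}+\frac1{d_v})$ in absolute value. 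Collecting and using $\frac1{d_u}+\frac1{d_v}\le\frac2{\dmin}$ gives $\langle e_u,Me_v\rangle \le \frac12\bigl(\frac1{d_u}+\frac1{d_v}\bigr)\bigl(r+1+\frac{\mu^{r+1}}{1-\mu}\bigr) - \frac12 R_{u,v} \le \frac{1}{\dmin}\bigl(r+1+\frac{\mu^{r+1}}{1-\mu}\bigr) - \frac12 R_{u,v} =: B$.

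To finish I trade $\sqrt{d_u d_v}$ for $\dmax$. Because $M$ has nonnegative entries, $\langle e_u,Me_v\rangle = S_r/\sqrt{d_u d_v}\ge 0$, so $\langle e_u,Me_v\rangle\le B$ forces $B\ge 0$; hence $S_r = \sqrt{d_u d_v}\,\langle e_u,Me_v\rangle \le \sqrt{d_u d_v}\,B \le \dmax\,B = \frac{\dmax}{2}\bigl(\frac{2}{\dmin}(r+1+\frac{\mu^{r+1}}{1-\mu}) - R_{u,v}\bigr)$, and combining with \Cref{lem:bound_on_Jacobian} proves the theorem.

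The step I expect to be the main obstacle is exactly this last trade. A direct route via $\hat L^{+}_{uv}$ produces $R_{u,v}$ with coefficient $\sqrt{d_u d_v}/2$, and one cannot simply enlarge $\sqrt{d_u d_v}$ to $\dmax$ inside a subtracted term; the fix is to keep the whole right-hand side packaged as $\sqrt{d_u d_v}$ times a single quantity whose nonnegativity comes for free from $(\hat A^l)_{uv}\ge 0$. Relatedly, one must avoid bounding the diagonal contributions $\langle e_u,Me_u\rangle$ by a spectral-gap estimate on $\hat L^{+}_{uu}$ (which injects an extra $1/(1-\mu)$ and breaks the bound for small $r$); the crude bound $(\hat A^l)_{uu}\le 1$ is what preserves the clean factor $r+1$ on the diagonal.
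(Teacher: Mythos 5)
Your proof is correct and takes essentially the same route as the paper's: both reduce via \Cref{lem:bound_on_Jacobian} to bounding $\sum_{l=0}^{r}(\hat{A}^{l})_{uv}$ through the identity $\tfrac{2}{\sqrt{d_ud_v}}(\hat{A}^{l})_{uv}=\tfrac{1}{d_u}(\hat{A}^{l})_{uu}+\tfrac{1}{d_v}(\hat{A}^{l})_{vv}-(e_u-e_v)^{T}\hat{A}^{l}(e_u-e_v)$ (your polarization step is exactly the paper's appeal to \Cref{lem:effective_resistance_sum_of_adjancency}), bound the diagonal entries by $(\hat{A}^{l})_{uu}\leq 1$, and control the cross term using $e_u-e_v\perp z_1$ together with the spectral bound $\mu$ to extract $R_{u,v}$. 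The only differences are presentational: you sum the finite geometric series in closed form and invoke \Cref{lm:normalized R} directly rather than truncating the infinite series of \Cref{lem:effective_resistance_sum_of_adjancency}, and you make explicit the nonnegativity of the bracketed quantity needed to replace $\sqrt{d_ud_v}$ by $\dmax$ — a step the paper performs implicitly (correctly, since at that stage the bracket equals $\tfrac{2}{\sqrt{d_ud_v}}\sum_{l=0}^{r}(\hat{A}^{l})_{uv}\geq 0$).
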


\Cref{thm:effective_resistance_bound_on_jacobian} intuitively suggests that vertices with low effective resistance have a better influence over each other in message passing; that is, the node feature $h_u^{(r)}$ at node $u$ in level $r$ is more affected by the initial node feature $x_v$ at node $v$.
Intuitively this makes sense, as effective resistance is tied to the number and length of paths connecting $u$ and $v$. The more and shorter paths connecting $u$ and $v$, the lower the effective resistance between $u$ and $v$ is. This implies that there are more ways for a GNN to send messages between $u$ and $v$, and indeed, by \Cref{thm:effective_resistance_bound_on_jacobian}, the less oversquashing between $u$ and $v$.

\begin{proof}[Sketch of proof of \Cref{thm:effective_resistance_bound_on_jacobian}]
    \Cref{lem:bound_on_Jacobian} allows us to bound the Jacobian by a sum of entries of powers of the adjacency matrix. Therefore, we need a way of connecting powers of the adjacency matrix to effective resistance. For this, we use the following two lemmas, which themselves may be of independent interest. Detailed proofs of the theorem and the lemmas can be found in \Cref{sec:proof of effective bound}.

    Let $\hat{A}_r$ denote the restriction of $\hat{A}$ to the space orthogonal to the eigenvector $z_1$, i.e.~$\hat{A}_r=\sum_{i=2}^{n} \mu_i z_i z_i^{T}$. Recall that the eigenvalues of $\hat{A}_{r}$ are in the range $[-1,1)$, and $(-1,1)$ if $G$ is not bipartite. The pseudoinverse of $\hat{L}$ can be characterized as follows.

    \begin{restatable}{lemma}{laplacianequalssumofadjacency}
    \label{lem:laplacian_equals_sum_of_adjacency}
        Let $G$ be a connected, non-bipartite graph. Then $\hat{L}^{+} = \sum_{j=0}^{\infty} \hat{A}^{j}_r$.
    \end{restatable}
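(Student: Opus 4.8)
The plan is to diagonalize everything in the common orthonormal eigenbasis $\{z_i\}_{i=1}^n$ of $\hat{A}$ and $\hat{L}$, with $\hat{A}z_i = \mu_i z_i$ and $\hat{L}z_i = \lambda_i z_i = (1-\mu_i)z_i$, ordered so that $\mu_1 = 1 > \mu_2 \ge \cdots \ge \mu_n \ge -1$. The two hypotheses enter exactly here, and nowhere else: connectedness makes the eigenvalue $\mu_1 = 1$ simple, so $\mu_2 < 1$; and non-bipartiteness rules out $\mu_n = -1$, so $\mu_n > -1$. Together these give $|\mu_i| < 1$ for every $i \ge 2$, which is precisely the condition that makes the relevant geometric series converge.

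In this basis $\hat{L}^{+} = \sum_{i=2}^{n} \lambda_i^{-1} z_i z_i^{T} = \sum_{i=2}^{n} (1-\mu_i)^{-1} z_i z_i^{T}$, since the pseudoinverse inverts the nonzero eigenvalues and kills the kernel direction $z_1$. On the other side, $\hat{A}_r = \sum_{i=2}^{n} \mu_i z_i z_i^{T}$ satisfies $\hat{A}_r^{j} = \sum_{i=2}^{n} \mu_i^{j} z_i z_i^{T}$ for $j \ge 1$, and I read the $j=0$ term as the orthogonal projection $\Pi := \sum_{i=2}^{n} z_i z_i^{T}$ onto $z_1^{\perp}$ (the natural convention, since $\sum_{j\ge 0}\hat{A}_r^{j}$ is really the Neumann-type expansion of an operator acting on $z_1^{\perp}$). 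Summing the first $N+1$ terms and using the finite geometric sum $\sum_{j=0}^{N}\mu_i^{j} = \tfrac{1-\mu_i^{N+1}}{1-\mu_i}$ in each eigendirection $i\ge 2$ gives
$$
\sum_{j=0}^{N} \hat{A}_r^{j} \;=\; \sum_{i=2}^{n} \frac{1-\mu_i^{N+1}}{1-\mu_i}\, z_i z_i^{T}.
$$

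It then remains only to let $N \to \infty$. Writing $\mu := \max\{|\mu_2|,|\mu_n|\} < 1$, the operator norm of the difference between this partial sum and $\hat{L}^{+}$ is $\max_{i\ge 2} \tfrac{|\mu_i|^{N+1}}{|1-\mu_i|} \le \tfrac{\mu^{N+1}}{1-\mu} \to 0$, so $\sum_{j=0}^{\infty}\hat{A}_r^{j} = \sum_{i=2}^{n}(1-\mu_i)^{-1} z_i z_i^{T} = \hat{L}^{+}$, as claimed.

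I do not expect a serious obstacle here: the content is the spectral computation plus the observation that non-bipartiteness is exactly the hypothesis preventing an eigenvalue $\mu_i=-1$ (with $i\ge 2$) that would destroy convergence; the only point needing care is the bookkeeping of the $j=0$ term. (A purely algebraic alternative also works: multiply the partial sum by $\hat{L} = I-\hat{A}$, telescope using $\hat{L}\hat{A}_r^{j} = \hat{A}_r^{j}-\hat{A}_r^{j+1}$ on $z_1^{\perp}$ to get $\Pi - \hat{A}_r^{N+1} \to \Pi = \hat{L}\hat{L}^{+}$, and conclude since both the series and $\hat{L}^{+}$ lie in $z_1^{\perp}$, where $\hat{L}$ is invertible — but the eigenbasis argument above is cleaner.)
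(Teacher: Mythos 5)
Your proof is correct and is essentially the paper's own argument: both diagonalize $\hat{L}^{+}$ and $\hat{A}_r$ in the shared eigenbasis, expand $(1-\mu_i)^{-1}$ as a geometric series using $|\mu_i|<1$ for $i\geq 2$ (connectedness plus non-bipartiteness), and exchange the sums. Your explicit handling of the $j=0$ term as the projection onto $z_1^{\perp}$ and the quantitative convergence bound are points the paper leaves implicit, but the route is the same.
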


    This characterization of $\hat{L}^{+}$ allows us to prove the following relationship between the effective resistance and powers of the normalized adjacency matrix $\hat{A}$ (not just $\hat{A}_r$.)

    \begin{restatable}{lemma}{effectiveresistancesumofadjacency}
    \label{lem:effective_resistance_sum_of_adjancency}
        Let $G$ be a non-bipartite graph. Let $u$ and $v$ be two vertices in $G$. Then
        $$R_{u,v} =  \sum_{i=0}^{\infty} \left(
        \frac{1}{d_u}(\hat{A}^{i})_{uu} + \frac{1}{d_v} (\hat{A}^{i})_{vv} - \frac{2}{\sqrt{d_vd_u}}  (\hat{A}^{i})_{uv}\right). $$
    \end{restatable}
    The upper bound in \Cref{thm:effective_resistance_bound_on_jacobian} follows from \Cref{lem:bound_on_Jacobian} and \Cref{lem:effective_resistance_sum_of_adjancency}.
\end{proof}

\paragraph{Total Resistance}

We now take our analysis one step further and summarize message passing rate between {\bf all pairs} of nodes at any given layer of GNN using the notion of \textit{\textbf{total effective resistance}} $\Rtot$---the sum of the effective resistance between all pairs of vertices.
\par
As the partial derivative between a pair of vertices is bounded above by a function of the effective resistance, the total resistance bounds the sum of the Jacobian between all pairs of vertices in the graph. The following corollary follows immediately from \Cref{thm:effective_resistance_bound_on_jacobian}.

\begin{corollary}
\label{cor:total_resistance_bound_on_jacobian}
    Let $G$ be a non-bipartite graph. Let $\norm{\nabla\phi_{l}}\leq\alpha$ and $\max\{\norm{\nabla\psi_{l}},1\}\leq\beta$. Let $\dmin = \min_{v\in V}d_v $ and $\dmax = \max_{v\in V} d_v$. Let $\max\{|\mu_2|,|\mu_n|\}\leq \mu$. Then
    \begin{align*}
        &\sum_{u\neq v\in V} \norm{\frac{\partial h^{(r)}_{u}}{\partial x_{v}}}  \\
       \leq& (2\alpha\beta)^{r}\frac{d_{\max}}{2}\left(\frac{n\cdot(n-1)}{d_{\min}}\left(r+1 + \frac{\mu^{r+1}}{1-\mu}\right) - \Rtot \right).
    \end{align*}
\end{corollary}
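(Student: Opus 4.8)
The plan is to derive \Cref{cor:total_resistance_bound_on_jacobian} by summing the per-pair estimate of \Cref{thm:effective_resistance_bound_on_jacobian} over all pairs of distinct vertices; essentially all of the work is already contained in that theorem. The one point that needs attention is that \Cref{thm:effective_resistance_bound_on_jacobian} is stated with the \emph{pair-local} degrees $\min\{d_u,d_v\}$ and $\max\{d_u,d_v\}$, whereas the corollary uses the \emph{graph-global} quantities $\dmin=\min_{w\in V}d_w$ and $\dmax=\max_{w\in V}d_w$, so before summing I first argue that replacing the local degrees by the global ones only weakens the bound.

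Concretely, fix $u\ne v$, abbreviate $c:=r+1+\frac{\mu^{r+1}}{1-\mu}$ (which is strictly positive since $\mu\in[0,1)$ for a non-bipartite graph), and set $B_{u,v}:=\frac{2}{\min\{d_u,d_v\}}\,c-R_{u,v}$, so that \Cref{thm:effective_resistance_bound_on_jacobian} reads $\norm{\partial h_u^{(r)}/\partial x_v}\le(2\alpha\beta)^{r}\frac{\max\{d_u,d_v\}}{2}B_{u,v}$. I would first record that $B_{u,v}\ge0$: this follows from the intermediate inequality $\sum_{l=0}^{r}(\hat{A}^{l})_{uv}\le\frac{\max\{d_u,d_v\}}{2}B_{u,v}$ produced inside the proof of \Cref{thm:effective_resistance_bound_on_jacobian} (the step that converts \Cref{lem:bound_on_Jacobian} and \Cref{lem:effective_resistance_sum_of_adjancency} into the theorem), together with the observation that $\sum_{l=0}^{r}(\hat{A}^{l})_{uv}\ge0$ because $\hat{A}=D^{-1/2}AD^{-1/2}$, and hence every power $\hat{A}^{l}$, has non-negative entries. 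Since $\min\{d_u,d_v\}\ge\dmin$ and $c>0$, we have $B_{u,v}\le\frac{2}{\dmin}c-R_{u,v}$, a quantity that is still nonnegative; and since $\max\{d_u,d_v\}\le\dmax$, monotonicity gives, for every pair $u\ne v$,
$$\norm{\frac{\partial h_u^{(r)}}{\partial x_v}}\;\le\;(2\alpha\beta)^{r}\,\frac{\dmax}{2}\left(\frac{2}{\dmin}\Bigl(r+1+\frac{\mu^{r+1}}{1-\mu}\Bigr)-R_{u,v}\right).$$

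Finally I would sum this inequality over all distinct pairs. The factor $r+1+\frac{\mu^{r+1}}{1-\mu}$ is pair-independent, so it contributes its value times the number of pairs, and the resistance terms assemble, by definition, into the total effective resistance $\Rtot$; note that the right-hand side above is symmetric in $u$ and $v$, so it is immaterial which orientation of the Jacobian is attached to each unordered pair. Collecting constants then yields the stated bound. I do not expect a genuine obstacle here: the only mildly delicate point is the nonnegativity of the bracketed term — needed to legitimately enlarge $\dmax$ and shrink $\dmin$ — and that is recycled from the proof of \Cref{thm:effective_resistance_bound_on_jacobian}; the rest is bookkeeping, where the main thing to get right is that the count of pairs on the left-hand side is consistent with the convention defining $\Rtot$, so that the constant $n(n-1)$ comes out correctly.
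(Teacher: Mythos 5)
Your proposal is correct and matches the paper's intent exactly: the corollary is obtained by summing the per-pair bound of \Cref{thm:effective_resistance_bound_on_jacobian} over all pairs, and your extra care in justifying the passage from the pair-local degrees to the global $\dmin,\dmax$ (via nonnegativity of the bracketed term, which follows from $\sum_{l\le r}(\hat{A}^{l})_{uv}\ge 0$) fills in the only detail the paper leaves implicit. Your observation about matching the pair-counting convention to that of $\Rtot$ is also the right way to see where the constant $n(n-1)$ comes from.
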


\paragraph{Comparison with Curvature Bounds.}

\Cref{thm:effective_resistance_bound_on_jacobian} and \Cref{cor:total_resistance_bound_on_jacobian} are inspired by Theorem 4 in \cite{topping2022oversquashing}, which bounds the Jacobian matrix between vertex features by the Balanced Forman curvature of an edge. In some ways, the effective resistance and Balanced Forman curvature of an edge are similar, as both measure how connected the endpoints are. However, our analysis generalizes the previous bound in several important ways.

(1) Our analysis can be applied to any pair of vertices in a graph, not just those vertices at distance 2.

(2) Effective resistance can be used to bound the oversquashing between node features after an arbitrary number of layers of a GNN, unlike Balanced Forman Curvature which can only measure oversqushing after 2 consecutive layers.

In short, the reason for both of these generalizations is that effective resistance measures the \textit{global} connectivity between a pair of vertices, while Balanced Forman curvature only measures the \textit{local} connectivity between a pair of nodes. See \Cref{fig:curvature vs resistance} for an illustration.

\begin{figure}[ht!]
    \centering
    \includegraphics[width=\linewidth]{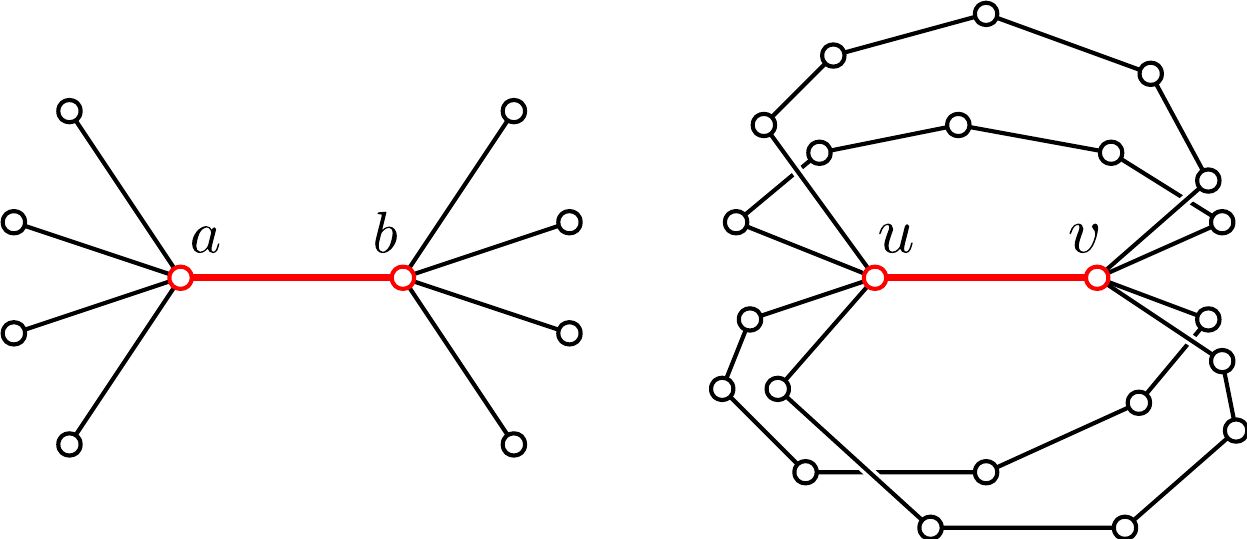}
    \vspace*{-0.1in}\caption{The edges $\{a,b\}$ and $\{u,v\}$ have the same Balanced Forman curvature of $\Ric(a,b) = \Ric(u,v) -6/5$. However, their effective resistance are different ($R_{a,b}=1$ and $R_{u,v} = 3/5$). This shows how the curvature only measure local connectivity and does not distinguish global connectivity as effective resistance does.}
    \label{fig:curvature vs resistance}
\end{figure}

\paragraph{Comparison with Commute Time Bounds}

Concurrently to this work, \citet{digiovanni2023oversquashing} showed that oversquashing between a pair of nodes $u$ and $v$ could be bounded by the commute time $\tau(u,v)$---the expected number of steps in a random walk from $u$ to $v$ and back to $u$. The commute time and effective resistance are proportional: $\tau(u,v)=2m R_{u,v}$~\cite{chandra1996resistance}; thus, our \Cref{thm:effective_resistance_bound_on_jacobian} and their Theorem 5.5 are analogous. Indeed, both theorems agree that oversquashing occurs between nodes with large effective resistance/commute time. The two theorems also use similar techniques to connect effective resistance/commute time to the Jacobian of a GNN. The main differences between our theorems are the result of differences in the quantities we bound (both are related to the Jacobian of the GNN) and differences in assumptions about the GNN.

\subsection{Effective Resistance and the Spectral Gap}\label{sec:spectral gap}

Let $0=\sigma_1\leq\sigma_2\leq\cdots\sigma_n$ denote the eigenvalues of the (un-normalized) Laplacian $L$. The second eigenvalue $\sigma_2$ is called the \textit{\textbf{spectral gap}}\footnotemark  of the graph $G$. The spectral gap is often used as a measure of the ``bottleneck'' of a graph. This is because the spectral gap is proportional to the size of the sparsest cut in the graph, a classic result known as \textit{\textbf{Cheeger's Inequality}} \cite{chung1996laplacians}.

\footnotetext{\label{footnote:normalized_vs_unnormalized} In this section, we focus on the spectral gap and eigenvalues of the \textit{unnormalized} Laplacian, while previous papers studying oversquashing have focused on the spectral gap of the \textit{normalized} Laplacian. There are variants of Cheeger's inequality for both the normalized and unnormalized spectral gap~\cite{chung1997spectral}, so both spectral gaps provide a measure of the connectivity and bottleneck of a graph. The eigenvalues of $L$ and $\hat{L}$ are also closely related as follows: $d_{\min}\lambda_k\leq\sigma_k\leq d_{\max}\lambda_k$.}

Previous research has attempted to connect oversquashing to the spectral gap of the graph \cite{topping2022oversquashing, banerjee2022information}. This has motivated rewiring heuristics aimed at raising at the spectral gap~\cite{arnaiz2022diffwire, banerjee2022information, deac2022expander, karhadkar2022firstorder}. However, unlike our theoretical analysis for effective resistance (\Cref{thm:effective_resistance_bound_on_jacobian} and \Cref{cor:total_resistance_bound_on_jacobian}), while the use of spectral gap for measuring oversquashing is intuitive, there was previously no theoretical evidence for how the spectral gap directly bounds information passing between nodes.
\par
In this section, we first discuss the connections between spectral gap and effective resistance in order to derive a first-step theoretical justification for using spectral gap for bounding oversquashing. Then, we discuss potential limitations of only using the spectral gap.
\par
The following existing result shows that the worst-case effective resistance between any pair of nodes is proportional to the spectral gap.

\begin{theorem}[Theorem 4.2, \cite{chandra1996resistance}]
\label{thm:spectral_gap_and_total_resistance}
    Let $R_{\max}$ denote the maximum effective resistance between any pair of vertices in $G$. Then
    $$
        \frac{1}{n\sigma_2} \leq R_{\max} \leq \frac{2}{\sigma_2}.
    $$
\end{theorem}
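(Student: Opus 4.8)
The plan is to diagonalize $L^{+}$ and read both inequalities off the resulting eigenvalue expansion of $R_{u,v}$. Let $w_1,\dots,w_n$ be an orthonormal eigenbasis of $L$ with $Lw_i=\sigma_iw_i$; since $G$ is connected, $\sigma_1=0$ and $w_1=\tfrac{1}{\sqrt n}\mathbf 1$. Then $L^{+}=\sum_{i=2}^{n}\sigma_i^{-1}w_iw_i^{T}$, so for any $u,v\in V$,
$$
R_{u,v}=(1_u-1_v)^{T}L^{+}(1_u-1_v)=\sum_{i=2}^{n}\frac{1}{\sigma_i}\bigl(w_i(u)-w_i(v)\bigr)^{2},
$$
a sum of nonnegative terms. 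Since $w_1\propto\mathbf 1$ is orthogonal to $1_u-1_v$, we also have the identity $\sum_{i=2}^{n}\bigl(w_i(u)-w_i(v)\bigr)^{2}=\norm{1_u-1_v}^{2}=2$, which is the only auxiliary fact the argument needs.

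For the upper bound I would bound $\sigma_i^{-1}\le\sigma_2^{-1}$ for each $i\ge 2$ in the display and apply the identity above, giving $R_{u,v}\le \sigma_2^{-1}\sum_{i=2}^n(w_i(u)-w_i(v))^2 = 2/\sigma_2$ for every pair, hence $R_{\max}\le 2/\sigma_2$. For the lower bound I would instead discard all terms except $i=2$, so $R_{u,v}\ge\sigma_2^{-1}\bigl(w_2(u)-w_2(v)\bigr)^{2}$ for every $u,v$, and then pick the pair carefully. Let $u$ maximize $|w_2(\cdot)|$; since $\norm{w_2}=1$ there is a coordinate with square at least $1/n$, so $|w_2(u)|\ge 1/\sqrt n$, and after possibly replacing $w_2$ by $-w_2$ we may assume $w_2(u)\ge 1/\sqrt n$. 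Let $v$ minimize $w_2(\cdot)$; because $w_2\perp\mathbf 1$ its coordinates have mean zero, forcing $w_2(v)\le 0$. Then $w_2(u)-w_2(v)\ge w_2(u)\ge 1/\sqrt n$, so $R_{\max}\ge R_{u,v}\ge \sigma_2^{-1}/n = 1/(n\sigma_2)$. (As an alternative route to the lower bound, one can average instead of taking a maximum: $\sum_{u<v}R_{u,v}=n\tr(L^{+})=n\sum_{i\ge2}\sigma_i^{-1}\ge n/\sigma_2$, whence $R_{\max}\ge \tfrac{n/\sigma_2}{\binom n2}=\tfrac{2}{(n-1)\sigma_2}$, which is even slightly stronger than the stated bound.)

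There is no serious obstacle here. The upper bound is a one-line consequence of the spectral expansion of the pseudoinverse together with $\norm{1_u-1_v}^2=2$. The only non-mechanical point is the choice of pair in the lower bound: combining $\norm{w_2}_\infty\ge n^{-1/2}$ (pigeonhole on a unit vector in $\R^n$) with the mean-zero property of $w_2$ is exactly what guarantees two vertices whose $w_2$-values differ by at least $n^{-1/2}$, and keeping only the $\sigma_2$-term of the nonnegative sum then yields the bound.
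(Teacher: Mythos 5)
Your proof is correct; note that the paper does not prove this statement at all but cites it directly from Chandra et al., so there is no in-paper argument to compare against. Your spectral expansion $R_{u,v}=\sum_{i\ge 2}\sigma_i^{-1}\bigl(w_i(u)-w_i(v)\bigr)^2$ plus the identity $\sum_{i\ge 2}\bigl(w_i(u)-w_i(v)\bigr)^2=2$ gives the upper bound immediately, and the pigeonhole-plus-mean-zero selection of the pair $(u,v)$ correctly forces $\bigl(w_2(u)-w_2(v)\bigr)^2\ge 1/n$ for the lower bound; the averaging alternative via $\Rtot=n\sum_{i\ge2}\sigma_i^{-1}$ is likewise valid and consistent with the total-resistance formula the paper quotes from Ghosh et al.
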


\Cref{cor:total_resistance_bound_on_jacobian} and \Cref{thm:spectral_gap_and_total_resistance} combine to reinforce the idea that low spectral gap is tied to oversquashing, as seen by the following corollary.

\begin{corollary}
\label{cor:spectral_gap_bound_on_jacobian}
Under the same assumptions as in \Cref{cor:total_resistance_bound_on_jacobian}, one has that
    \begin{align*}
        &\sum_{u\neq v\in V} \norm{\frac{\partial h^{(r)}_{u}}{\partial x_{v}} } \\
       \leq & (2\alpha\beta)^{r}\frac{d_{\max}}{2}\left(\frac{n\cdot(n-1)}{d_{\min}}\left(r+1 + \frac{\mu^{r+1}}{1-\mu}\right) - \frac{1}{n\sigma_2} \right).
    \end{align*}
\end{corollary}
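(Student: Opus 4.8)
The plan is to obtain the claimed bound directly from \Cref{cor:total_resistance_bound_on_jacobian} by substituting a suitable lower bound on the total effective resistance $\Rtot$. Indeed, \Cref{cor:total_resistance_bound_on_jacobian} already gives
$$\sum_{u\neq v\in V}\norm{\frac{\partial h_u^{(r)}}{\partial x_v}} \le (2\alpha\beta)^{r}\frac{\dmax}{2}\left(\frac{n(n-1)}{\dmin}\left(r+1+\frac{\mu^{r+1}}{1-\mu}\right) - \Rtot\right),$$
so it suffices to show $\Rtot \ge \tfrac{1}{n\sigma_2}$: since the prefactor $(2\alpha\beta)^{r}\dmax/2$ is nonnegative and $-\Rtot$ enters with a negative sign, replacing $\Rtot$ by the smaller quantity $\tfrac{1}{n\sigma_2}$ only enlarges the right-hand side, which preserves the upper bound.

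For the lower bound on $\Rtot$, I would argue as follows. By definition $\Rtot$ is a sum of the nonnegative terms $R_{u,v}$ over all pairs of distinct vertices; hence $\Rtot \ge R_{\max}$, where $R_{\max}$ is the maximum effective resistance between any pair of vertices, because at least one summand equals $R_{\max}$ and all remaining summands are $\ge 0$. Then \Cref{thm:spectral_gap_and_total_resistance} gives $R_{\max} \ge \tfrac{1}{n\sigma_2}$, so $\Rtot \ge \tfrac{1}{n\sigma_2}$. Chaining this with the display above yields exactly the stated inequality.

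There is essentially no real obstacle here beyond bookkeeping; the only subtleties are (i) that the $\Rtot$ term appears with a negative sign, so one needs a \emph{lower} bound on $\Rtot$ rather than an upper bound, and (ii) that the hypotheses of \Cref{cor:total_resistance_bound_on_jacobian} already assume $G$ connected and non-bipartite, which guarantees $\sigma_2 > 0$ so that $\tfrac{1}{n\sigma_2}$ is well defined and \Cref{thm:spectral_gap_and_total_resistance} applies. No further computation is required.
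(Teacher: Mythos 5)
Your proof is correct and matches the paper's (implicit) argument exactly: the paper derives this corollary by combining \Cref{cor:total_resistance_bound_on_jacobian} with \Cref{thm:spectral_gap_and_total_resistance} via $\Rtot \ge R_{\max} \ge \tfrac{1}{n\sigma_2}$, just as you do. Your remarks on the sign of the $\Rtot$ term and on $\sigma_2>0$ are accurate bookkeeping.
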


Of course, the bound above is looser than the bound using $\Rtot$ in \Cref{cor:total_resistance_bound_on_jacobian}. Furthermore, the following result suggests that oversquashing behavior of the graph is tied not just to the spectral gap, but rather to the \textit{entire} spectrum of the Laplacian. Therefore, raising the entire spectrum of the Laplacian, not just the spectral gap, could potentially further reduce oversquashing.

\begin{theorem}[Section 2.5, \cite{ghosh2008minimizing}]\label{lm: Rtot}
    Let $G$ be a connected graph with $n$ vertices, Laplacian $L$, and total resistance $\Rtot$. Then
    $$
        \Rtot = n\cdot\tr L^{+} = n \sum_{i=2}^{n}\frac{1}{\sigma_i}
    $$
\end{theorem}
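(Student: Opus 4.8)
The plan is a direct computation: expand the quadratic form defining $R_{u,v}$, sum over all pairs, and then exploit the fact that the all-ones vector is annihilated by $L^{+}$.

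Write $1_u$ for the indicator vector of a vertex $u$. Expanding the definition gives $R_{u,v}=(1_u-1_v)^{T}L^{+}(1_u-1_v)=(L^{+})_{uu}+(L^{+})_{vv}-2(L^{+})_{uv}$. Summing over all ordered pairs $(u,v)$ with $u\neq v$, and using $\sum_{u\neq v}(L^{+})_{uu}=\sum_{u\neq v}(L^{+})_{vv}=(n-1)\tr L^{+}$, one obtains
$$\sum_{u\neq v}R_{u,v}=2(n-1)\tr L^{+}-2\sum_{u\neq v}(L^{+})_{uv}.$$
For the cross term, $\sum_{u\neq v}(L^{+})_{uv}=\mathbf{1}^{T}L^{+}\mathbf{1}-\tr L^{+}$. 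Since $G$ is connected, $\ker L=\mathrm{span}(\mathbf{1})$, and because $L$ is symmetric its pseudoinverse has the same kernel, so $L^{+}\mathbf{1}=0$ and hence $\mathbf{1}^{T}L^{+}\mathbf{1}=0$. Therefore $\sum_{u\neq v}(L^{+})_{uv}=-\tr L^{+}$, which yields $\sum_{u\neq v}R_{u,v}=2(n-1)\tr L^{+}+2\tr L^{+}=2n\tr L^{+}$. As $\Rtot$ counts each unordered pair once, $\Rtot=\tfrac12\sum_{u\neq v}R_{u,v}=n\tr L^{+}$.

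The second equality is then immediate from the spectral decomposition of $L$: if $w_1,\dots,w_n$ is an orthonormal eigenbasis with $Lw_i=\sigma_i w_i$ and $0=\sigma_1\leq\sigma_2\leq\cdots\leq\sigma_n$, then $L^{+}=\sum_{i=2}^{n}\sigma_i^{-1}w_iw_i^{T}$ (the $i=1$ term vanishes since $\sigma_1=0$), so $\tr L^{+}=\sum_{i=2}^{n}\sigma_i^{-1}$, giving $\Rtot=n\tr L^{+}=n\sum_{i=2}^{n}\sigma_i^{-1}$. There is no genuinely hard step here; the only points requiring care are fixing the convention that $\Rtot$ sums over \emph{unordered} pairs (which produces the factor $n$ rather than $2n$), and invoking connectivity to pin down $\ker L=\mathrm{span}(\mathbf{1})$ so that $L^{+}\mathbf{1}=0$. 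One could alternatively route the argument through \Cref{lm:normalized R} and the normalized Laplacian, but the unnormalized computation above is the most direct.
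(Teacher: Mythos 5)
Your proof is correct and complete. Note that the paper does not actually prove this statement---it is imported verbatim from Section~2.5 of \citet{ghosh2008minimizing}---so there is no internal argument to compare against; your derivation (expand the quadratic form, sum over ordered pairs, use $L^{+}\mathbf{1}=0$ from connectivity to kill the cross term, then halve for unordered pairs) is the standard one and handles the two delicate points correctly: the unordered-pair convention that yields $n\tr L^{+}$ rather than $2n\tr L^{+}$, and the fact that $\ker L^{+}=\ker L=\mathrm{span}(\mathbf{1})$ for a connected graph. A sanity check on $K_2$ ($R_{\text{tot}}=1=2\cdot\tfrac{1}{2}$) confirms the constant.
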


The higher eigenvalues of $L$ also carry topological meaning about the graph. Just as the spectral gap $\lambda_2$ measures the obstruction to bipartitioning a graph (the ``bottleneck''), the $k$\textsuperscript{th} smallest eigenvalue $\lambda_k$ of $\hat{L}$ is related to partitioning a graph into $k$ parts \cite{lee2014multiway}. See \Cref{footnote:normalized_vs_unnormalized} for the relationship between the eigenvalues $\lambda_k$ and $\sigma_k$.

\section{Minimizing Total Resistance by Rewiring}\label{sec:rewiring total}

\begin{figure*}[th!]
    \centering
    \begin{subfigure}{0.33\textwidth}
        \centering
        \includegraphics[width=\linewidth]{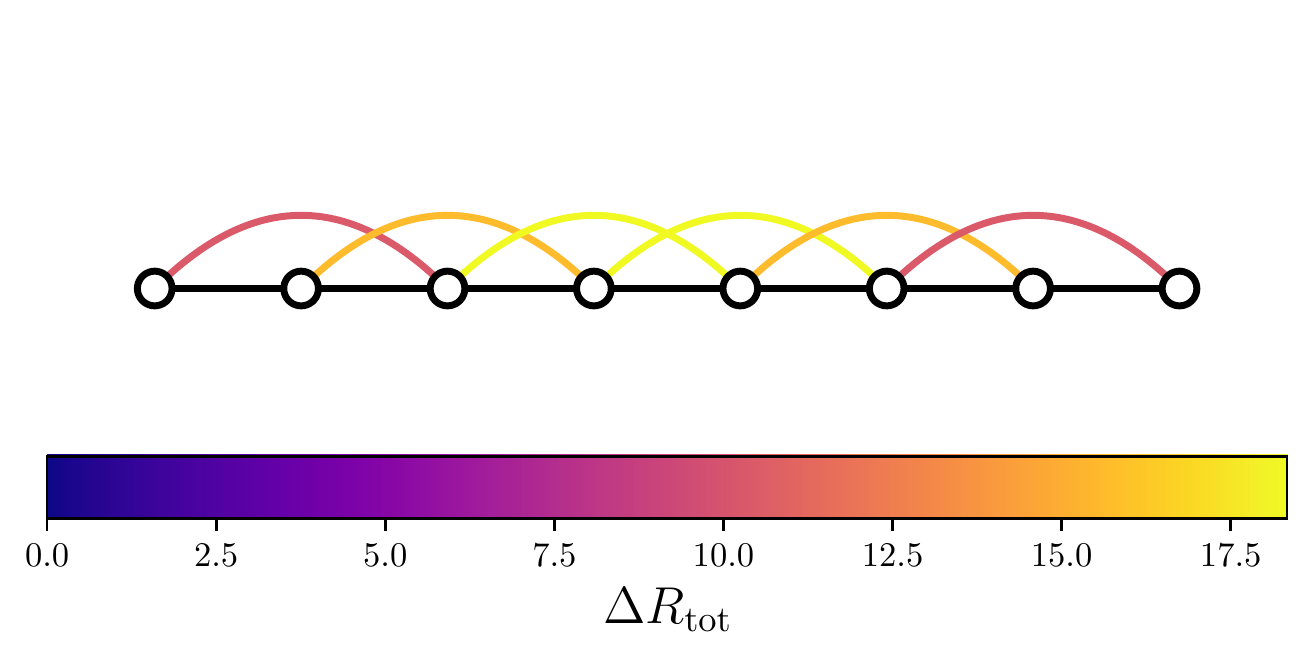}
    \end{subfigure}
    \begin{subfigure}{0.33\textwidth}
        \centering
        \includegraphics[width=\linewidth]{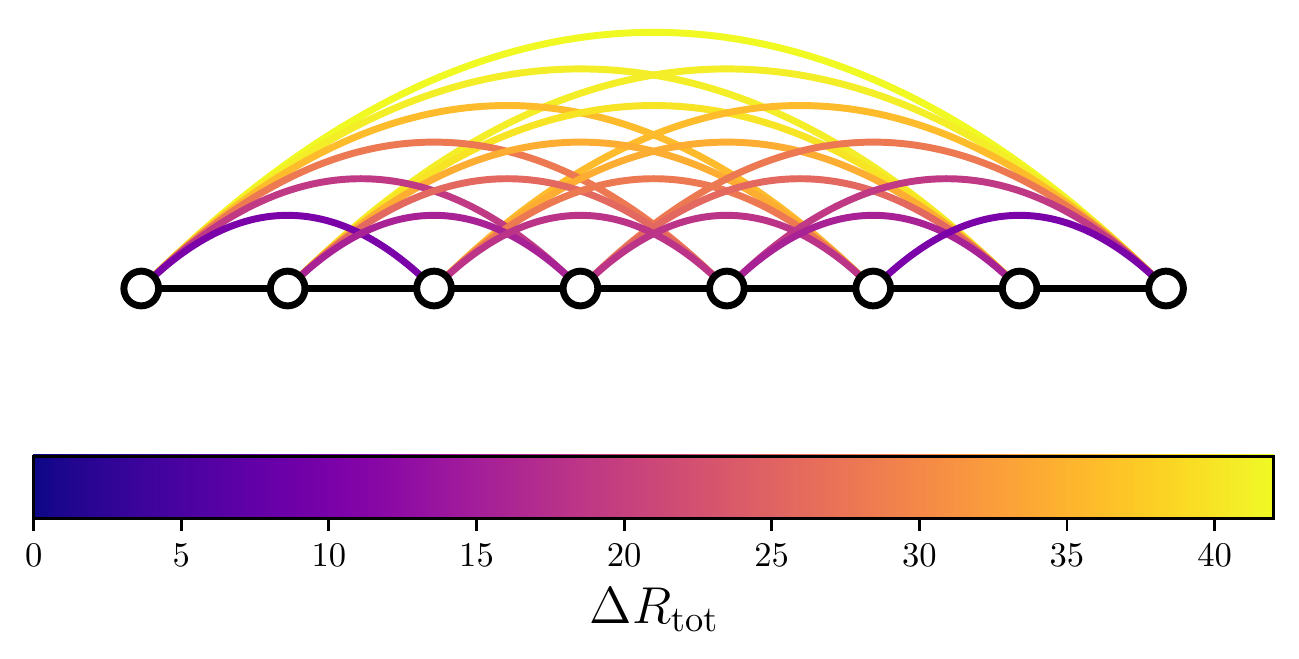}
    \end{subfigure}
    \begin{subfigure}{0.33\textwidth}
        \centering
        \includegraphics[width=\linewidth]{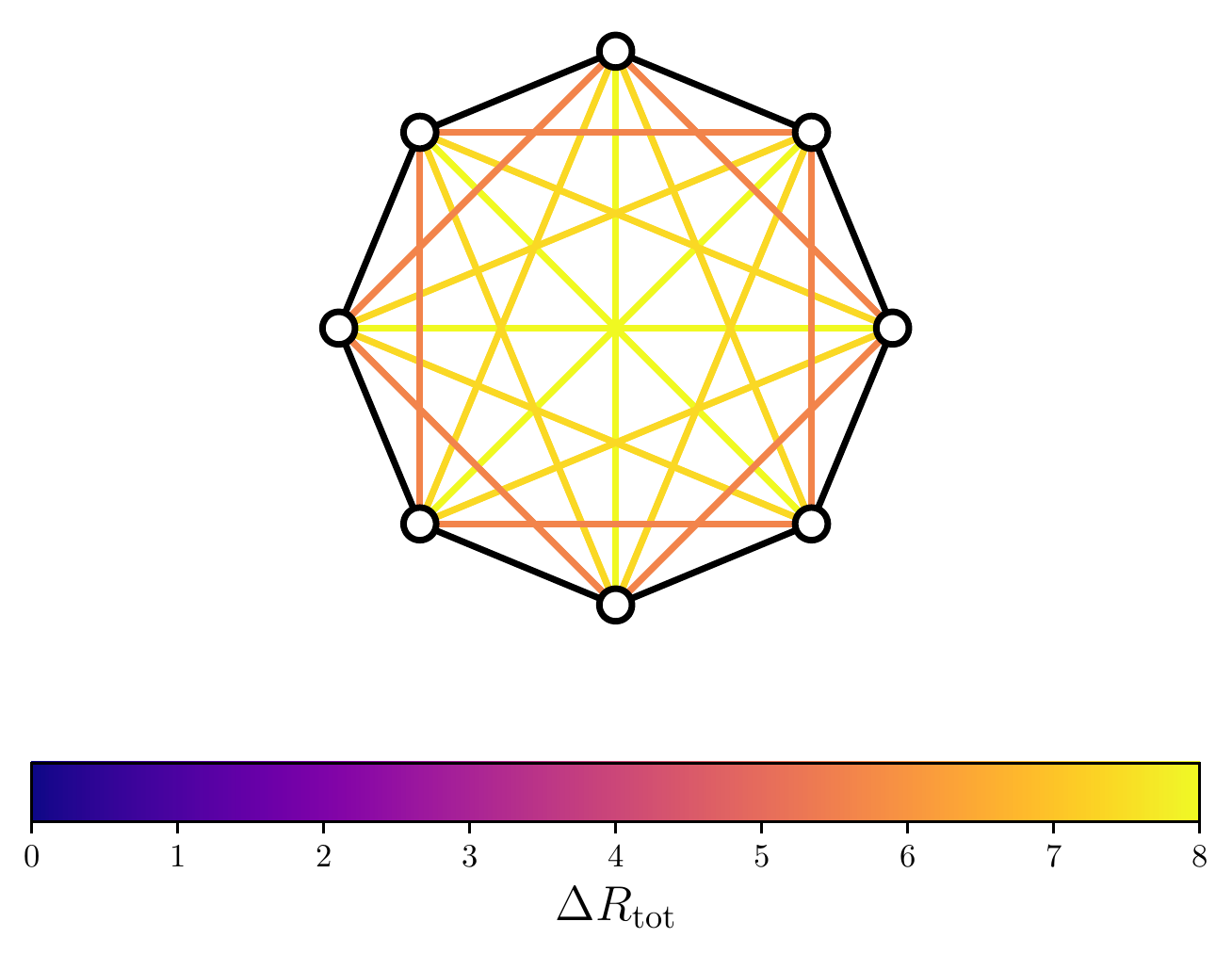}
    \end{subfigure}
    \caption{When an edge $\{u,v\}$ is added to a graph, it decreases the total resistance by $\Delta \Rtot:=n\cdot(B^{2}_{u,v}/(1+R_{u,v}))$ (\Cref{thm:change_in_total_resistance}). This figure shows the value $\Delta \Rtot$ for various pairs of vertices in graphs with $n=8$ vertices. Edges originally in the graph are black, and edges not in the graph are colored according to $\Delta \Rtot$. Left: For pairs of vertices with equal effective resistance $R_{u,v}=2$, edges towards the center of the graph have the highest biharmonic distance $B_{u,v}$. Center: The pairs of vertices that maximize $\Delta \Rtot$ are those at opposite ends of the path. Right: The pairs of vertices that maximize $\Delta \Rtot$ are on opposite sides of the cycle.}
    \label{fig:buv_over_1_plus_ruv}
\end{figure*}

 Motivated by \Cref{cor:total_resistance_bound_on_jacobian}, we propose to address oversquashing by ``rewiring'' a graph to minimize its total resistance. Adding any edge to the graph will decrease its total resistance (a result known as \textbf{\textit{Rayleigh Monotonicity}}), so in this section, we (1) derive a formula to determine how much adding a specific edge decreases the total resistance and (2) propose a rewiring method that greedily adds the edge to the graph that most decreases total resistance. Note that our ``rewiring'' just refers to adding edges, while some previous usage of the term ``rewiring'' might refer to replacing one edge with another \cite{topping2022oversquashing,banerjee2022information}.

\paragraph{Change to $\Rtot$ after adding one edge.} We first need a new notion. The \textit{\textbf{biharmonic distance}} between a pair of vertices $u$ and $v$ is
$$
B_{u,v} = \sqrt{(1_u-1_v)^{T} (L^{+})^{2} (1_u - 1_v)}.
$$
The biharmonic distance was first introduced in the context of geometry processing \cite{lipman2010biharmonic}. However, before it was properly named, it was discovered that the squared biharmonic distance between $u$ and $v$ is proportional to the partial derivative of the total resistance with respect to the weight of the edge $\{u,v\}$, i.e. $\partial R_{tot}/\partial w_{u,v} = -n\cdot B^{2}_{u,v}$~\cite{ghosh2008minimizing}. This suggests that the biharmonic distance can be used as a measure for the effect an edge has on the global connectivity of the graph.
\par
The following theorem may be seen as the unweighted and combinatorial analogue of the previous result (but is proved using completely different means.) This theorem allows us to calculate how much the total resistance decreases when an (unweighted) edge $\{u,v\}$ is added to the graph.

\begin{restatable}{theorem}{changeintotalresistance}
\label{thm:change_in_total_resistance}
   Let $G$ be a connected graph with $n$ vertices. Let $\{u,v\}$ be an edge not in $G$. The difference in total resistance after adding the edge $\{u,v\}$ to $G$ is
    $$
         \Rtot(G) - \Rtot(G\cup\{u,v\}) = n\cdot \frac{B^{2}_{u,v}}{1+R_{u,v}}
    $$
\end{restatable}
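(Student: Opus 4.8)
The plan is to use the Sherman–Morrison formula for the pseudoinverse of a rank-one update of the Laplacian. Adding the edge $\{u,v\}$ to $G$ changes the Laplacian by $L \mapsto L' = L + b b^T$, where $b = 1_u - 1_v$. Since $\Rtot(G) = n \tr L^+$ by \Cref{lm: Rtot}, and $L'$ has the same kernel as $L$ (both are the all-ones vector, as the graph stays connected), it suffices to compute $\tr(L^+) - \tr((L')^+)$ and multiply by $n$. So the first step is to obtain a clean expression for $(L')^+$ in terms of $L^+$, $b$, and the scalar $b^T L^+ b = R_{u,v}$.

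The main technical point — and the step I expect to be the principal obstacle — is that the usual Sherman–Morrison identity is stated for inverses of full-rank matrices, not for pseudoinverses of singular matrices. One has to be careful: the naive formula $(L')^+ = L^+ - \frac{L^+ b b^T L^+}{1 + b^T L^+ b}$ is \emph{not} automatically valid for pseudoinverses. However, it does hold here because $b = 1_u - 1_v \perp \ker L$, so $b \in \im L$, and in that situation the rank-one perturbation lives entirely inside the range space $\im L = (\ker L)^\perp$, which is invariant under both $L$ and $L'$. Restricting to this $(n-1)$-dimensional subspace, $L$ becomes genuinely invertible and the classical Sherman–Morrison formula applies verbatim; transporting back gives $(L')^+ = L^+ - \frac{L^+ b b^T L^+}{1 + R_{u,v}}$. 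I would state this restriction argument carefully (possibly as a small lemma), since it is the only place where singularity could cause trouble.

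Given that formula, the rest is a short trace computation. Taking traces,
$$
\tr(L^+) - \tr\big((L')^+\big) = \frac{\tr(L^+ b b^T L^+)}{1 + R_{u,v}} = \frac{b^T (L^+)^2 b}{1 + R_{u,v}} = \frac{B_{u,v}^2}{1 + R_{u,v}},
$$
using cyclicity of the trace, the symmetry $(L^+)^T = L^+$, and the definition $B_{u,v}^2 = (1_u - 1_v)^T (L^+)^2 (1_u - 1_v)$. Multiplying through by $n$ and invoking \Cref{lm: Rtot} for both $G$ and $G \cup \{u,v\}$ yields
$$
\Rtot(G) - \Rtot(G \cup \{u,v\}) = n \cdot \frac{B_{u,v}^2}{1 + R_{u,v}},
$$
which is the claim. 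The one remaining loose end is to double-check that $G \cup \{u,v\}$ is connected (immediate, since $G$ already is) so that \Cref{lm: Rtot} applies to it, and that $\ker L' = \ker L = \mathrm{span}(\mathbf{1})$, which is what makes the two pseudoinverses comparable on the same subspace.
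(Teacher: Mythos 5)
Your proof is correct and follows the same skeleton as the paper's: a rank-one update formula for the (pseudo)inverse, followed by a trace computation that identifies $b^T L^+ b$ with $R_{u,v}$ and $b^T (L^+)^2 b$ with $B_{u,v}^2$. The one place you genuinely diverge is in how the singularity of $L$ is handled. The paper sidesteps pseudoinverses entirely: it passes to the invertible matrix $L + \frac{11^T}{n}$, proves (\Cref{lm:modified formula}) that $R_{u,v}$, $B_{u,v}^2$, and $\Rtot$ can all be read off from that matrix, and then applies the classical Woodbury formula to it. You instead justify the pseudoinverse version of Sherman--Morrison directly, by restricting to the invariant subspace $\mathbf{1}^{\perp}$ on which $L$ is invertible; this is valid precisely because $b = 1_u - 1_v \perp \ker L$, $\ker L' = \ker L = \mathrm{span}(\mathbf{1})$, and the denominator $1 + R_{u,v}$ is nonzero, all of which you note. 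The two devices are interchangeable here --- indeed the paper itself remarks in its runtime appendix that Woodbury can be used to update $L^+$ in exactly this special case --- so your route trades the auxiliary regularization lemma for the restriction argument, which, as you say, should be written out as a small standalone lemma to be airtight.
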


\begin{proof}[Sketch of proof of \Cref{thm:change_in_total_resistance}]

    Note that adding the edge $\{u,v\}$ to $G$ changes the Laplacian from $L$ to $L+(1_u-1_v)(1_u-1_v)^{T}$. Hence by \Cref{lm: Rtot} we need to compare the traces of the pseudoinverses of $L$ and $L+(1_u-1_v)(1_u-1_v)^{T}$. This naturally leads us Woodbury's formula:
    \begin{lemma}[Woodbury's Formula]\label{lm:woodbury}
        Let $A$ be an invertible matrix. Let $x$ be a vector. Then
        $$
            (A+xx^{T})^{-1} = A^{-1} - A^{-1}x(1 + x^{T} A^{-1} x)^{-1}x^{T}A^{-1}.
        $$
    \end{lemma}
    As $L$ is singular, we cannot apply Woodbury's Formula directly to $L+(1_u-1_v)(1_u-1_v)^{T}$. Hence, we consider the variant of the Laplacian $L+\frac{11^{T}}{n}$, where $1$ is the all-ones vector. If $G$ is connected, then $L+\frac{11^{T}}{n}$ is invertible. Moreover, it can be shown that
    \begin{restatable}[\cite{ghosh2008minimizing}]{lemma}{modifiedformula}
    \label{lm:modified formula}
    Let $G$ be a connected graph. Then
        \begin{itemize}
            \item $R_{u,v} = (1_u - 1_v)^T (L+\frac{11^{T}}{n})^{-1} (1_u - 1_v)$;
            \item $B_{u,v}^{2} = (1_u - 1_v)^T (L+\frac{11^{T}}{n})^{-2} (1_u - 1_v)$;
            \item $\Rtot = n\cdot\tr(L+\frac{11^{T}}{n})^{-1}-n$.
        \end{itemize}
    \end{restatable}
    We can therefore apply \Cref{lm:woodbury} to compute $(L+\frac{11^{T}}{n}+(1_u-1_v)(1_u-1_v)^{T})^{-1}$, take the trace, and conclude the theorem. See \Cref{apx:change_in_total_resistance} for all the details.
\end{proof}

\Cref{fig:buv_over_1_plus_ruv} shows the value $n\cdot \frac{B^{2}_{u,v}}{1+R_{u,v}}$ for edges in various graphs.

\paragraph{Rewiring heuristic.}

Motivated by \Cref{thm:change_in_total_resistance}, we propose the following heuristic, \textbf{\textit{Greedy Total Resistance (GTR) rewiring}}, to minimize the total resistance: repeatedly add the edge $\{u,v\}$ that maximizes $B^{2}_{u,v}/(1+R_{u,v})$. For disconnected graphs, the effective resistance and biharmonic distance between vertices in different components is not meaningful. Therefore, we only add edges between vertices that are already in the same connected component.
While we could also use \Cref{thm:change_in_total_resistance} to determine which edge to remove to most decrease the total resistance, we will only add edges in this paper. A PyTorch Geometric implementation of the GTR algorithm is available online\footnote{\url{https://github.com/blackmit/gtr_rewiring}}. See \Cref{apx:total_resistance_curves} for plots of how much GTR decreases total resistance for various datasets.

\paragraph{Time complexity.}

GTR can naively be implemented in $O(n^{3})$ time, but there are more sophisticated algorithms that take time $O(m\poly\log n + n^2\poly\log n)$. See \Cref{apx:runtime} for an asymptotic and empirical analysis of its runtime.

\paragraph{Adding multiple edges.}

While Theorem \ref{thm:change_in_total_resistance} tells us which single edge most decreases the total resistance when added to the graph, unfortunately, we cannot use this formula to determine which set of $k\geq 2$ edges most decrease the total resistance of the graph. In \Cref{sec:counterexample}, we give an example of a graph where the two edges that most decrease the total resistance are not the two edges that maximize the formula in \Cref{thm:change_in_total_resistance}.

Another challenge for {designing recursive algorithms} to add multiple edges is that the amount an edge decreases the total resistance is \textit{\textbf{non-monotonic}} with respect to subgraphs. By non-monotonic, we mean that for nested graphs $H\subset G$, the amount an edge decreases the total resistance when added to $G$ can be \textit{more} than the amount the same edge would decrease the total resistance when added to $H$. \Cref{sec:counterexample} gives an example where this is the case. Intuitively, this means that an edge can become more important to the global topology of a graph when more edges are added. This is in contrast to the effective resistance, which only decreases with the addition of more edges.
\par
The best algorithm we know for computing the set of $k$ edges that most decrease the total resistance is a brute-force search over all $O(\binom{\binom{n}{2}}{k})$ sets of $k$ edges. It was recently shown that finding the $k$ edges that most decrease the total resistance is NP-Hard~\cite{kooij2023minimizing}. Because of this, it is reasonable to use a heuristic rather than exactly compute the best edges to add to decrease total resistance.

\section{Experiments}\label{sec:exp}

We primarily compare our new GTR rewiring algorithm with the \textbf{\textit{FoSR}} (for ``first-order spectral rewiring'') algorithm proposed by \citet{karhadkar2022firstorder}, as FoSR is the rewiring strategy with the best performance. {{FoSR}} aims at reducing oversquashing in graphs by increasing the spectral gap. FoSR is perhaps the rewiring heuristic most similar to GTR for two reasons. First, it only changes the topology of the graph by adding edges. Second, it is designed to increase the spectral gap of the graph, which will necessarily increase the total resistance of the graph.

 \subsection{Spectral Gap vs. Total Resistance}

\begin{figure}[ht]
    \centering
    \begin{subfigure}[c]{0.5\linewidth}
    \centering
    \includegraphics[height=1.1in]{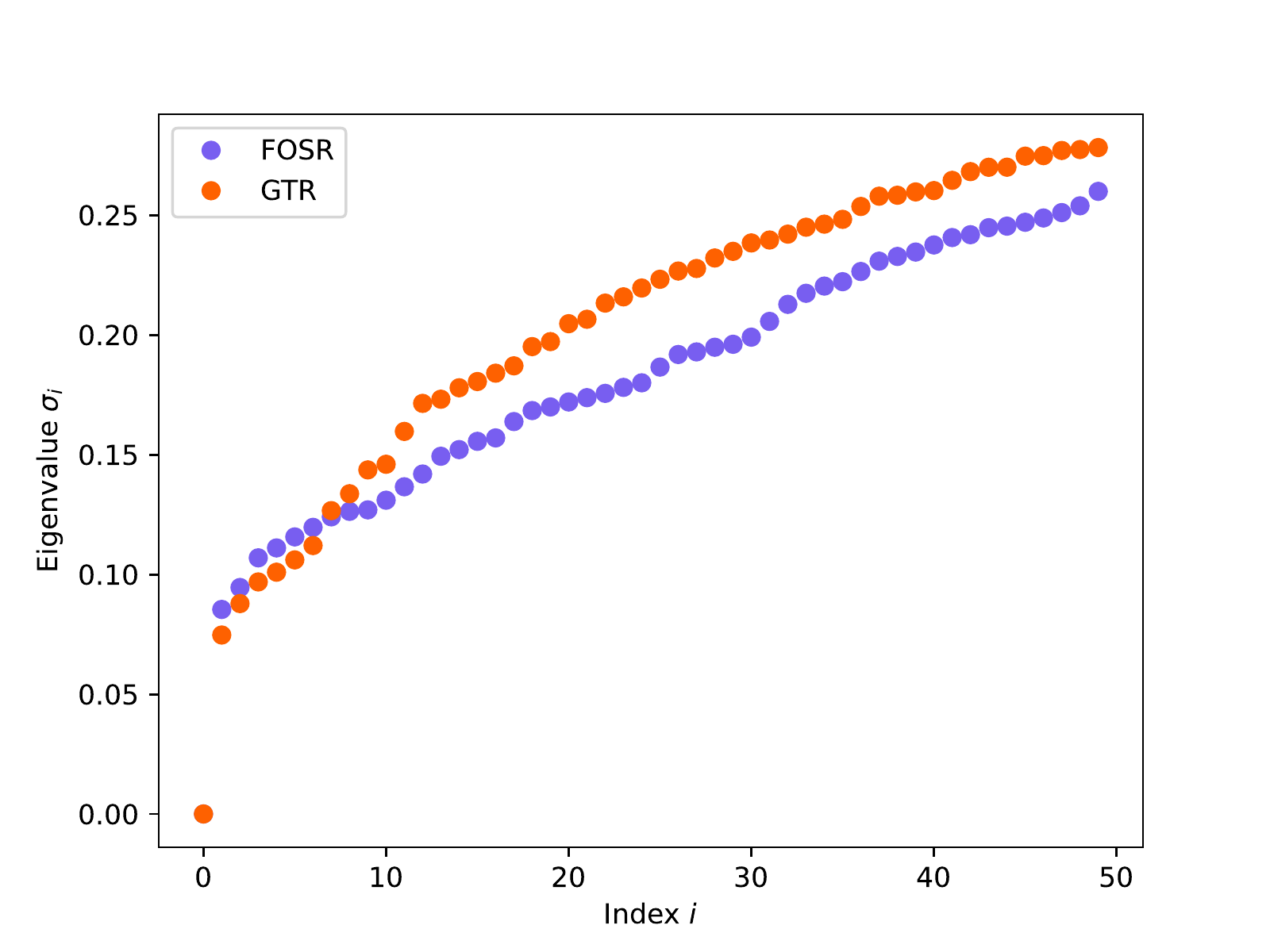}
    \end{subfigure}%
    \begin{subfigure}[c]{0.5\linewidth}
        \centering
        \begin{tabular}{c|c c}
            & FoSR & GTR \\
            \hline
            $\sigma_2$ & 0.085 & 0.075 \\
            $\Rtot$  & 4250377 & 4114024
        \end{tabular}
    \end{subfigure}
   \vspace*{-0.1in}
   \caption{A comparision of the largest connected component of Cora after adding 50 edges with FoSR and GTR. Left: A plot of the smallest 50 eigenvalues of the Laplacian. Right: The spectral gap and total resistance.}
    \vspace*{-0.1in}\label{fig:gtr_vs_fosr_eigenvalues}
\end{figure}

To compare FoSR and GTR, we use both methods to add 50 edges to the largest connected component of the Cora citation network~\cite{mccallum2000cora}. Figure~\ref{fig:gtr_vs_fosr_eigenvalues} shows the 50 smallest eigenvalue after rewiring. FoSR increases the first few eigenvalues (including the spectral gap) more, while GTR increases the larger eigenvalues more. In total, GTR does more to decrease the total resistance of the graph.

\subsection{Graph Classification}
\label{sec:graph_classication}

\begin{table*}[ht!]
\centering
\caption{
    Results of different combinations of rewiring and convolutions on different graph classification datasets. {\color{ibm-orange} \textbf{First}}, {\color{ibm-magenta} \textbf{second}}, and {\color{ibm-purple} \textbf{third}} best results are colored. See \Cref{sec:graph_classication} for discussion. All results except for GTR are from \cite{karhadkar2022firstorder}, with the exception of R-GIN FoSR results marked with an asterisk $({*})$; these are the best runs from the edge ablation study (\Cref{sec:edge_ablation}).
}
\begin{tabular}{c c c c c c c}
    \hline
    &&& GCN &&& \\
    \hline
    Rewiring & Mutag & Proteins & Enzymes & Reddit-Binary & IMDB-Binary & Collab \\
    \hline
    None & $72.15\pm2.44$ & $70.98\pm0.74$ & $27.67\pm 1.16$ & $68.26\pm1.10$ & $49.77\pm0.82$ & $33.78\pm0.49$  \\
    Last FA & $70.05\pm 2.03$ & $71.02\pm 0.96$ & $26.47\pm 1.20$ & $68.49\pm0.95$ & $48.98\pm0.95$ & $33.32\pm 0.44$ \\
    Every FA & $70.45\pm 1.96$ & $60.04\pm 0.93$ & $18.33\pm 1.04$ & $48.49\pm 1.04$ & $48.17\pm 0.80$ & $51.80\pm 0.42$ \\
    DIGL & $79.70\pm 2.15$ & $70.76\pm 0.77$ & $35.72\pm 1.12$ & $76.04\pm 0.78$ & $64.39\pm 0.91$ & $54.50\pm 0.41$ \\
    SDRF & $71.05\pm1.87$ & $70.92\pm0.79$ & $28.37\pm 1.17$ & $68.62\pm0.85$ & $49.40\pm0.90$ & $33.45\pm0.47$  \\
    FoSR & $80.00\pm1.57$ & $73.42\pm0.81$ & $25.07\pm 0.994$ & $70.33\pm0.72$ & $49.66\pm0.86$ & $33.84\pm0.58$  \\
    \hline
    GTR & $79.10\pm 1.86$ & $72.59\pm 2.48$ & $27.52\pm0.99$ & $68.99\pm 0.61$ & $49.92 \pm 0.99$ & $33.05\pm0.40$ \\
    \hline
    &&& R-GCN &&& \\
    \hline
    Rewiring & Mutag & Proteins & Enzymes & Reddit-Binary & IMDB-Binary & Collab \\
    \hline
    None & $69.25\pm2.09$ & $69.52\pm0.73$ & $28.60\pm 1.19$ & $49.85\pm0.65$ & $50.01\pm0.92$ & $33.60\pm1.05$  \\
    Last FA & $70.55\pm 1.81$ & $69.53\pm 0.82$ & $28.23\pm 1.14$ & $49.80\pm 0.63$ & $50.65\pm 0.96$ & $34.73\pm 1.19$ \\
    Every FA & $70.50\pm 1.84$ & $71.67\pm 0.88$ & $33.40\pm 1.14$ & $49.95\pm 0.59$ & $50.50\pm 0.89$ & $33.62\pm 0.98$ \\
    DIGL & $73.40\pm2.00$ & $68.23\pm0.85$ & $28.28\pm1.21$ & $50.00\pm 0.62$ & $49.67 \pm 0.84$  &  $16.93 \pm 1.44$\\
    SDRF & $72.30\pm2.22$ & $69.11\pm0.76$ & $33.48\pm 1.25$ & $58.62\pm0.65$ & $53.64\pm1.04$ & $67.99\pm0.39$  \\
    FoSR & $84.45\pm1.57$ &  $73.80\pm0.69$ & $35.66\pm 1.151$ & $76.59\pm 0.53$ & $64.05\pm 1.12$ & $70.65\pm0.48$ \\
    \hline
    GTR & $\third{85.50\pm 1.47}$ & $\first{75.78\pm0.76}$ & $41.33\pm1.28$ & $80.18\pm0.60$ & $65.09 \pm 0.93$ & $74.34\pm0.41$ \\
    \hline
    &&& GIN &&& \\
    \hline
    Rewiring & Mutag & Proteins & Enzymes & Reddit-Binary & IMDB-Binary & Collab \\
    \hline
    None & $77.70\pm3.60$ & $70.80\pm0.83$ & $33.80\pm 1.12$ & $86.79\pm1.06$ & $70.18\pm0.99$ & $72.99\pm0.38$  \\
    Last FA & $83.45 \pm 1.74$  & $72.30 \pm 0.67$ & $47.40 \pm 1.39$ & $90.22 \pm 0.48$ & $70.91 \pm 0.79$  & $75.06 \pm 0.41$ \\
    Every FA & $72.55\pm3.02$ & $70.38\pm0.91$ & $28.38\pm1.05$ & $50.36\pm0.65$ & $49.16\pm0.87$ & $32.89\pm0.39$\\
    DIGL & $79.70\pm2.15$ & $70.76\pm0.77$ & $35.72\pm1.20$ & $76.04\pm0.77$ & $64.39\pm0.91$ & $54.50\pm0.41$\\
    SDRF & $78.40\pm2.80$ & $69.81\pm0.79$ & $35.82\pm 1.09$ & $86.44\pm0.59$ & $69.72\pm1.15$ & $72.96\pm0.42$  \\
    FoSR & $78.00\pm 2.22$ &  $75.11\pm 0.82$ & $29.20\pm 1.38$ & $87.35\pm 0.60$ & $71.21\pm 0.92$ & $73.28\pm0.42$ \\
    \hline
    GTR & $77.60\pm2.84$ & $73.13\pm0.69$ & $30.57\pm1.42$ & $86.98 \pm 0.66$ & $71.28\pm0.86$ & $72.93\pm0.42$ \\
    \hline
    &&& R-GIN &&& \\
    \hline
    Rewiring & Mutag & Proteins & Enzymes & Reddit-Binary & IMDB-Binary & Collab \\
    \hline
    None & $83.05\pm1.44$ & $70.50\pm0.81$ & $39.02\pm 1.17$ & $87.97\pm0.56$ & $68.89\pm0.87$ & $75.54\pm0.32$  \\
    Last FA & $80.60\pm 1.64$ & $70.30\pm0.84$ & $\third{48.18\pm1.40}$ & $\third{90.00\pm0.65}$ & $69.71\pm1.03$ & $75.43\pm0.49$\\
    Every FA & $83.05\pm1.52$ & $71.05\pm0.91$ & $\first{54.95\pm1.33}$ & $56.86\pm0.94$ & $\third{71.48\pm0.88}$ & $75.43\pm0.48$\\
    DIGL & $81.45\pm1.49$ & $71.31\pm0.76$ & $37.60\pm1.20$ & $74.43\pm0.72$ & $63.93\pm0.95$ & $54.71\pm0.42$\\
    SDRF & $82.70\pm1.78$ & $70.70\pm0.82$ & $39.58\pm 1.33$ & $86.83\pm0.52$ & $70.21\pm0.81$ & $\third{76.48\pm0.39}$  \\
    FoSR & $\first{86.15\pm 1.49}$ & $\third{75.25\pm0.86}^{*}$ & $45.55\pm0.13$ & $\first{90.94\pm 0.47}^{*}$ & $\first{71.96\pm 0.69}^{*}$ & $\second{77.20\pm 0.38}^{*}$ \\
    \hline
    GTR & $\second{86.10\pm 1.76}$ & $\second{75.64\pm0.74}$ & $\second{50.03\pm 1.32}$ & $\second{90.41\pm 0.41}$ & $\second{71.49\pm 0.93}$ & $\first{77.45\pm.039}$ \\
    \hline
\end{tabular}
\label{table:graph_classification_results}
\end{table*}

We evaluate our rewiring heuristic, GTR, as a preprocessing step for training a graph neural network to perform graph classification.
We compare GTR with the following rewiring method: making the last layer fully connected
(Last FA) and making all layers fully connected (All FA) from \cite{alon2021bottleneck},  DIGL from \cite{gasteiger2019digl}, SDRF from \cite{topping2022oversquashing}, and FOSR
 \cite{karhadkar2022firstorder}. We also report results for no rewiring (None). We conduct the same experiment as in \cite{karhadkar2022firstorder} for GTR; see \Cref{table:graph_classification_results} for results. All results except for GTR and those marked with an asterisk are taken from Table 1 of \cite{karhadkar2022firstorder}.

{\bf Datasets.} We test GTR on the same set of graph classification benchmarks as \citet{karhadkar2022firstorder}. All datasets are from the TUDataset~\cite{morris2020tudataset}.

{\bf Experiments.} We compare four types of graph convolutions: GCN~\cite{kipf2017gcn}, Relational-GCN (R-GCN)~\cite{battaglia2018relational}, GIN~\cite{xu2018gin}, and Relational-GIN (R-GIN). Relational graph neural networks perform different aggregation steps for edges of different types. In the case of GTR, we use two edge types: original graph edges and new edges added by the rewiring algorithms. We tune the number of edges added by GTR and fix all other hyperparameters. Full experimental details can be found in \Cref{sec:experimental_details}.

{\bf Results.}
Test accuracies are presented in Table \ref{table:graph_classification_results} and the number of edges added for each graph are reported in \Cref{sec:experimental_details}. We observe the following:
(1) In general, both our GTR and FoSR outperform the rewiring strategies DIGL, SDRF, or no rewiring at all. In particular, for the case of relational versions of GNNs (i.e., R-GCN and R-GIN), these two approaches often out-perform no-rewiring or SDRF by a large margin. Note that SDRF adds edges based on a local curvature criterion; while both FoSR and our GTR can add any edges, taking the global connectivity of graph into account. Table \ref{table:graph_classification_results} shows that both global strategies outperform the local SDRF, especially for the relation-GNN cases.
(2) The performance of our GTR and FoSR are similar for the GIN and R-GIN architectures. On R-GCN however, GTR not only outperforms FoSR, but often by a large margin.

\subsection{Edge Ablation}
\label{sec:edge_ablation}

In \Cref{apx:edge_ablation}, we repeat the experiment from \Cref{sec:graph_classication} but vary the number of edges added. In particular, our experiments suggest that \textit{there is no optimal number of edges to add} that works across datasets. Moreover, \textit{performance does not necessarily increase as total resistance decreases}, which we can see by comparing FoSR and GTR to Every Layer FA in \Cref{table:graph_classification_results}. Therefore, we recommend treating the number of edges added as a hyperparameter to be tuned during training.

\subsection{Hidden Dimension Ablation}
\label{sec:hidden_dim_ablation}

Another method for address oversquashing is to increase the hidden dimension of the GNN~\cite{alon2021bottleneck, digiovanni2023oversquashing}. To compare this method with rewiring, in \Cref{apx:hidden_dim_ablation}, we repeat the experiment from \Cref{sec:graph_classication} but vary both the number of edges added and the hidden dimension. We conclude that \textit{rewiring and increasing the hidden dimension are complementary methods for addressing oversquashing}, as doing either or both increases the performance of GNNs.

\section{Concluding Remarks}

In this paper, we have provided theoretical evidence that effective resistance can be used as a bound on oversquashing between a pair of nodes in a graph, and that the total resistance can be used as a bound of total oversquashing in a graph. We have also empirically demonstrated that lowering total resistance improves the performance of graph neural networks. Indeed, rewiring techniques based on total effective resistance can significantly improve performance of GNN / R-GNNs for graph classification tasks, reinforcing the notion that improving the connectivity of a graph can improve the performance of graph neural networks.

{\bf Limitations and future work.}
We provide theoretical evidence (\Cref{thm:effective_resistance_bound_on_jacobian}) showing that total effective resistance can be used to bound the amount of oversquashing in a graph. This is in contrast to previous work on oversquashing which relates oversquashing to the spectral gap through intuition alone. While we prove that the spectral gap can also be used to bound oversquashing (\Cref{cor:spectral_gap_bound_on_jacobian}), the bound for total resistance is tighter than the bound for the spectral gap.
\par
Despite the theoretical strength of using total resistance over spectral gap for measuring oversquashing, more research is needed to contrast the effects of the two on oversquashing. A challenge to this task is that the total resistance and spectral gap are intimately linked; for example, adding edges to the graph will necessarily both decrease the total resistance and increase the spectral gap. The oversquashing issue becomes more prominent for graphs with long range interactions (e.g., \cite{dwivedi2022long}). Hence it will be interesting to explore a much broader family of graph benchmarks to study the pros and cons of different rewiring methods.

Finally, we also note that currently we employ a greedy approach to identify a collection of edges to be inserted into an input graph as shortcuts. As discussed in \Cref{sec:rewiring total}, finding the $k$ best edges to add to decrease total resistance is NP-Hard~\cite{kooij2023minimizing}, and it is not clear whether such a greedy strategy even leads to an approximation algorithm of selecting the optimal set of $k$ edges to minimizing total effective resistance. We leave the problem of identifying efficient approximation algorithms for the optimal edges or better heuristics for minimizing total effective resistance as a future direction to investigate.

\section*{Acknowledgements}

Mitchell Black and Amir Nayyeri are supported in part by NSF grants CCF-1941086 and CCF-1816442. Zhengchao Wan and Yusu Wang are supported in part by NSF Grants  CCF-2217033 and CCF-2112665, as well as a gift fund from Qualcomm. Mitchell Black would like to thank Yusu Wang for supporting a visit to UCSD where this project was initiated.

\bibliographystyle{icml2023}
\bibliography{example_paper}

\begin{thebibliography}{39}
\providecommand{\natexlab}[1]{#1}
\providecommand{\url}[1]{\texttt{#1}}
\expandafter\ifx\csname urlstyle\endcsname\relax
  \providecommand{\doi}[1]{doi: #1}\else
  \providecommand{\doi}{doi: \begingroup \urlstyle{rm}\Url}\fi

\bibitem[Ahmad et~al.(2021)Ahmad, Jin, Lin, and Tang]{ahmad2021skeleton}
Ahmad, T., Jin, L., Lin, L., and Tang, G.
\newblock Skeleton-based action recognition using sparse spatio-temporal gcn
  with edge effective resistance.
\newblock \emph{Neurocomputing}, 423:\penalty0 389--398, 2021.

\bibitem[Alev et~al.(2018)Alev, Anari, Lau, and Oveis~Gharan]{alev2018graph}
Alev, V.~L., Anari, N., Lau, L.~C., and Oveis~Gharan, S.
\newblock Graph clustering using effective resistance.
\newblock In \emph{9th Innovations in Theoretical Computer Science Conference
  (ITCS 2018)}. Schloss Dagstuhl-Leibniz-Zentrum fuer Informatik, 2018.

\bibitem[Alon \& Yahav(2021)Alon and Yahav]{alon2021bottleneck}
Alon, U. and Yahav, E.
\newblock On the bottleneck of graph neural networks and its practical
  implications.
\newblock In \emph{International Conference on Learning Representations}, 2021.

\bibitem[Arnaiz-Rodr{\'\i}guez et~al.(2022)Arnaiz-Rodr{\'\i}guez, Begga,
  Escolano, and Oliver]{arnaiz2022diffwire}
Arnaiz-Rodr{\'\i}guez, A., Begga, A., Escolano, F., and Oliver, N.~M.
\newblock Diffwire: Inductive graph rewiring via the lov\'asz bound.
\newblock In \emph{The First Learning on Graphs Conference}, 2022.
\newblock URL \url{https://openreview.net/forum?id=IXvfIex0mX6f}.

\bibitem[Banerjee et~al.(2022)Banerjee, Karhadkar, Wang, Alon, and
  Montúfar]{banerjee2022information}
Banerjee, P.~K., Karhadkar, K., Wang, Y.~G., Alon, U., and Montúfar, G.
\newblock Oversquashing in gnns through the lens of information contraction and
  graph expansion.
\newblock In \emph{2022 58th Annual Allerton Conference on Communication,
  Control, and Computing (Allerton)}, pp.\  1--8, 2022.
\newblock \doi{10.1109/Allerton49937.2022.9929363}.

\bibitem[Battaglia et~al.(2018)Battaglia, Hamrick, Bapst, Sanchez-Gonzalez,
  Zambaldi, Malinowski, Tacchetti, Raposo, Santoro, Faulkner,
  et~al.]{battaglia2018relational}
Battaglia, P.~W., Hamrick, J.~B., Bapst, V., Sanchez-Gonzalez, A., Zambaldi,
  V., Malinowski, M., Tacchetti, A., Raposo, D., Santoro, A., Faulkner, R.,
  et~al.
\newblock Relational inductive biases, deep learning, and graph networks.
\newblock \emph{arXiv preprint arXiv:1806.01261}, 2018.

\bibitem[Biggs(1997)]{biggs1997algebraic}
Biggs, N.
\newblock Algebraic potential theory on graphs.
\newblock \emph{Bulletin of the London Mathematical Society}, 29\penalty0
  (6):\penalty0 641--682, 1997.

\bibitem[Cai \& Wang(2020)Cai and Wang]{cai2020note}
Cai, C. and Wang, Y.
\newblock A note on over-smoothing for graph neural networks.
\newblock \emph{arXiv preprint arXiv:2006.13318}, 2020.

\bibitem[Chandra et~al.(1996)Chandra, Raghavan, Ruzzo, Smolensky, and
  Tiwari]{chandra1996resistance}
Chandra, A.~K., Raghavan, P., Ruzzo, W.~L., Smolensky, R., and Tiwari, P.
\newblock The electrical resistance of a graph captures its commute and cover
  times.
\newblock \emph{computational complexity}, 6\penalty0 (4):\penalty0 312--340,
  Dec 1996.
\newblock ISSN 1420-8954.
\newblock \doi{10.1007/BF01270385}.
\newblock URL \url{https://doi.org/10.1007/BF01270385}.

\bibitem[Chung(1996)]{chung1996laplacians}
Chung, F.~R.
\newblock Laplacians of graphs and cheeger’s inequalities.
\newblock \emph{Combinatorics, Paul Erdos is Eighty}, 2\penalty0
  (157-172):\penalty0 13--2, 1996.

\bibitem[Chung(1997)]{chung1997spectral}
Chung, F.~R.
\newblock \emph{Spectral graph theory}, volume~92.
\newblock American Mathematical Soc., 1997.

\bibitem[Deac et~al.(2022)Deac, Lackenby, and
  Veli{\v{c}}kovi{\'c}]{deac2022expander}
Deac, A., Lackenby, M., and Veli{\v{c}}kovi{\'c}, P.
\newblock Expander graph propagation.
\newblock In \emph{The First Learning on Graphs Conference}, 2022.
\newblock URL \url{https://openreview.net/forum?id=IKevTLt3rT}.

\bibitem[{Di Giovanni} et~al.(2023){Di Giovanni}, Giusti, Barbero, Luise, Lio',
  and Bronstein]{digiovanni2023oversquashing}
{Di Giovanni}, F., Giusti, L., Barbero, F., Luise, G., Lio', P., and Bronstein,
  M.
\newblock On over-squashing in message passing neural networks: The impact of
  width, depth, and topology.
\newblock In \emph{Proceedings of the 40th International Conference on Machine
  Learning}, Proceedings of Machine Learning Research. PMLR, 2023.

\bibitem[Doyle \& Snell(1984)Doyle and Snell]{doyle1984random}
Doyle, P.~G. and Snell, J.~L.
\newblock \emph{Random walks and electric networks}, volume~22.
\newblock American Mathematical Soc., 1984.

\bibitem[Dwivedi et~al.(2022)Dwivedi, Ramp{\'a}{\v{s}}ek, Galkin, Parviz, Wolf,
  Luu, and Beaini]{dwivedi2022long}
Dwivedi, V.~P., Ramp{\'a}{\v{s}}ek, L., Galkin, M., Parviz, A., Wolf, G., Luu,
  A.~T., and Beaini, D.
\newblock Long range graph benchmark.
\newblock In \emph{Thirty-sixth Conference on Neural Information Processing
  Systems Datasets and Benchmarks Track}, 2022.
\newblock URL \url{https://openreview.net/forum?id=in7XC5RcjEn}.

\bibitem[Gasteiger et~al.(2019)Gasteiger, Wei{\ss}enberger, and
  G{\"u}nnemann]{gasteiger2019digl}
Gasteiger, J., Wei{\ss}enberger, S., and G{\"u}nnemann, S.
\newblock Diffusion improves graph learning.
\newblock \emph{Advances in neural information processing systems}, 32, 2019.

\bibitem[Ghosh et~al.(2008)Ghosh, Boyd, and Saberi]{ghosh2008minimizing}
Ghosh, A., Boyd, S., and Saberi, A.
\newblock Minimizing effective resistance of a graph.
\newblock \emph{SIAM review}, 50\penalty0 (1):\penalty0 37--66, 2008.

\bibitem[Gilmer et~al.(2017)Gilmer, Schoenholz, Riley, Vinyals, and
  Dahl]{gilmer2017neural}
Gilmer, J., Schoenholz, S.~S., Riley, P.~F., Vinyals, O., and Dahl, G.~E.
\newblock Neural message passing for quantum chemistry.
\newblock In \emph{International conference on machine learning}, pp.\
  1263--1272. PMLR, 2017.

\bibitem[Jambulapati \& Sidford(2021)Jambulapati and
  Sidford]{jambulapati2021ultrasparse}
Jambulapati, A. and Sidford, A.
\newblock Ultrasparse ultrasparsifiers and faster laplacian system solvers.
\newblock In \emph{Proceedings of the 2021 ACM-SIAM Symposium on Discrete
  Algorithms (SODA)}, pp.\  540--559. SIAM, 2021.

\bibitem[Karhadkar et~al.(2022)Karhadkar, Banerjee, and
  Montúfar]{karhadkar2022firstorder}
Karhadkar, K., Banerjee, P.~K., and Montúfar, G.
\newblock Fosr: First-order spectral rewiring for addressing oversquashing in
  gnns, 2022.
\newblock URL \url{https://arxiv.org/abs/2210.11790}.

\bibitem[Kipf \& Welling(2017)Kipf and Welling]{kipf2017gcn}
Kipf, T.~N. and Welling, M.
\newblock Semi-supervised classification with graph convolutional networks.
\newblock In \emph{International Conference on Learning Representations}, 2017.
\newblock URL \url{https://openreview.net/forum?id=SJU4ayYgl}.

\bibitem[Kirchhoff(1847)]{kirchhoff1847resistance}
Kirchhoff, G.
\newblock Ueber die aufl{\"o}sung der gleichungen, auf welche man bei der
  untersuchung der linearen vertheilung galvanischer str{\"o}me gef{\"u}hrt
  wird.
\newblock \emph{Annalen der Physik}, 148:\penalty0 497--508, 1847.

\bibitem[Kooij \& Achterberg(2023)Kooij and Achterberg]{kooij2023minimizing}
Kooij, R.~E. and Achterberg, M.~A.
\newblock Minimizing the effective graph resistance by adding links is np-hard,
  2023.

\bibitem[Lee et~al.(2014)Lee, Gharan, and Trevisan]{lee2014multiway}
Lee, J.~R., Gharan, S.~O., and Trevisan, L.
\newblock Multiway spectral partitioning and higher-order cheeger inequalities.
\newblock \emph{Journal of the ACM (JACM)}, 61\penalty0 (6):\penalty0 1--30,
  2014.

\bibitem[Lipman et~al.(2010)Lipman, Rustamov, and
  Funkhouser]{lipman2010biharmonic}
Lipman, Y., Rustamov, R.~M., and Funkhouser, T.~A.
\newblock Biharmonic distance.
\newblock \emph{ACM Trans. Graph.}, 29\penalty0 (3), Jul 2010.
\newblock ISSN 0730-0301.
\newblock \doi{10.1145/1805964.1805971}.
\newblock URL \url{https://doi.org/10.1145/1805964.1805971}.

\bibitem[Lov{\'a}sz(1993)]{lovasz1993random}
Lov{\'a}sz, L.
\newblock Random walks on graphs.
\newblock \emph{Combinatorics, Paul erdos is eighty}, 2\penalty0
  (1-46):\penalty0 4, 1993.

\bibitem[Lyons \& Peres(2017)Lyons and Peres]{lyons2017probability}
Lyons, R. and Peres, Y.
\newblock \emph{Probability on trees and networks}, volume~42.
\newblock Cambridge University Press, 2017.

\bibitem[McCallum et~al.(2000)McCallum, Nigam, Rennie, and
  Seymore]{mccallum2000cora}
McCallum, A.~K., Nigam, K., Rennie, J., and Seymore, K.
\newblock Automating the construction of internet portals with machine
  learning.
\newblock \emph{Information Retrieval}, 3\penalty0 (2):\penalty0 127--163, Jul
  2000.
\newblock ISSN 1573-7659.
\newblock \doi{10.1023/A:1009953814988}.
\newblock URL \url{https://doi.org/10.1023/A:1009953814988}.

\bibitem[M{\'e}moli et~al.(2022)M{\'e}moli, Wan, and
  Wang]{memoli2022persistent}
M{\'e}moli, F., Wan, Z., and Wang, Y.
\newblock Persistent laplacians: Properties, algorithms and implications.
\newblock \emph{SIAM Journal on Mathematics of Data Science}, 4\penalty0
  (2):\penalty0 858--884, 2022.

\bibitem[Morris et~al.(2020)Morris, Kriege, Bause, Kersting, Mutzel, and
  Neumann]{morris2020tudataset}
Morris, C., Kriege, N.~M., Bause, F., Kersting, K., Mutzel, P., and Neumann, M.
\newblock Tudataset: A collection of benchmark datasets for learning with
  graphs.
\newblock In \emph{ICML 2020 Workshop on Graph Representation Learning and
  Beyond (GRL+ 2020)}, 2020.
\newblock URL \url{www.graphlearning.io}.

\bibitem[Oono \& Suzuki(2020)Oono and Suzuki]{oono2020Graph}
Oono, K. and Suzuki, T.
\newblock Graph neural networks exponentially lose expressive power for node
  classification.
\newblock In \emph{International Conference on Learning Representations}, 2020.
\newblock URL \url{https://openreview.net/forum?id=S1ldO2EFPr}.

\bibitem[Scarselli et~al.(2008)Scarselli, Gori, Tsoi, Hagenbuchner, and
  Monfardini]{scarselli2008graph}
Scarselli, F., Gori, M., Tsoi, A.~C., Hagenbuchner, M., and Monfardini, G.
\newblock The graph neural network model.
\newblock \emph{IEEE transactions on neural networks}, 20\penalty0
  (1):\penalty0 61--80, 2008.

\bibitem[Spielman(2019)]{spielman2019sagt}
Spielman, D.
\newblock Spectral and algebraic graph theory.
\newblock Available at http://cs-www.cs.yale.edu/homes/spielman/sagt/sagt.pdf
  (2021/12/01), 2019.

\bibitem[Spielman \& Srivastava(2011)Spielman and
  Srivastava]{spielman2011sparsifiers}
Spielman, D.~A. and Srivastava, N.
\newblock Graph sparsification by effective resistances.
\newblock \emph{SIAM Journal on Computing}, 40\penalty0 (6):\penalty0
  1913--1926, 2011.
\newblock \doi{10.1137/080734029}.
\newblock URL \url{https://doi.org/10.1137/080734029}.

\bibitem[Spielman \& Teng(2004)Spielman and Teng]{spielman2004solver}
Spielman, D.~A. and Teng, S.-H.
\newblock Nearly-linear time algorithms for graph partitioning, graph
  sparsification, and solving linear systems.
\newblock In \emph{Proceedings of the Thirty-Sixth Annual ACM Symposium on
  Theory of Computing}, STOC '04, pp.\  81–90, New York, NY, USA, 2004.
  Association for Computing Machinery.
\newblock ISBN 1581138520.
\newblock \doi{10.1145/1007352.1007372}.
\newblock URL \url{https://doi.org/10.1145/1007352.1007372}.

\bibitem[Topping et~al.(2021)Topping, Di~Giovanni, Chamberlain, Dong, and
  Bronstein]{topping2022oversquashing}
Topping, J., Di~Giovanni, F., Chamberlain, B.~P., Dong, X., and Bronstein,
  M.~M.
\newblock Understanding over-squashing and bottlenecks on graphs via curvature.
\newblock In \emph{International Conference on Learning Representations}, 2021.

\bibitem[Velingker et~al.(2022)Velingker, Sinop, Ktena, Veličković, and
  Gollapudi]{velingker2022affinity}
Velingker, A., Sinop, A.~K., Ktena, I., Veličković, P., and Gollapudi, S.
\newblock Affinity-aware graph networks, 2022.

\bibitem[Xu et~al.(2018)Xu, Li, Tian, Sonobe, Kawarabayashi, and
  Jegelka]{xu2018jumpingknowledge}
Xu, K., Li, C., Tian, Y., Sonobe, T., Kawarabayashi, K.-i., and Jegelka, S.
\newblock Representation learning on graphs with jumping knowledge networks.
\newblock In Dy, J. and Krause, A. (eds.), \emph{Proceedings of the 35th
  International Conference on Machine Learning}, volume~80 of \emph{Proceedings
  of Machine Learning Research}, pp.\  5453--5462. PMLR, 10--15 Jul 2018.
\newblock URL \url{https://proceedings.mlr.press/v80/xu18c.html}.

\bibitem[Xu et~al.(2019)Xu, Hu, Leskovec, and Jegelka]{xu2018gin}
Xu, K., Hu, W., Leskovec, J., and Jegelka, S.
\newblock How powerful are graph neural networks?
\newblock In \emph{International Conference on Learning Representations}, 2019.
\newblock URL \url{https://openreview.net/forum?id=ryGs6iA5Km}.

\end{thebibliography}

\newpage
\appendix
\onecolumn

\section{Proofs}

\subsection{Proof of \Cref{lm:normalized R}}
\label{apx:alternative_formula_effective_resistance}

\normalizedr*

\begin{proof}
We will prove this using an alternative but well-known characterization of effective resistance in terms of $uv$-flows. First, we must define another matrix associated with a graph. Let $\boundary$ be the $n\times m$ \textit{\textbf{boundary matrix}} of the graph $G$, where $n:=|V|$ and $m:=|E|$. The matrix $\boundary$ is defined such that for an edge $e=\{u,v\}$, the column $\boundary 1_e = 1_u - 1_v$. (The order of $u$ and $v$ is arbitrary for what follows.)
\par
Many of the definitions in this paper can alternatively be expressed in terms of the boundary matrix. The Laplacian can be expressed $L=\boundary\boundary^{T}$, the normalized Laplacian $\hat{L} = D^{-1/2}\boundary(D^{-1/2}\boundary)^{T}$, and the effective resistance $R_{u,v} = \min\{\|f\|^{2} : \boundary f = (1_u-1_v) \}$. Phrased differently, the effective resistance between $u$ and $v$ is the minimum squared-2-norm of any $uv$-flow. This characterization of the effective resistance follows from the general fact that for any matrix $AA^{T}$ and any vector $x\in\im A$ we have that $x^{T}(AA^{T})^{+} x= (A^{+}x)^{T}(A^{+}x)=\min\{\|y\|^{2} : Ay=x\}$. The proof of the current lemma just applies this fact twice.
\begin{align*}
    R_{u,v} =& (1_u-1_v)^{T} L^{+}(1_u-1_v) \\
    =& \min\{\|f\|^{2} : \boundary f = (1_u - 1_v)\} \\
    =& \min\{\|f\|^{2} : D^{-1/2}\boundary f = D^{-1/2}(1_u - 1_v)\} \tag{as $D^{-1/2}$ is bijective}\\
    =& \left(\frac{1}{\sqrt{d_u}}1_u - \frac{1}{\sqrt{d_v}}1_v\right)^{T}\hat{L}^{+}\left(\frac{1}{\sqrt{d_u}}1_u - \frac{1}{\sqrt{d_v}}1_v\right).
\end{align*}
\end{proof}

\subsection{Proof of \Cref{lem:bound_on_Jacobian}}

\boundonjacobian*

\begin{proof}
We prove this by induction on the layer $r$. For the base case of $r=0$, either $u=v$ or $u\neq v$; in the first case
$$\frac{\partial h_{u}^{(0)}}{\partial x_v} = \frac{\partial x_v}{\partial x_v} = \mathrm{Id}_{d\times d},$$
and in the second case,
$$\frac{\partial h_{u}^{(0)}}{\partial x_v} = \frac{\partial x_u}{\partial x_v} = 0_{d\times d}.$$
Therefore,
\begin{equation}\label{eq: Id = 1}
    \norm{\frac{\partial h_{u}^{(0)}}{\partial x_v}}\leq \max\{\norm{\mathrm{Id}_{d\times d}},\norm{0_{d\times d}}\}=1.
\end{equation}
    \par
Assume that the statement holds for some $r\geq 0$. We now prove the inductive case of $r+1$.
    \begin{align*}
        & \norm{\frac{\partial h_{u}^{(r+1)}}{\partial x_v}} = \norm{\nabla_{1}\phi_{r}\cdot\frac{\partial h_{u}^{(r)}}{\partial x_v}+ \nabla_{2}\phi_{r}\cdot\sum_{w\in\mathcal{N}(u)} \hat{A}_{uw}\cdot \nabla\psi_{r}\cdot\frac{\partial h_{w}^{(r)}}{\partial x_v}}
        \\
        \leq & ~\norm{\nabla_{1}\phi_{r}}\cdot\norm{\frac{\partial h_{u}^{(r)}}{\partial x_v}} + \norm{\nabla_{2}\phi_{r}}\cdot\sum_{w\in\mathcal{N}(u)} \hat{A}_{uw}\norm{\nabla\psi_{r}}\cdot\norm{\frac{\partial h_{w}^{(r)}}{\partial x_v} }  \tag{as $\hat{A}_{uw}$ positive $\forall$ $u,\,w$} \\
        \leq&~ \alpha\cdot\norm{\frac{\partial h_{u}^{(r)}}{\partial x_v}} + \alpha\beta\cdot\sum_{w\in\mathcal{N}(u)} \hat{A}_{uw}\cdot\norm{\frac{\partial h_{w}^{(r)}}{\partial x_v}}  \\
        \leq& ~2^{r}(\alpha\beta)^{r+1}\sum_{l=0}^{r} (\hat{A}^{l})_{uv} + 2^{r}(\alpha\beta)^{r+1} \sum_{l=0}^{r}\sum_{w\in\mathcal{N}(u)} \hat{A}_{uw} (\hat{A}^l)_{wv} \tag{induction}\\
        =& ~2^{r}(\alpha\beta)^{r+1}\sum_{l=0}^{r} (\hat{A}^{l})_{uv} + 2^{r}(\alpha\beta)^{r+1} \sum_{l=1}^{r+1} (\hat{A}^{l})_{uv} \tag{definition of matrix multiplication}\\
        \leq& ~(2\alpha\beta)^{r+1} \sum_{l=0}^{r+1} (\hat{A}^{l})_{uv}.
    \end{align*}
Here $\nabla \phi_r=[\nabla_1\phi_r|\nabla_2\phi_r]$ and $\nabla \psi_r$ denote the Jacobian matrices for $\phi_r$ and $\psi_r$, respectively. $\nabla_1\phi_r$ corresponds to partial derivatives w.r.t. the first several arguments in $\phi_{r}$ corresponding to $h_v^{(r)}$ in the formula $\phi_{r}\!\left(h_v^{(r)},\, \sum_{u\in\mathcal{N}(v)}\hat{A}_{uv}\psi_{l}\left(h_u^{(r)}\right) \right)$ and $\nabla_2\phi_r$ is defined similarly. In the second inequality, we used the fact for 2-norm that $\norm{[A|B]}\geq \max\{\norm{A},\norm{B}\}$. In the third inequality, we used the fact that $\beta\geq 1$ and in this way we have that $\alpha\leq \alpha\beta$.
\end{proof}

\subsection{Proof of \Cref{thm:effective_resistance_bound_on_jacobian}}\label{sec:proof of effective bound}
In this section, we provide proofs of \Cref{lem:laplacian_equals_sum_of_adjacency}, \Cref{lem:effective_resistance_sum_of_adjancency} and \Cref{thm:effective_resistance_bound_on_jacobian}.

\laplacianequalssumofadjacency*
\begin{proof}
First, recall that the eigenvalues of $\hat{A}_{r}$ are in the range $(-1,1)$ if $G$ is not bipartite.
Also note that any number $\mu\in (-1,1)$ satisfies $\sum_{j=0}^{\infty} \mu^{j} = \frac{1}{1-\mu}$. We prove the lemma by applying this fact to the spectral decomposition of $\hat{L}^{+}$.
    \begin{align*}
        \hat{L}^{+} =& \sum_{i=2}^{n} \frac{1}{\lambda_i} z_iz_i^{T}
        = \sum_{i=2}^{n} \frac{1}{1-\mu_i} z_iz_i^{T} \\
        =& \sum_{i=2}^{n} (\sum_{j=0}^{\infty}\mu_{i}^{j}) z_iz_i^{T}
        = \sum_{j=0}^{\infty} \hat{A}^{j}_{r}.
    \end{align*}
\end{proof}

Based on \Cref{lem:laplacian_equals_sum_of_adjacency}, we then prove \Cref{lem:effective_resistance_sum_of_adjancency} below.

\effectiveresistancesumofadjacency*
\begin{proof}
    Observe that
   \begin{align*}
   & ~~\big(\frac{1}{\sqrt{d_u}} 1_u - \frac{1}{\sqrt{d_v}} 1_v \big)^{T} \hat{A}_{r}^{i}\big(\frac{1}{\sqrt{d_u}} 1_u - \frac{1}{\sqrt{d_v}} 1_v \big) \\
   =& ~~\big(\frac{1}{\sqrt{d_u}} 1_u - \frac{1}{\sqrt{d_v}} 1_v\big)^{T} \hat{A}^{i} \big(\frac{1}{\sqrt{d_u}} 1_u - \frac{1}{\sqrt{d_v}} 1_v \big)
   \end{align*}
   for all $i\geq 0$ as $(\frac{1}{\sqrt{d_u}} 1_u - \frac{1}{\sqrt{d_v}} 1_v)^{T} z_1 =0$. We use this equation to alternatively express the effective resistance.
    \begin{align*}
        R_{u,v} =& (\frac{1}{\sqrt{d_u}} 1_u - \frac{1}{\sqrt{d_v}} 1_v)^{T} \hat{L}^{+} (\frac{1}{\sqrt{d_u}} 1_u - \frac{1}{\sqrt{d_v}} 1_v) \\
        =& \sum_{i=0}^{\infty} (\frac{1}{\sqrt{d_u}} 1_u - \frac{1}{\sqrt{d_v}} 1_v)^{T} \hat{A}_{r}^{i} (\frac{1}{\sqrt{d_u}} 1_u - \frac{1}{\sqrt{d_v}} 1_v) \tag{\Cref{lem:laplacian_equals_sum_of_adjacency}}\\
        =& \sum_{i=0}^{\infty} (\frac{1}{\sqrt{d_u}} 1_u - \frac{1}{\sqrt{d_v}} 1_v)^{T} \hat{A}^{i} (\frac{1}{\sqrt{d_u}} 1_u - \frac{1}{\sqrt{d_v}} 1_v) \tag{Above observation}\\
        =& \sum_{i=0}^{\infty} \left(\frac{1}{d_u}(\hat{A}^{i})_{uu} + \frac{1}{d_v} (\hat{A}^{i})_{vv} - \frac{2}{\sqrt{d_ud_v}}  (\hat{A}^{i})_{uv}\right)
    \end{align*}
\end{proof}

Now, we finish proving \Cref{thm:effective_resistance_bound_on_jacobian} as follows.

\effectiveresistanceboundonjacobian*

\begin{proof}
Now, we will combine the equation for effective resistance of \Cref{lem:effective_resistance_sum_of_adjancency} with the bound on the Jacobian matrix of \Cref{lem:bound_on_Jacobian}. This gives us the bound
    \begin{align*}
        \norm{\frac{\partial h^{(r)}_{u}}{\partial x_v} }
        &\leq (2\alpha\beta)^{r}\sum_{l=0}^{r} (\hat{A}^{l})_{uv}. \\
        & \leq (2\alpha\beta)^{r}\cdot \frac{\sqrt{d_ud_v}}{2}\cdot\left(\frac{1}{d_u}\sum_{l=0}^{\infty} (\hat{A}^{l})_{uu} + \frac{1}{d_v}\sum_{l=0}^{\infty} (\hat{A}^{l})_{vv}- \frac{2}{\sqrt{d_ud_v}}\sum_{l=r+1}^{\infty} (\hat{A}^{l})_{uv} - R_{u,v}\right) \\
        & \leq (2\alpha\beta)^{r}\cdot \frac{\dmax}{2}\cdot\left(\frac{1}{d_u}\sum_{l=0}^{\infty} (\hat{A}^{l})_{uu} + \frac{1}{d_v}\sum_{l=0}^{\infty} (\hat{A}^{l})_{vv} -\frac{2}{\sqrt{d_ud_v}}\sum_{l=r+1}^{\infty} (\hat{A}^{l})_{uv} - R_{u,v}\right)
    \end{align*}
    We now simplify some of the terms in this bound. First, we partition the sums in the right-hand side of this equation as
    \begin{align*}
        & \frac{1}{d_u}\sum_{l=0}^{\infty} (\hat{A}^{l})_{uu} + \frac{1}{d_v}\sum_{l=0}^{\infty} (\hat{A}^{l})_{vv} - \frac{2}{\sqrt{d_ud_v}}\sum_{l=r+1}^{\infty} (\hat{A}^{l})_{uv} \\
        =& \left(\frac{1}{d_u}\sum_{l=0}^{r} (\hat{A}^{l})_{uu} + \frac{1}{d_v}\sum_{l=0}^{r} (\hat{A}^{l})_{vv}\right) \\
        &~~+ \left( \frac{1}{d_u}\sum_{l=r+1}^{\infty} (\hat{A}^{l})_{uu} + \frac{1}{d_v}\sum_{l=r+1}^{\infty} (\hat{A}^{l})_{vv} - \frac{2}{\sqrt{d_ud_v}}\sum_{l=r+1}^{\infty} (\hat{A}^{l})_{uv} \right) \\
        =& \left(\frac{1}{d_u}\sum_{l=0}^{r} (\hat{A}^{l})_{uu} + \frac{1}{d_v}\sum_{l=0}^{r} (\hat{A}^{l})_{vv}\right) \\
        &~~+ \left(\frac{1}{\sqrt{d_u}} 1_u - \frac{1}{\sqrt{d_v}} 1_v\right)^{T}\sum_{l=r+1}^{\infty} \hat{A}^{l} \left(\frac{1}{\sqrt{d_u}} 1_u - \frac{1}{\sqrt{d_v}} 1_v\right)
    \end{align*}

Let $\mu = \max\{|\mu_2|,|\mu_n|\}$. We can bound the second term in the above equation using the \textit{\textbf{Courant-Fischer Theorem}}, which says for a symmetric matrix $B$ with maximum eigenvalue $\lambda_{\max}$ and any vector $x$, one has that $x^{T}Bx \leq x^{T}x\cdot\lambda_{\max}$. Then, we have that

    \begin{align*}
         &\left(\frac{1}{\sqrt{d_u}} 1_u - \frac{1}{\sqrt{d_v}} 1_v\right)^{T}\sum_{l=r+1}^{\infty} \hat{A}^{l} \left(\frac{1}{\sqrt{d_u}} 1_u - \frac{1}{\sqrt{d_v}} 1_v\right) \\
         &~~~~\leq ~\left(\frac{1}{d_u} + \frac{1}{d_v}\right)\sum_{l=r+1}^{\infty}\mu^{l}
        ~~\leq ~\mu^{r+1}\left(\frac{1}{d_u} + \frac{1}{d_v}\right) \sum_{l=0}^{\infty} \mu^{l} \\
        &~~~~~\leq ~\mu^{r+1}\left(\frac{1}{d_u} + \frac{1}{d_v}\right) \frac{1}{1-\mu} \tag{as $\mu\in(-1,1)$} \\
        &~~~~~\leq ~\mu^{r+1}\frac{2}{\dmin} \frac{1}{1-\mu}\\
    \end{align*}

    We now bound the first term. Again, we rely on the Courant-Fischer theorem, and note that $\hat{A}^{l}_{uu} = 1_{u}^{T} \hat{A}^{l} 1_{u}$; however, as $1_{u}^{T}z_1\neq 0$, we only get a bound of $\hat{A}^{l}_{uu} \leq 1\cdot 1_{u}^{T}1_{u} = 1$. Thus,
    $$\frac{1}{d_u}\sum_{l=0}^{r} (\hat{A}^{l})_{uu} + \frac{1}{d_v}\sum_{l=0}^{r} (\hat{A}^{l})_{vv} \leq \frac{2}{\dmin} (r+1).$$
\end{proof}

\subsection{Proof of \Cref{thm:change_in_total_resistance}}
\label{apx:change_in_total_resistance}

In this section, we prove \Cref{thm:change_in_total_resistance}, which gives a formula for how much the effective resistance changes when an edge is added.
Recall that our strategy is to apply Woodbury's formula to compute $(L+\frac{11^{T}}{n}+(1_u-1_v)(1_u-1_v)^{T})^{-1}$.
Before doing this, we provide a proof for \Cref{lm:modified formula}.

\modifiedformula*

\begin{proof}
By Equation (7) of \cite{ghosh2008minimizing}, one has that
\[L^{+} = \left(L+\frac{11^T}{n}\right)^{-1}-\frac{11^T}{n}.\]
Then, we have that
\[(L^{+})^2 = \left(L+\frac{11^T}{n}\right)^{-2}-\frac{11^T}{n}\left(L+\frac{11^T}{n}\right)^{-1}-\left(L+\frac{11^T}{n}\right)^{-1}\frac{11^T}{n}+\frac{11^T}{n}.\]
Note that vectors of the form $1_u-1_v$ are orthogonal to the all-ones vector $1$, i.e., $(1_u-1_v)^T1=1^T(1_u-1_v)=0$. Hence
\[R_{u,v}= (1_u - 1_v)^T L^+ (1_u - 1_v)= (1_u - 1_v)^T (L+\frac{11^{T}}{n})^{-1} (1_u - 1_v),\]
and
\[B^{2}_{u,v}= (1_u - 1_v)^T (L^+)^2 (1_u - 1_v)= (1_u - 1_v)^T (L+\frac{11^{T}}{n})^{-2} (1_u - 1_v).\]

Now, by \Cref{lm: Rtot}, one has that
\[\Rtot = n\cdot\tr L^{+} =n\cdot\tr\left(\left(L+\frac{11^T}{n}\right)^{-1}-\frac{11^T}{n}\right)=n\cdot\tr\left(L+\frac{11^T}{n}\right)^{-1}-n.\]
\end{proof}

\changeintotalresistance*

\begin{proof}[Proof of \Cref{thm:change_in_total_resistance}]
Adding the edge $\{u,v\}$ to $G$ changes the Laplacian from $L$ to $L+(1_u-1_v)(1_u-1_v)^{T}$. Then, by \Cref{lm:modified formula}, we can find the difference in the total resistance by considering difference of $\Rtot(G) = n\cdot\tr(L+\frac{11^{T}}{n})^{-1} -n$ and $\Rtot(G\cup\{u,v\}) = n\cdot\tr(L+\frac{11^{T}}{n}+(1_u-1_v)(1_u-1_v)^{T})^{-1}-n.$ The difference of these is the trace of the third term in Woodbury's formula, which simplifies to the quantity in the statement as follows.
    \begin{align*}
        & \Rtot(G) - \Rtot(G\cup\{u,v\}) \\
        =& n\cdot\tr\left(L+\frac{11^{T}}{n}\right)^{-1} - n\cdot\tr\left(L+\frac{11^{T}}{n}+(1_u-1_v)(1_u-1_v)^{T}\right)^{-1} \\
        = & n\cdot\tr\left(\left(1 + (1_u-1_v)^{T}\left(L+\frac{11^{T}}{n}\right)^{-1}(1_u-1_v)\right)^{-1}\cdot \left(\left(L+\frac{11^{T}}{n}\right)^{-1}(1_u-1_v)\right)\left(\left(L+\frac{11^{T}}{n}\right)^{-1}(1_u-1_v)\right)^{T}  \right) \\
        = & n\cdot\underbrace{\left(1 + (1_u-1_v)^{T}\left(L+\frac{11^{T}}{n}\right)^{-1}(1_u-1_v)\right)^{-1}}_{c}\cdot \tr \left(\left(\left(L+\frac{11^{T}}{n}\right)^{-1}(1_u-1_v)\right)\left(\left(L+\frac{11^{T}}{n}\right)^{-1}(1_u-1_v)\right)^{T} \right).
    \end{align*}
    For the coefficient term $c$, one has that
     $$
    \left(1 + (1_u-1_v)^{T}\left(L+\frac{11^{T}}{n}\right)^{-1}(1_u-1_v)\right) = (1+R_{u,v}).
     $$
    For the trace term, one has that
    \begin{align*}
        & \tr\left(\left(\left(L+\frac{11^{T}}{n}\right)^{-1}(1_u-1_v)\right)\left(\left(L+\frac{11^{T}}{n}\right)^{-1}(1_u-1_v)\right)^{T}\right) \\
        &= (1_u-1_v)\left(L+\frac{11^{T}}{n}\right)^{-2}(1_u-1_v)^{T} \\
        &= B^{2}_{u,v}
    \end{align*}
    by the fact that $\tr(xx^{T}) = x^{T}x$ for any vector $x$.
\end{proof}

\section{Runtime Analysis of GTR}
\label{apx:runtime}

\subsection{Asymptotic Analysis}

The time complexity for GTR rewiring depends on the time it takes to (step 1) compute the effective resistance and biharmonic distance for each pair of vertices, (step 2) find the pair of vertices maximizing $R_{u,v}/(1+B^{2}_{u,v})$, and (step 3) update the effective resistance and biharmonic distance.
If we are adding $k$ edges to the graph, the naive implementation for GTR takes $O(n^{3}+kn^{2})$ time. We can compute $L^{+}$ and $L^{2+}$ in $O(n^{3})$ time using the singular value decomposition, which we can use to compute all pairs effective resistance and biharmonic distance in time $O(n^2)$. In total, step (1) would take $O(n^{3} + n^{2})$ time. Step (2) would take $O(n^2)$ time to iterate over all pairs of vertices. Finally, for step (3), we can update $L^{+}$ and $L^{2+}$ in $O(n^2)$ time. This is because adding the edge $\{u,v\}$ to $G$ only causes a constant-rank change to the Laplacian; the Laplacian changes from $L$ to $L+(1_u-1_v)(1_u-1_v)^{T}$ and the squared Laplacian changes from $L^{2}$ to $L^{2}+L(1_u-1_v)(1_u-1_v)^{T} + (1_u-1_v)(1_u-1_v)^{T}L + (1_u-1_v)(1_u-1_v)^{T}$. The pseudoinverse of $L^{+}$ and $L^{2+}$ can then be updated in $O(n^{2})$ using Woodbury's Formula (see \Cref{lm:woodbury}).
\footnote{In general, Woodbury's Formula cannot be used to update the pseudoinverse of a matrix; however, in the special case of adding an edge to a connected graph, it can be used to update the pseudoinverse of $L$ and $L^{2}$. In short, this is because the vector $1_u-1_v$ is orthogonal to the kernels of $L$ and $L^{2}$. See the discussion in \Cref{apx:change_in_total_resistance}.}

However, more efficient implementations for GTR are possible thanks to nearly-linear time Laplacian solvers: algorithms for solving linear systems of the form $Lx=b$ in $O(m\poly\log n)$ time~\cite{spielman2004solver,jambulapati2021ultrasparse}. Using these algorithms, the pseudoinverses $L^{+}$ and $L^{2+}$ could be computed in $O(n\cdot m\poly\log n)$ by using Laplacian solvers to find the columns of the matrix. Alternatively, all-pairs effective resistance and biharmonic distance can be estimated in time $O(m\poly\log n + n^2\poly\log n)$ using an algorithm that combines Laplacian solvers and Johnson-Lindenstrauss random projection~\cite{spielman2011sparsifiers}.

\subsection{Experimental Analysis}

We implemented the GTR algorithm in PyTorch Geometric; our analysis is available here:  \url{https://github.com/blackmit/gtr_rewiring}. The fastest implementation of GTR we found was to use the naive algorithm; this is because we can calculate the pseudoinverse of the Laplacian using a GPU. The following table contains the amount of time needed to compute 50 edges using each algorithm.

\begin{table}[H]
    \centering
    \begin{tabular}{c|cccccc}
         & MUTAG &  PROTEINS & ENZYMES & IMDB-BINARY & REDDIT-BINARY & COLLAB \\
         \hline
        FoSR & 0.10 & 1.00 & 0.37 & 0.51 & 199.20 & 15.94  \\
        GTR & 12.86 & 68.10 & 35.76 & 57.23 & 349.98 & 423.79
    \end{tabular}
    \caption{Time in seconds to compute 50 additional edges to add to the graph using both FoSR and GTR}
    \label{tab:compute_times}
\end{table}

\section{Counterexamples to the Optimality of GTR.}\label{sec:counterexample}

\Cref{thm:change_in_total_resistance} proves that GTR adds the single edge that most decreases the total resistance; however, GTR will not necessarily add the $k$ edges that most decrease total resistance for $k>1$. \Cref{fig:gtr_optimality_counterexample} gives an example where this is the case.

\begin{figure}[ht]
    \centering
    \begin{subfigure}{0.3\textwidth}
        \centering
        \includegraphics[width=1.5in]{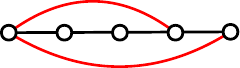}
        \caption*{GTR}
    \end{subfigure}
    \begin{subfigure}{0.3\textwidth}
        \centering
        \includegraphics[width=1.5in]{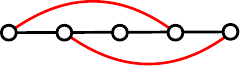}
        \caption*{Optimal}
    \end{subfigure}
    \caption{The path on 5 vertices is a counterexample showing that GTR does not add the $k$ edges that most decrease $\Rtot$ when $k>1$. Left: The two edges added by GTR. GTR first adds the edge connecting the first and last vertex in the path. The total resistance of this graph is $\Rtot\approx8.18$. Right: The two edges that most deceases the total resistance. The total resistance of this graph is $\Rtot\approx 7.67$.}
    \label{fig:gtr_optimality_counterexample}
\end{figure}

The amount an edge decreases the total resistance can increase as more edges are added to the graph. \Cref{fig:monotoncity_counterexample} gives such an example. This can be interpreted as an edge becoming more important for the global topology of the graph as the graph changes.

\begin{figure*}[ht]
    \centering
    \begin{subfigure}{1.0\textwidth}
        \centering
        \includegraphics[width=6.5in]{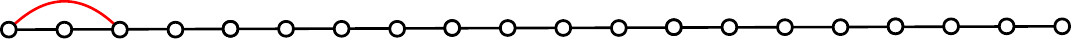}
    \end{subfigure}
    \begin{subfigure}{1.0\textwidth}
        \centering
        \vspace{0.25in}
    \end{subfigure}
    \begin{subfigure}{1.0\textwidth}
        \centering
        \includegraphics[width=6.5in]{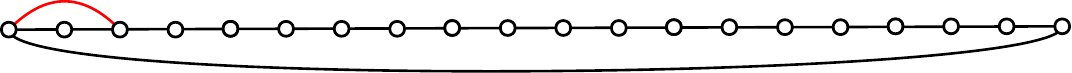}
    \end{subfigure}
    \caption{The path on 20 vertices is an example showing that the amount an edge decreases the total resistance is not monotonic. Top: Adding the red edge would decrease the total resistance by $\approx 30.33$. Bottom: After adding the edge connecting the first and last vertex in the path, adding the red edge would decrease the total resistance by $\approx 40.17$}
    \label{fig:monotoncity_counterexample}
\end{figure*}

\section{Experimental Details}
\label{sec:experimental_details}

We use the same configuration of hyperparameters as in~\cite{karhadkar2022firstorder}. We use randomly generated $80\%/10\%/10\%$ train/validation/test splits of the data. We use the Adam optimizer and the \texttt{ReduceLROnPlateau} scheduler in Torch that reduces the learning rate after 10 epochs without an improvement in the validation accuracy. We use a stopping patience of 100 epochs of the validation loss. For the hyperparameter search, we consider average accuracies over 10 randomly generated splits of the data. For the test results, we report the average test accuracy and $95\%$ confidence intervals over 100 randomly generated splits.

\begin{table}[htb!]
\centering
\caption{Number of edges added by GTR of FoSR for each dataset. Note that FoSR only contains the number of edges when our run in the edge ablation experiment (\Cref{sec:edge_ablation}) outperformed the run in \cite{karhadkar2022firstorder}. The number of edges added by FoSR for all other experiments can be found in the appendix to \cite{karhadkar2022firstorder}.}
\begin{tabular}{c c c c c c c}
    \hline
    &&& GCN &&& \\
    \hline
    Rewiring & Mutag & Proteins & Enzymes & Reddit-Binary & IMDB-Binary & Collab \\
    \hline
    GTR & 45 & 25  & 20 & 5 & 5 & 5 \\
    \hline
    &&& R-GCN &&& \\
    \hline
    Rewiring & Mutag & Proteins & Enzymes & Reddit-Binary & IMDB-Binary & Collab \\
    \hline
    GTR & 50 & 10 & 40 & 20 & 40 & 25 \\
    \hline
    &&& GIN &&& \\
    \hline
    Rewiring & Mutag & Proteins & Enzymes & Reddit-Binary & IMDB-Binary & Collab \\
    \hline
    GTR & 25 & 5 & 5 & 5 & 15 & 25 \\
    \hline
    &&& R-GIN &&& \\
    \hline
    Rewiring & Mutag & Proteins & Enzymes & Reddit-Binary & IMDB-Binary & Collab \\
    \hline
    FoSR & - & 20 & - & 25 & 50 & 20 \\
    GTR & 15 & 5  & 50 & 5 & 20 & 30 \\
    \hline
\end{tabular}
\label{table:gtr_number_edges_added}
\end{table}

\begin{table}[htb!]
\centering
\caption{Hyperparameters for Graph Classifcation. These are consistent across all GNN types. These are the same as used in the experiments in~\cite{karhadkar2022firstorder}}
\begin{tabular}{|c | c |}
    \hline
    Hyperparameters & \\
    \hline
    Number of Hidden Layers &  4 \\
    Dimension of Hidden Layers & 64 \\
    Dropout & 0.5 \\
    Learning Rate & $1.0\times 10^{-3}$ \\
    \hline
\end{tabular}
\label{table:hyperparameters}
\end{table}

\section{Total Resistance vs.~Number of Edges Added}
\label{apx:total_resistance_curves}

\Cref{fig:resistance_curves} shows the decrease in average total resistance across a dataset as edges are added to a graph by GTR or FoSR. GTR seems to outperform FoSR in decreasing total resistance.

\section{Edge Ablation}
\label{apx:edge_ablation}

\Cref{fig:edge_ablation} shows the effect of adding between 0 and 50 edges on the classification accuracy across different graph classification datasets. We used the R-GIN architecture for the experiments and followed the same experimental procedure as described in \Cref{sec:experimental_details}.
\par
We see a variety of behaviors across the datasets. For some datasets like Proteins or IMDB-Binary, we see an initial large jump in accuracy after adding a few edges, but generally see little improvement by adding more edges. For Enzymes, the accuracy almost only increases as we add more edges, suggesting that the optimal number of edges was greater than the maximum of 50 we tested. The variety of behaviors suggest that there is no optimal number of edges to add that will maximize performance across datasets. Our experiments also suggest that, while adding some number of edges helps for all datasets, performance doesn't continue to increase as more edges are added.
\par
For almost all datasets, we see the greatest rate of improvement in accuracy after adding a few edges. A possible explanation might be that the rate total resistance decreases is greatest for the first few edges added, as we see in \Cref{fig:resistance_curves}.

\section{Hidden Dimension Ablation}
\label{apx:hidden_dim_ablation}

\Cref{fig:hidden_dim_ablation} shows the effect of adding between 0 and 30 edges and using a hidden dimension of 32, 64, or 128 on graph classification accuracy. We used the R-GIN architecture for these experiments and followed the same experimental procedure as described in \Cref{sec:experimental_details}. Generally, we see that both rewiring and increasing the hidden dimension improve the classification accuracy.

\begin{figure}[ht]
    \centering
    \begin{subfigure}{0.33\textwidth}
        \centering
        \includegraphics[width=\linewidth]{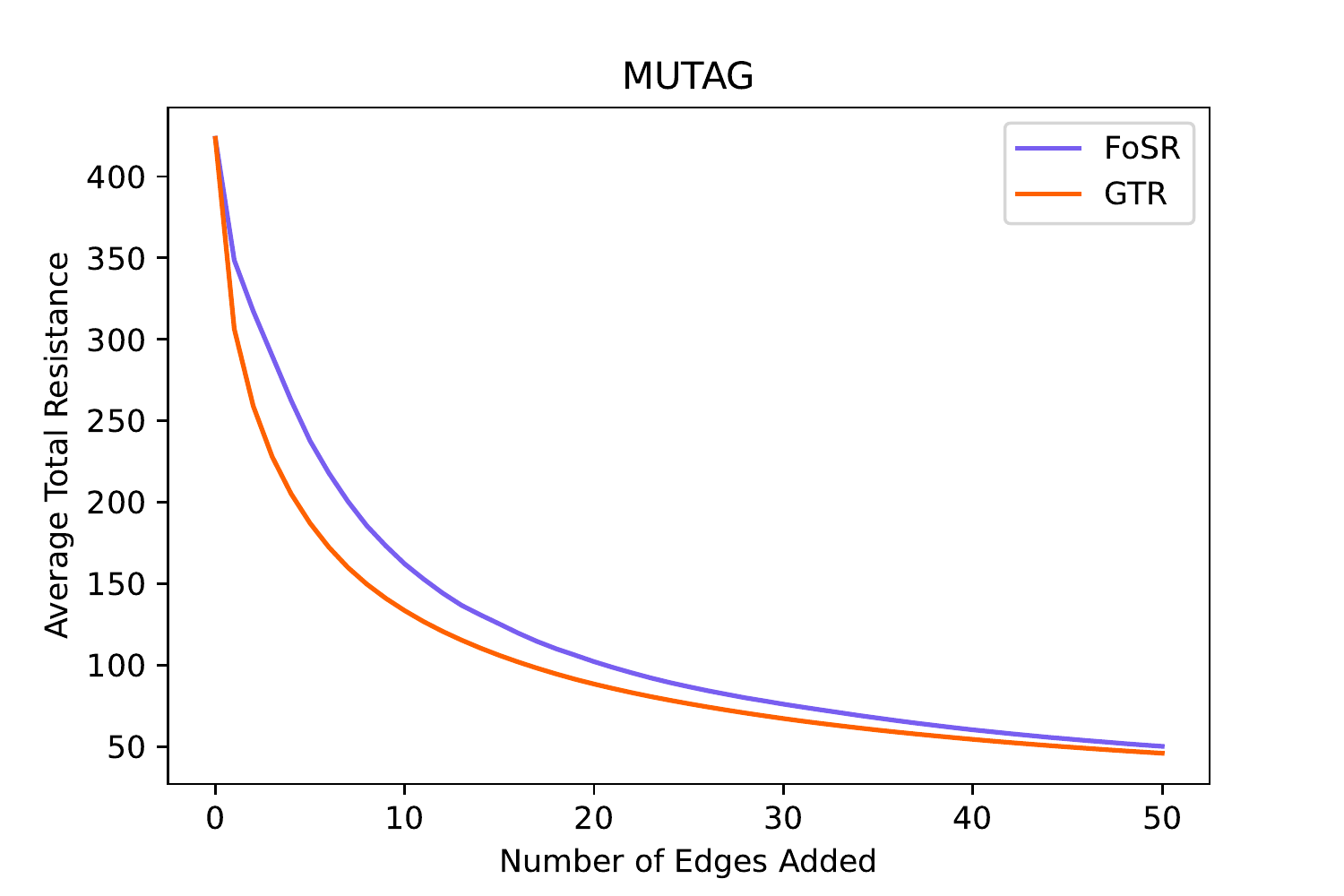}
    \end{subfigure}
    \begin{subfigure}{0.33\textwidth}
        \centering
        \includegraphics[width=\linewidth]{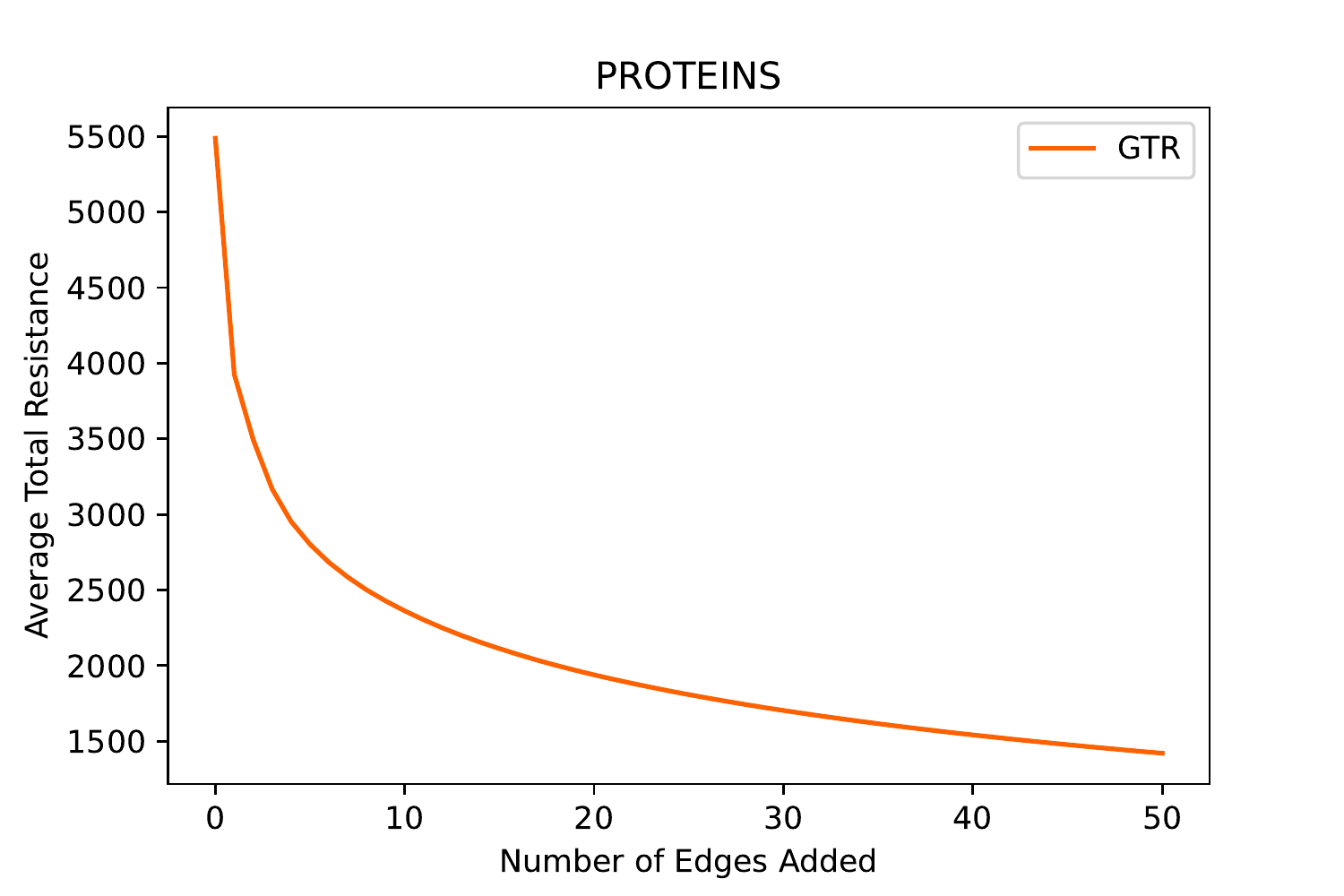}
    \end{subfigure}
    \begin{subfigure}{0.33\textwidth}
        \centering
        \includegraphics[width=\linewidth]{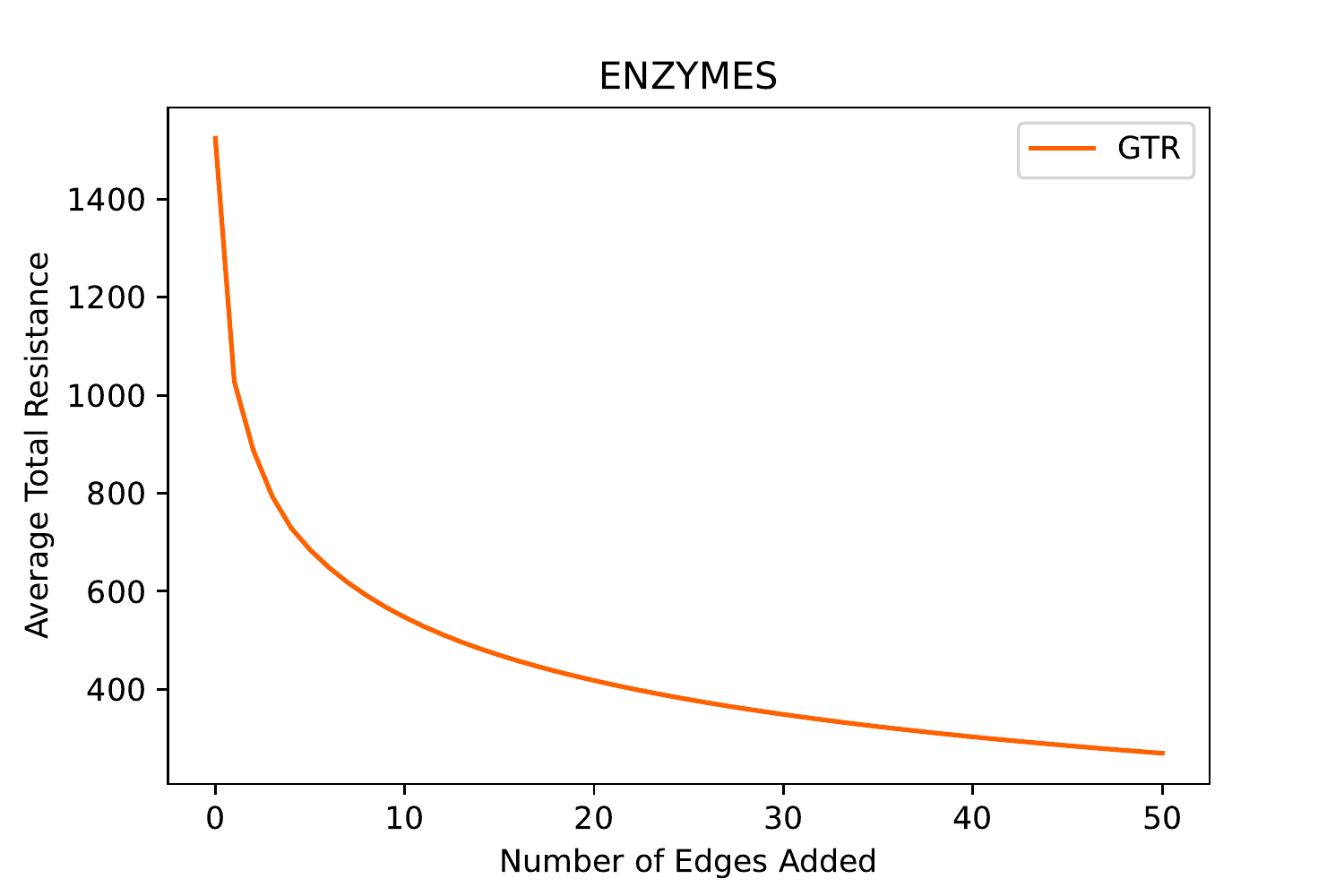}
    \end{subfigure}
    \begin{subfigure}{0.33\textwidth}
        \centering
        \includegraphics[width=\linewidth]{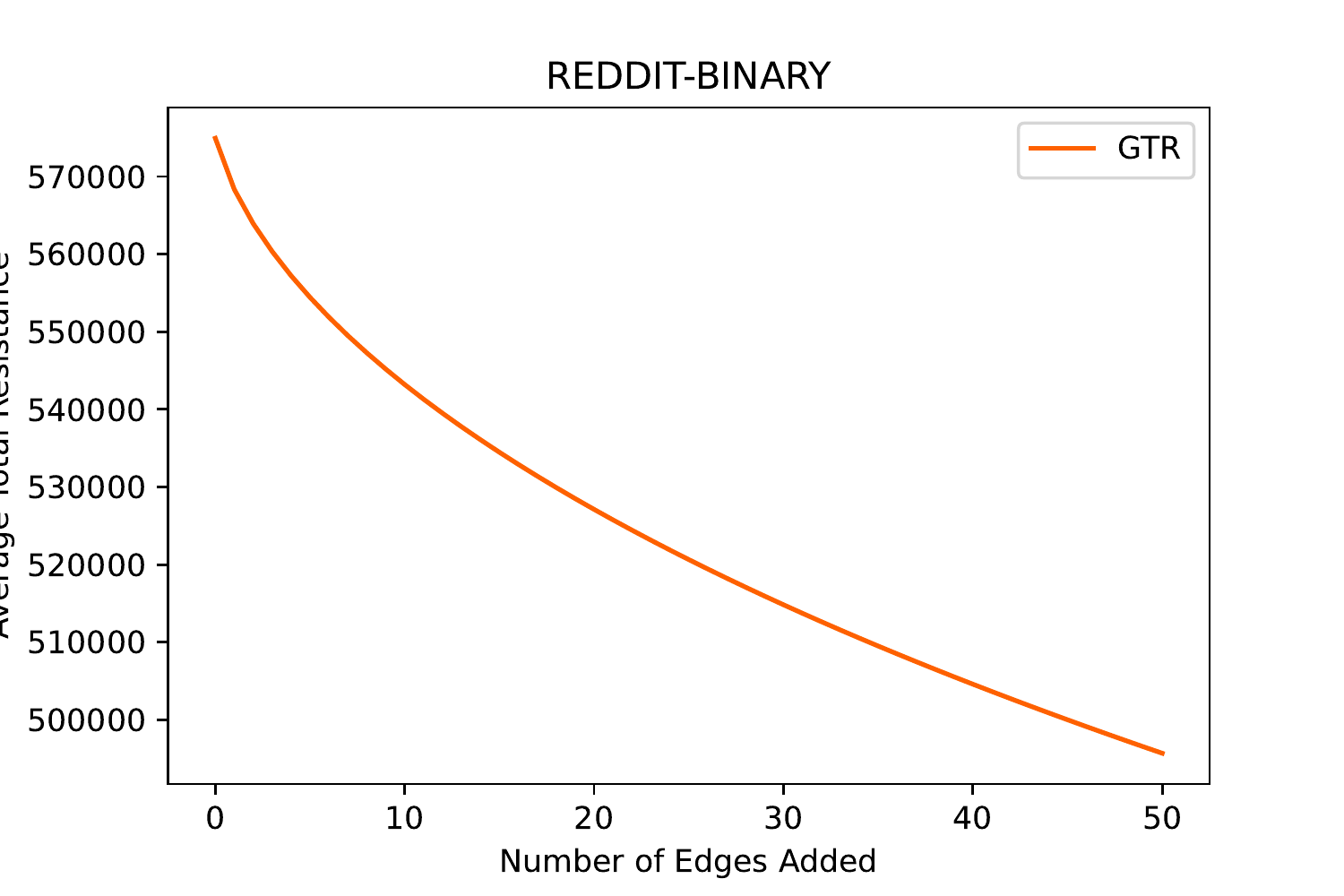}
    \end{subfigure}
    \begin{subfigure}{0.33\textwidth}
        \centering
        \includegraphics[width=\linewidth]{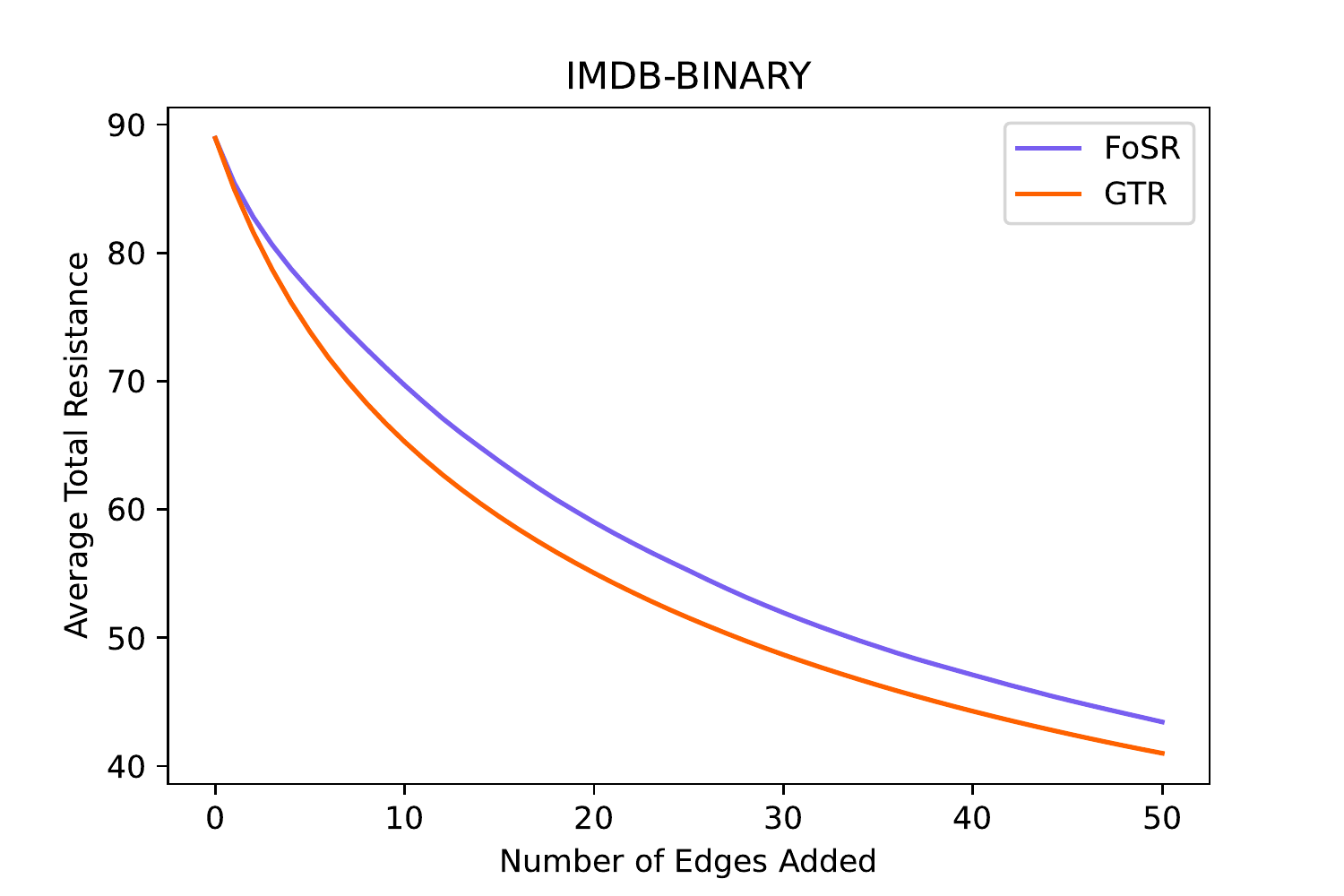}
    \end{subfigure}
    \begin{subfigure}{0.33\textwidth}
        \centering
        \includegraphics[width=\linewidth]{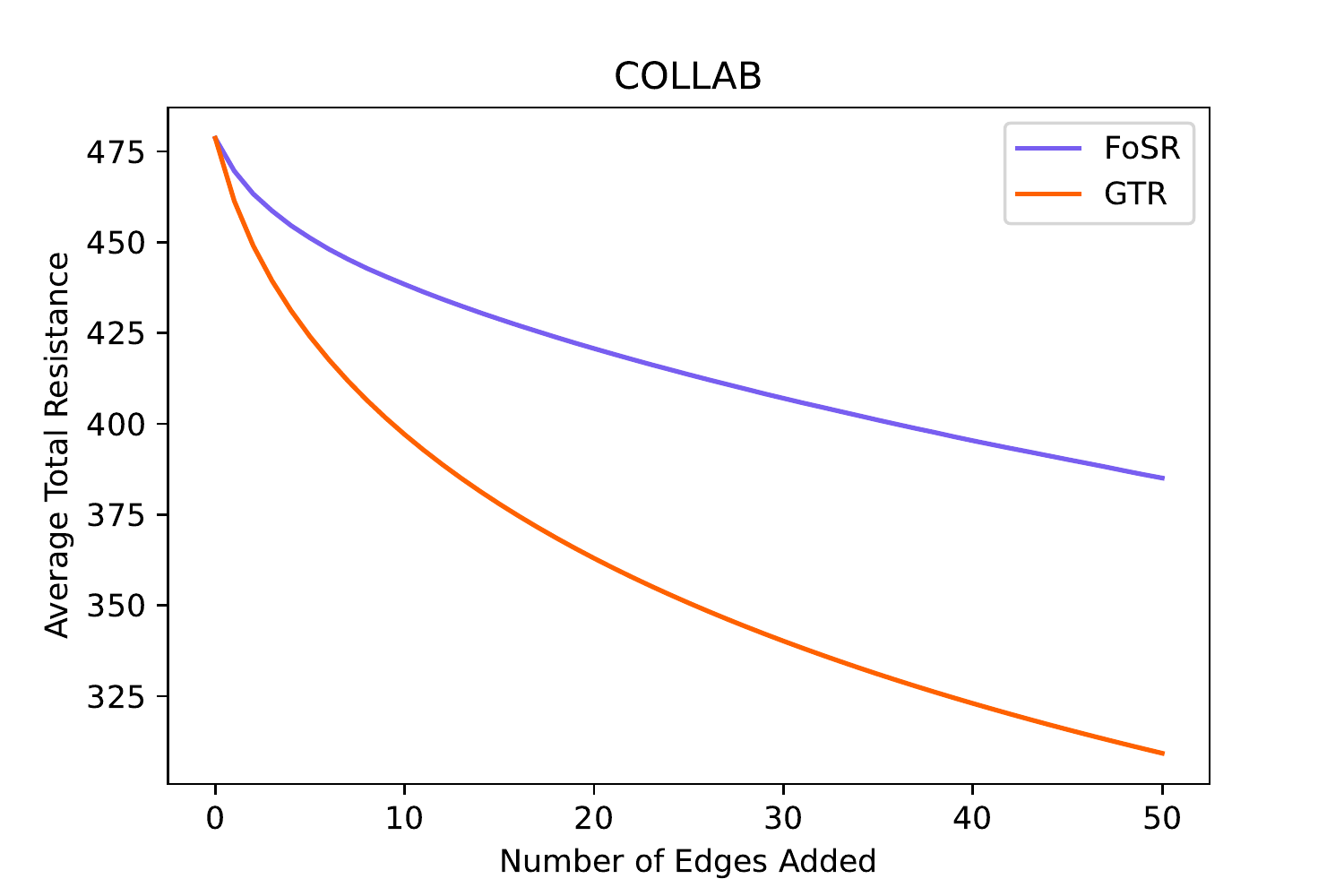}
    \end{subfigure}
    \caption{Plots of average total resistance vs.~number of edges added. For disconnected graphs, plots show the sum of effective resistances for all pairs of vertices in the same connected component, as effective resistance between vertices in different connected components is ill-defined. As FoSR adds edges between different connected components and GTR does not, it would not be meaningful to compare total effective resistance for datasets with disconnected graphs (i.e., Proteins, Enzymes, and IMDB-Binary) as FoSR may connect these disconnected componets, which is why FoSR curves are not reported for these datasets.}
    \label{fig:resistance_curves}
\end{figure}

\begin{figure}[h]
    \centering
    \begin{subfigure}{0.33\textwidth}
        \centering
        \includegraphics[width=\linewidth]{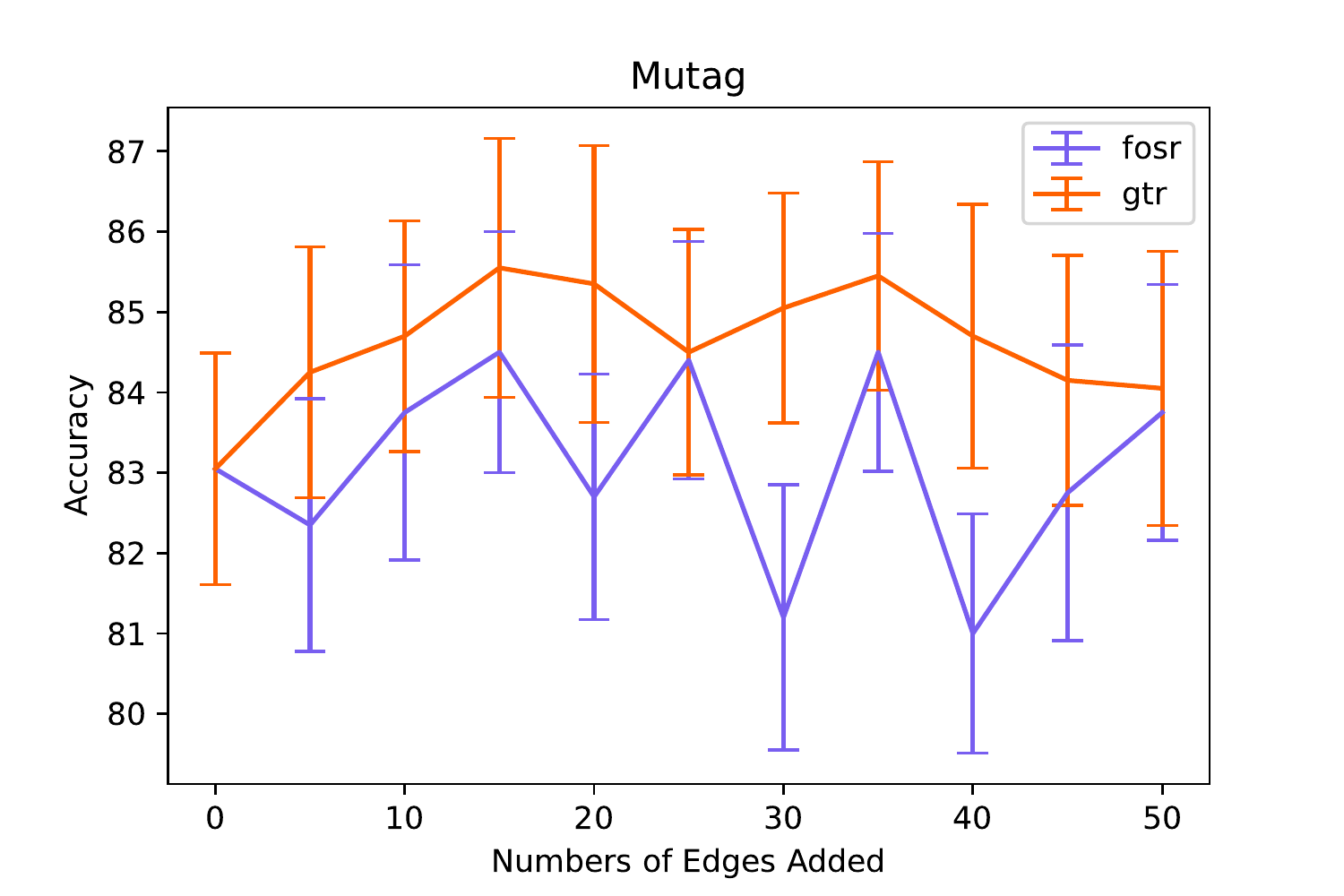}
    \end{subfigure}
    \begin{subfigure}{0.33\textwidth}
        \centering
        \includegraphics[width=\linewidth]{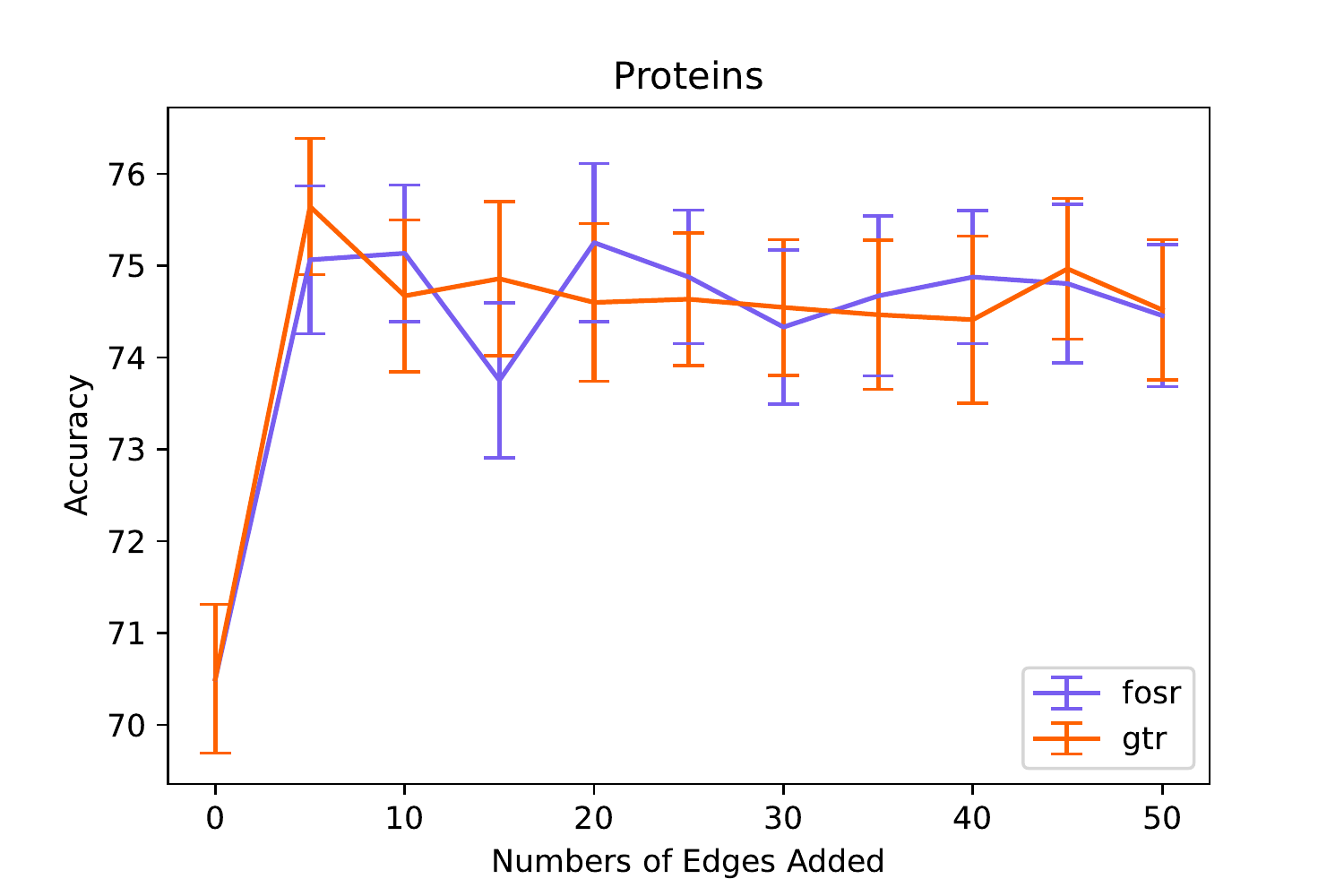}
    \end{subfigure}
    \begin{subfigure}{0.33\textwidth}
        \centering
        \includegraphics[width=\linewidth]{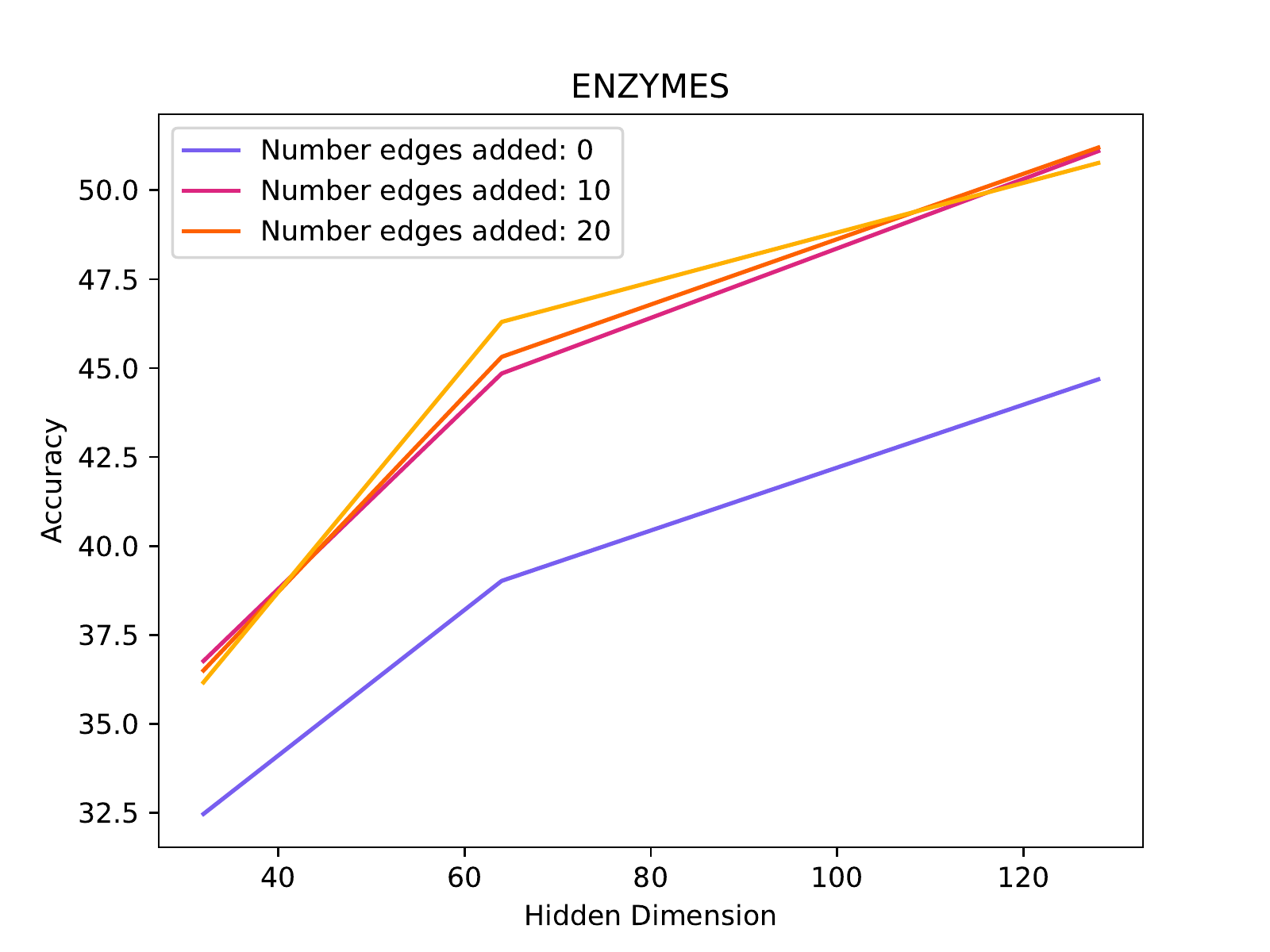}
    \end{subfigure}
    \begin{subfigure}{0.33\textwidth}
        \centering
        \includegraphics[width=\linewidth]{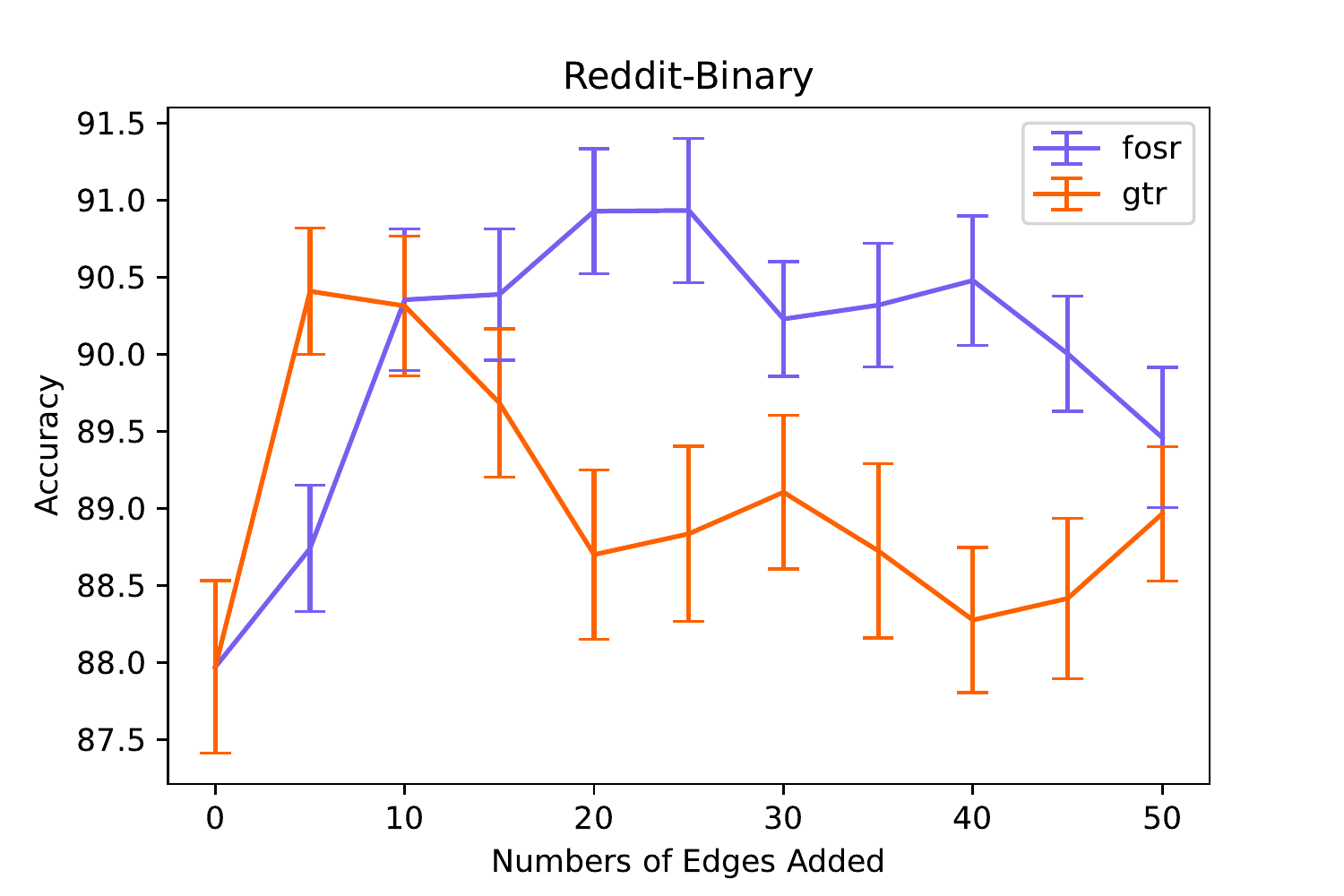}
    \end{subfigure}
    \begin{subfigure}{0.33\textwidth}
        \centering
        \includegraphics[width=\linewidth]{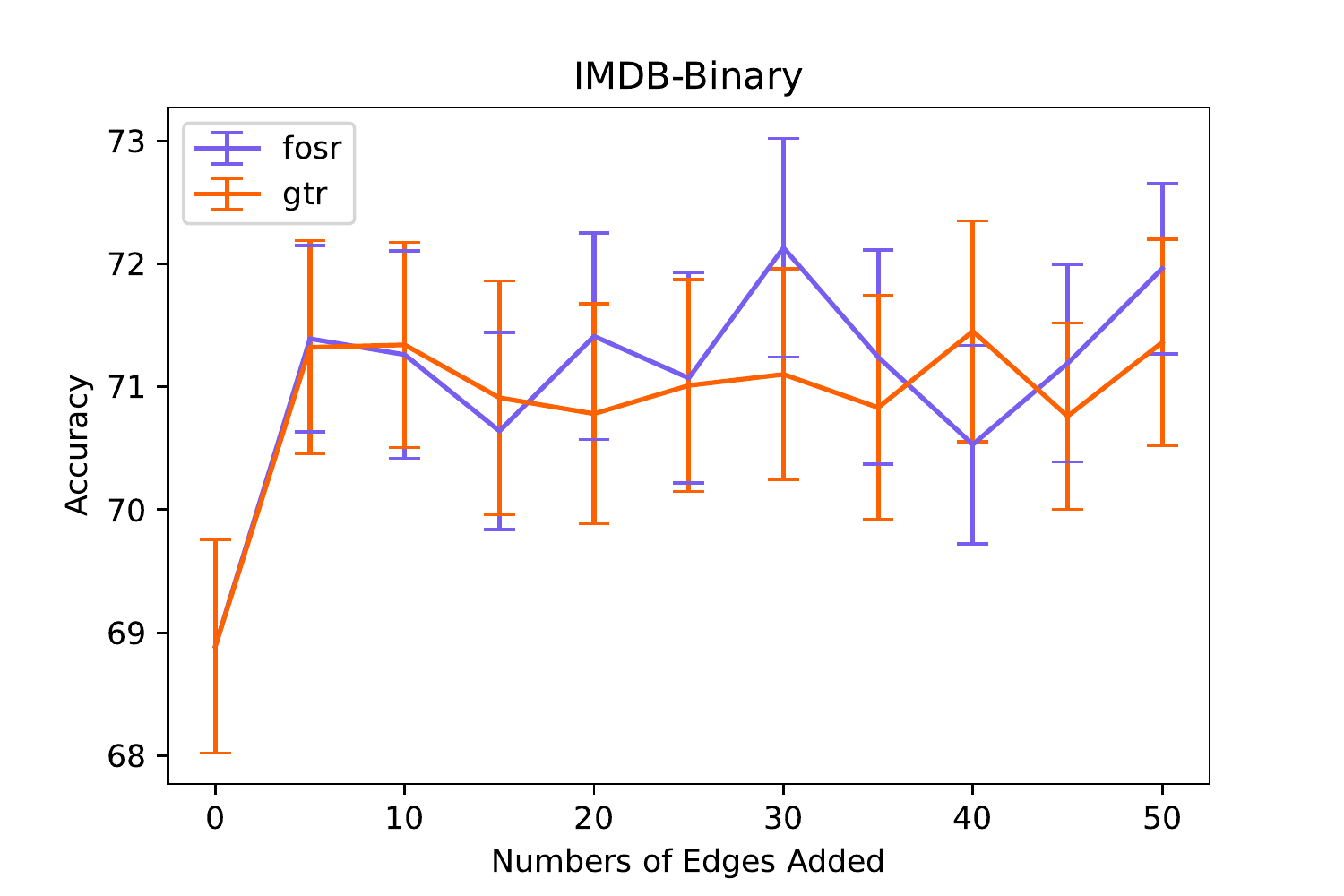}
    \end{subfigure}
    \begin{subfigure}{0.33\textwidth}
        \centering
        \includegraphics[width=\linewidth]{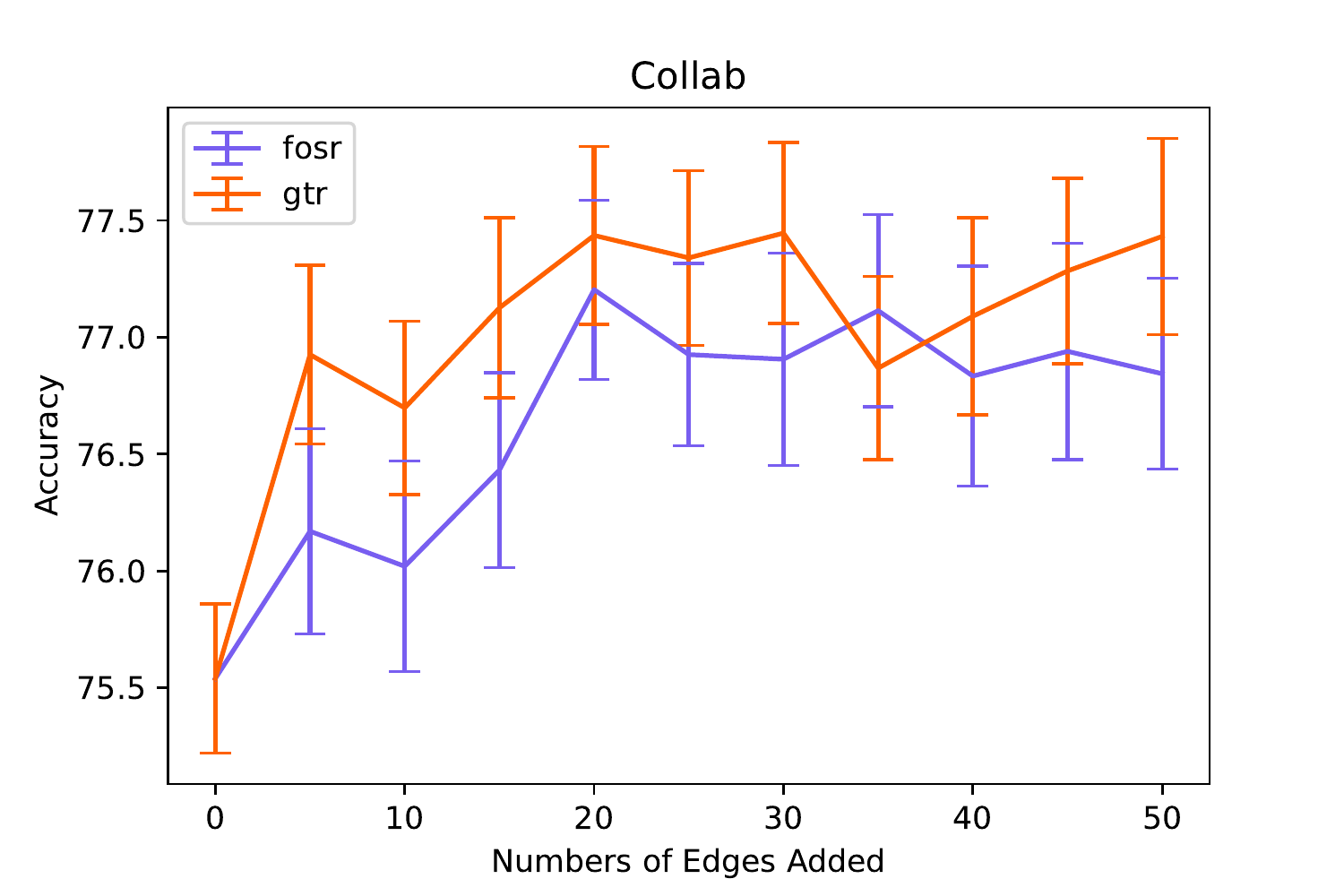}
    \end{subfigure}
    \caption{Plots of graph classification accuracy vs.~number of edges added by GTR or FoSR.}
    \label{fig:edge_ablation}
\end{figure}

\begin{figure}[h]
    \centering
    \begin{subfigure}{0.33\textwidth}
        \centering
        \includegraphics[width=\linewidth]{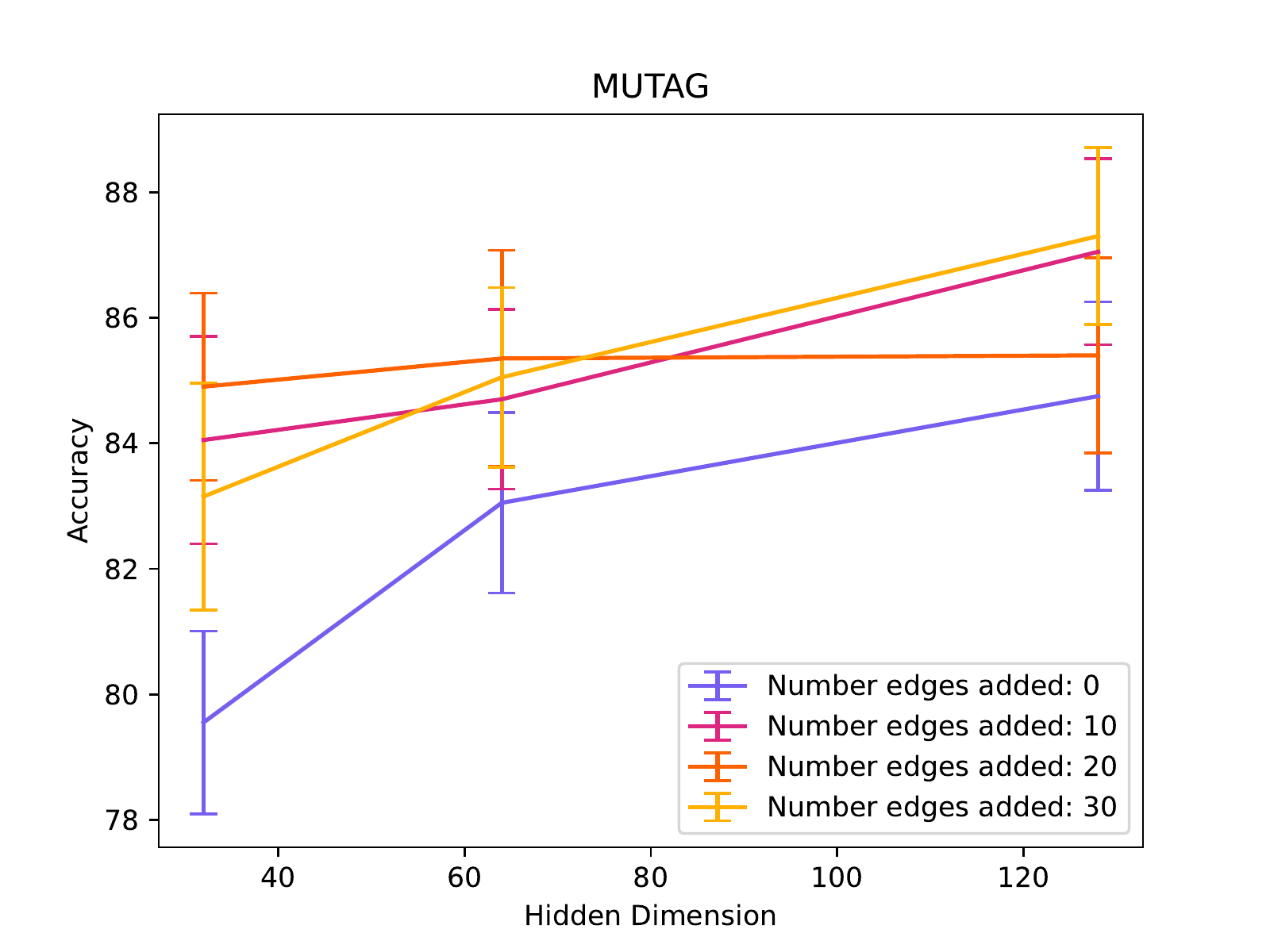}
    \end{subfigure}
    \begin{subfigure}{0.33\textwidth}
        \centering
        \includegraphics[width=\linewidth]{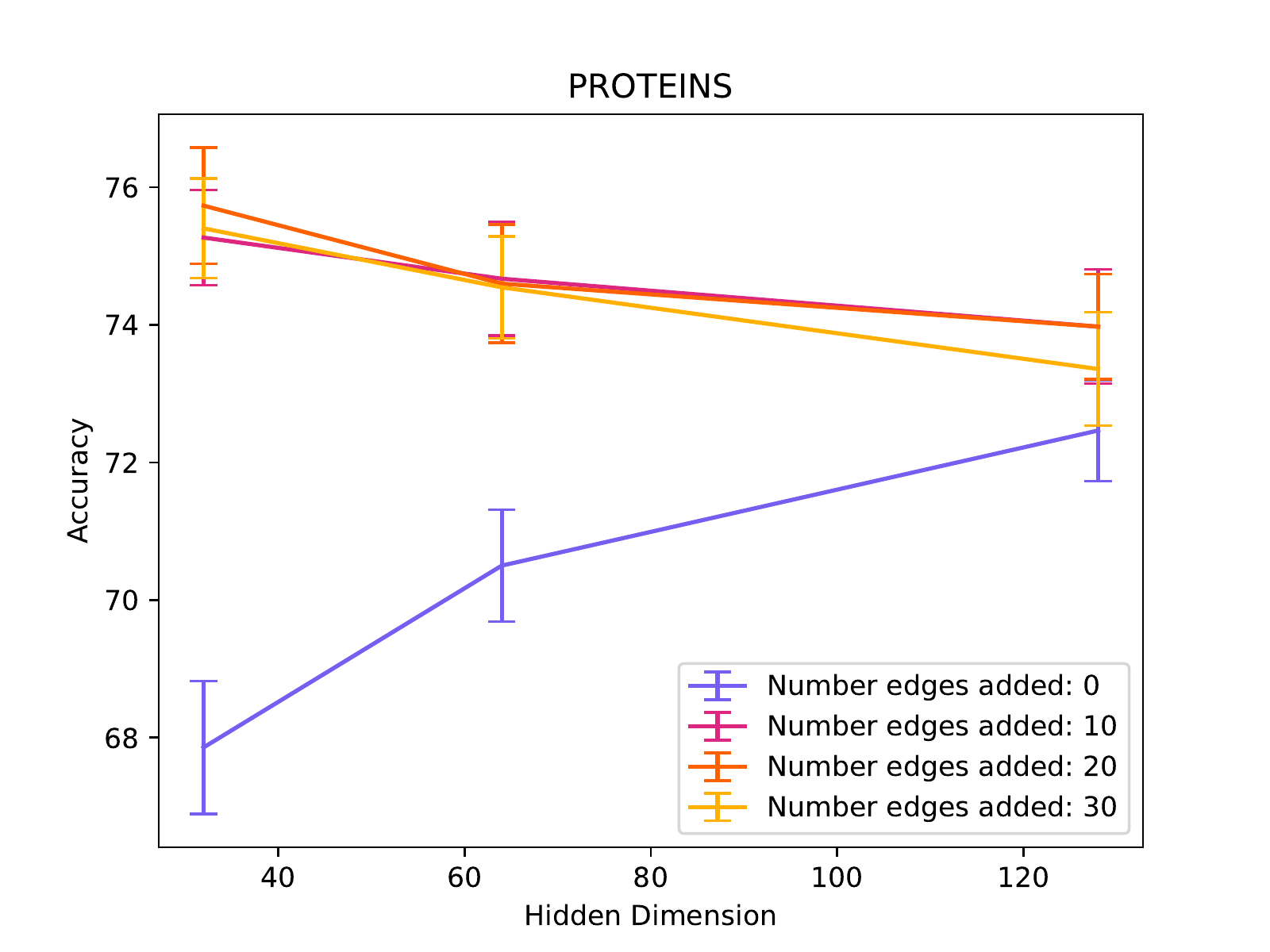}
    \end{subfigure}
    \begin{subfigure}{0.33\textwidth}
        \centering
        \includegraphics[width=\linewidth]{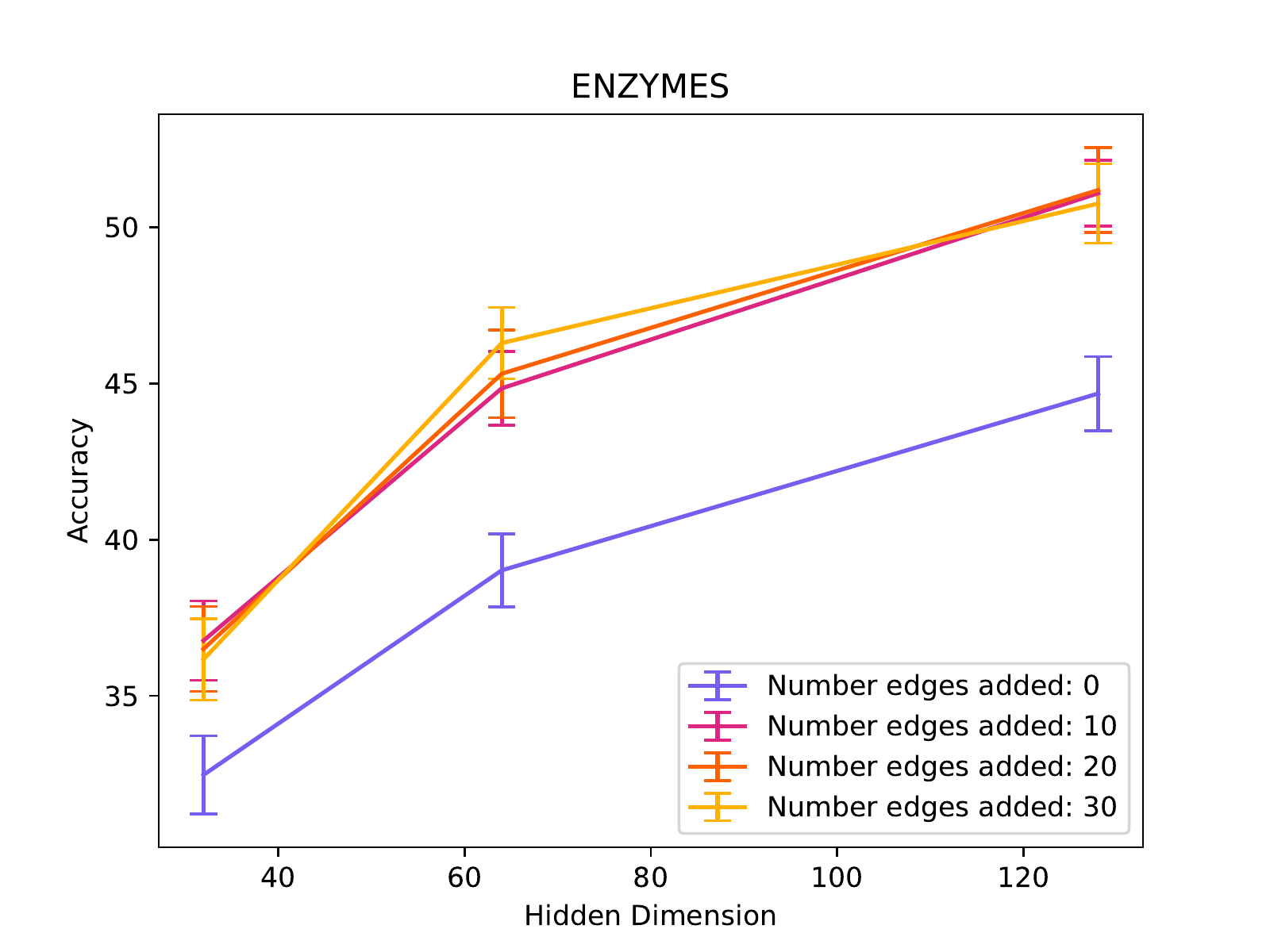}
    \end{subfigure}
    \begin{subfigure}{0.33\textwidth}
        \centering
        \includegraphics[width=\linewidth]{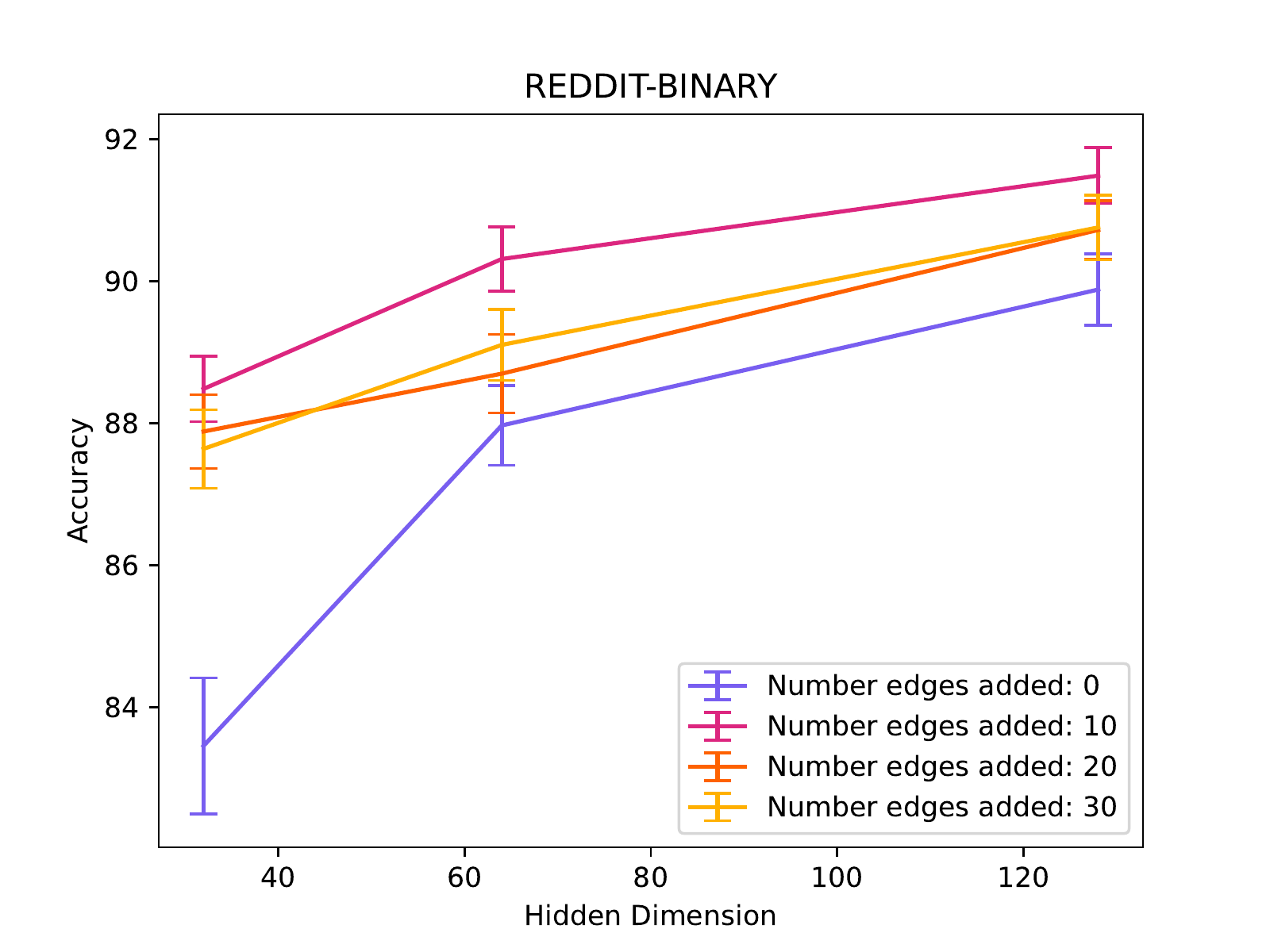}
    \end{subfigure}
    \begin{subfigure}{0.33\textwidth}
        \centering
        \includegraphics[width=\linewidth]{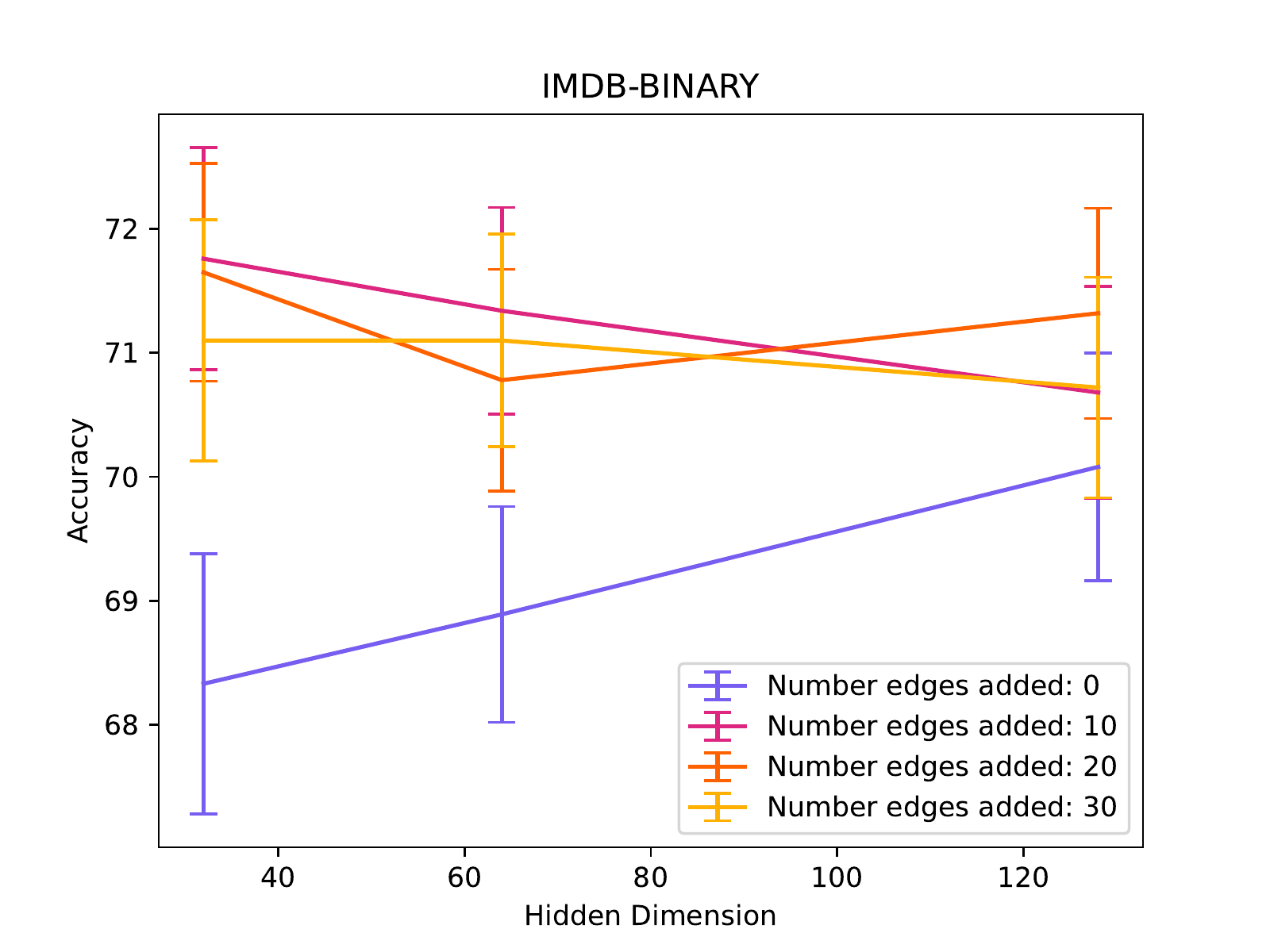}
    \end{subfigure}
    \begin{subfigure}{0.33\textwidth}
        \centering
        \includegraphics[width=\linewidth]{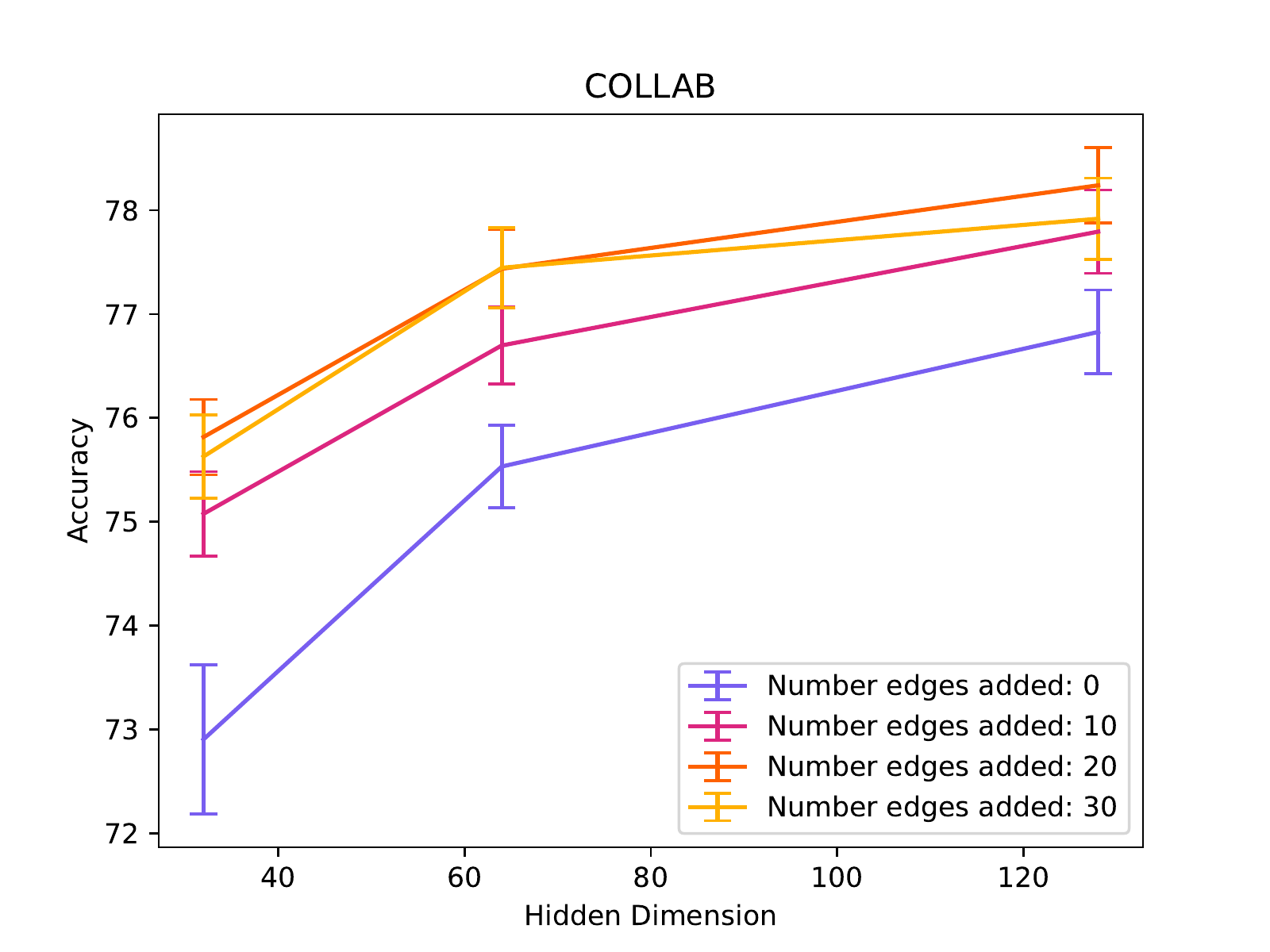}
    \end{subfigure}
    \caption{Plots of graph classification accuracy vs.~hidden dimension for a variable number of edges added by GTR.}
    \label{fig:hidden_dim_ablation}
\end{figure}

\end{document}